\def\gg{\mathbf g }
\def\xx{\mathbf x }
\def\mm{\mathbf m }
\def\uu{\mathbf u }
\def\dd{\mathbf d}
\def\zz{\mathbf{z}}
\def\txt{\text{test}}
\def\thth{\boldsymbol{\theta} }
\def\ThTh{\Theta}
\def \XX {X}
\def \QQ {Q}
\def \EE {\mathcal{E}}
\def \XX {\mathcal{X}}
\def \GG {\mathcal{G}}
\def\PP{\mathcal P}
\def \LL {\mathcal{L}}
\def \RR {\mathbb{R}}
\def \KK {\mathcal{K}}
\def \NN {\mathcal{N}}
\def \VV {\mathcal{V}}
\def \CB {\mathcal{B}}
\def \CI {\mathcal{I}}
\def \CS {\mathcal{S}}
\newcommand{\argmax}{\operatornamewithlimits{arg\,max}}
\newcommand{\argmin}{\operatornamewithlimits{arg\,min}}
\newcommand{\yc}[1]{{{\color{red}  #1}}}
\newcommand{\rev}[1]{{{\color{black}  #1}}}
\newtheorem{assumption}[theorem]{Assumption}
\begin{document}

\title{Pairwise Comparisons without Stochastic Transitivity: Model, Theory and Applications}

\author{\name Sze Ming Lee \email s.lee51@lse.ac.uk 
       \AND
       \name Yunxiao Chen \email y.chen186@lse.ac.uk \\
       \addr Department of Statistics\\
       London School of Economics and Political Science\\
        London WC2A 2AE, United Kingdom
      }

\editor{Mladen Kolar}

\maketitle

\begin{abstract}%
Most statistical models for pairwise comparisons, including the Bradley-Terry (BT) and Thurstone models and many extensions, make a relatively strong assumption of stochastic transitivity. This assumption imposes the existence of an unobserved global ranking among all the players/teams/items and monotone constraints on the comparison probabilities implied by the global ranking. 
However, the stochastic transitivity assumption does not hold in many real-world scenarios of pairwise comparisons, especially games involving multiple skills or strategies. As a result, models relying on this assumption can have suboptimal predictive performance. In this paper, we propose a general family of statistical models for pairwise comparison data without a stochastic transitivity assumption, substantially extending the BT and Thurstone models. 
In this model, the pairwise probabilities are determined by a (approximately) low-dimensional skew-symmetric matrix. Likelihood-based estimation methods and computational algorithms are developed, which allow for sparse data with only a small proportion of observed pairs. Theoretical analysis shows that the proposed estimator achieves minimax-rate optimality, which adapts effectively to the sparsity level of the data. The spectral theory for skew-symmetric matrices plays a crucial role in the implementation and theoretical analysis. The proposed method’s superiority against the BT model, along with its broad applicability across diverse scenarios, is further supported by simulations and real data analysis.
\end{abstract} 

\medskip
\begin{keywords}
Pairwise comparison, stochastic intransitivity, Bradley-Terry model, low-rank model, nuclear norm
\end{keywords}

\section{Introduction}

Pairwise comparison data have received intensive attention in statistics and machine learning, with diverse applications across domains. Such data often arise from tournaments, where each pairwise comparison outcome results from a match between two players or teams, or from crowdsourcing settings, where individuals are tasked with comparing two items, such as images, movies, or products. Specifically, the famous Thurstone \citep{thurstone1927method} and Bradley-Terry \citep[BT;][]{bradley1952rank} models have set a cornerstone in the field, followed by many extensions, including the parametric ordinal models proposed in \cite{shah_etal-2016-JoMR}, which broadens the class of parametric models. \cite{oliveira_etal-2018-JoMR} relax the assumption of a known link function and propose models that allow the link function to belong to a broad family of functions. Nonparametric approaches have also emerged, such as the work introduced in \cite{shah_wainwright-2018-JoMR} based on the Borda counting algorithm, and the nonparametric Bradley-Terry models studied in \cite{chatterjee-2015-AoS} and \cite{chatterjee_mukherjee-2019-IEEE}. Additionally, pairwise comparison models have been developed for crowdsourced settings, as discussed in \cite{chen_Bennett_etal-2013-prooceedings} and \cite{chen_etal-2016-JoMR}, among many others.  The models for pairwise comparisons have received a wide range of applications, including ranking problems \citep{chen2015spectral,chen2019spectral,heckel2019active,chen_etal-2022-MOR,liu2023lagrangian,fan2024uncertainty}, predicting matches/tournaments \citep{cattelan2013dynamic,tsokos2019modeling,macri2024alternative}, testing the efficiency of betting markets \citep{mchale2011bradley,lyocsa2018bet,ramirez2023betting}, and refinement of large language models based on human evaluations \citep{christiano2017deep,ouyang2022training,zhu2023principled}.  

While the models mentioned above have made significant contributions to the field, they rely on the assumption of stochastic transitivity, which implies a strict ranking among players/teams/items. However, this assumption may be unrealistic, particularly in settings involving multiple skills or strategies, where intransitivity naturally arises. 
Despite its practical importance, research on models that allow intransitivity remains limited. Some notable exceptions include the work of \cite{chen_joachims-2016-proceedings} and \cite{spearing_etal-2023-JCGS}, which extend the Bradley-Terry model by introducing additional parameters to describe intransitivity alongside parameters specifying absolute strengths based on Bradley-Terry probabilities.  \cite{spearing_etal-2023-JCGS} propose a Markov chain Monte Carlo algorithm for parameter estimation under a full Bayesian framework. However, their Bayesian procedure is computationally intensive and impractical for high-dimensional settings involving many players or a relatively high latent dimension. \cite{chen_joachims-2016-proceedings} treat the parameters as fixed quantities and estimate them by optimizing a regularized objective function. However, their objective function is non-convex, and their model is highly over-parameterized. Consequently, their optimization is still computationally intensive and does not have a convergence guarantee. 
Moreover, no theoretical results are established in either work for their estimator. 

Motivated by these challenges, we propose a general framework for modeling intransitive pairwise comparisons, assuming an approximately low-rank structure for the pairwise comparison probability matrix. We propose an estimator for the probabilities, which can be efficiently solved by a convex optimization program. This estimator
is shown to be optimal in the minimax sense, accommodating sparse data—a common issue when the number of players diverges. To our knowledge, this is the first framework to address intransitive comparisons with rigorous error analysis. The models presented in \cite{chen_joachims-2016-proceedings} and  \cite{spearing_etal-2023-JCGS}, which assume a low-rank structure, can be seen as a special case of our framework. Furthermore, our method and computational algorithms scale efficiently to high-dimensional settings, making them suitable for applications with many players/teams/items. Empirical results on real-world datasets, including the e-sport \textit{StarCraft II} and professional tennis,  demonstrate the practical usefulness of our method, showing superior performance in intransitive settings and robust performance when transitivity largely holds. 

Pairwise comparison data has been extensively studied in the statistics and machine learning literature, with numerous models and methods developed. We refer readers to \cite{cattelan2012models} for a practical overview of the field. Theoretical properties of the BT model were first established in \cite{simons1999asymptotics}. These results were later extended to likelihood-based and spectral estimators, as well as other parametric extensions, with various losses and sparsity levels \citep{yan2012sparse,shah_etal-2016-JoMR,Sahand_etal-2017-OR,chen2019spectral,han2020asymptotic,chen2022partial}. More recently, \cite{Han_etal-2023-JASA} propose a general framework covering most parametric models satisfying strong stochastic transitivity, establishing uniform consistency results under sparse and heterogeneous settings.

Our development is also closely related to the literature on generalized low-rank and approximate low-rank models \citep{cai2013max,davenport_etal-2014,cai2016matrix,Chen_Li_Zhang-2020-JASA,chen_Li-2022-Biometrika,chen_Li-2024-JoMR,lee2026latent}. While our asymptotic results and error bounds build on techniques from these works, the parameter matrix in the current work differs in that it has a skew-symmetric structure. 
This structure, which arises naturally from pairwise comparison data, yields dependent entries and distinguishes our setting from typical low-rank models. To address this, a tailored analysis is performed to establish rigorous theoretical results.

The rest of the paper is organized as follows. Section \ref{sect: Generalised approximate low-rank model for pairwise comparison data} describes the setting, introduces the general approximate low-rank model, and proposes our estimator. Section \ref{sect: Theoretical results} establishes the theoretical properties of the proposed estimator, including results on convergence and optimality. In Section \ref{sect: computation}, we provide an algorithm for solving the optimization problem of the proposed estimator. Section \ref{sect: simulation}  verifies the theoretical findings and compares the proposed model with the BT model using simulations. Section \ref{sect: real data} applies the proposed method to two real datasets to explore the presence of intransitivity in sports and e-sports. Finally, we conclude with discussions in Section \ref{sect: Discussions}. Additional theoretical results are presented in Appendix~\ref{app: additional theoretical results}, detailed proofs of the main results are given in 
Appendix~\ref{app: Proof of Theoretical Results}, and additional simulation results are reported in Appendix~\ref{app: Additional Simulation Results}.

\section{Generalized Approximate Low-rank Model for Pairwise Comparisons}\label{sect: Generalised approximate low-rank model for pairwise comparison data}
\subsection{Setting and Proposed Model}
We consider a scenario with $n$ subjects, such as players in a sports tournament. Let $n_{ij}$ denote the total number of comparisons observed between subjects $i$ and $j$, where $(n_{ij})_{n \times n}$ is a symmetric matrix. Let $y_{ij}$ denote the observed counts where subject $i$ beats subject $j$. Assuming no draws, we have $y_{ij} = n_{ji} - y_{ji}$ for $i, j \in \{1, \dots, n\}$. 

Given the total comparisons $n_{ij}$, we model the observed counts $y_{ij}$ using a Binomial distribution: $y_{ij} \sim \text{Binomial}(n_{ij}, \pi_{ij}),$ where $\pi_{ij}$ denotes the probability that subject $i$ beats subject $j$. A fundamental property of the probabilities is that $\pi_{ij} = 1 - \pi_{ji}$ for all $i,j \in \{1, \dots, n\}$. This implies that the matrix $\Pi = (\pi_{ij})_{n\times n }$ is fully determined by its upper triangular part. Using the logistic link function $g(x) = (1 + \exp(-x))^{-1}$, we express the probabilities as $\pi_{ij} = g(m_{ij})$, where $M = (m_{ij})$ is a skew-symmetric matrix satisfying $M = -M^{\top}$. As a result, estimating the probabilities $\Pi$ reduces to the problem of estimating $M$. 

We say the model is stochastic transitive if there exists an unobserved global ranking among all the players, denoted by $i_1 \succ i_2 \succ \cdots \succ i_n$, such that the pairwise comparison probabilities for the adjacent pairs satisfy $\pi_{i_1 i_2}$, $\pi_{i_2 i_3}$, ..., $\pi_{i_{n-1} i_n} \geq 0.5$. In addition, 
$\pi_{ik} \geq \pi_{ij}$ whenever $j \succ k$, for all $i \neq j,k$. In other words, for two players, $j$ and $k$, any player is more likely to win $k$ than $j$ if player $j$ ranks higher than $k$. If stochastic transitivity does not hold, then we say a model is stochastic intransitive. For instance, stochastic intransitivity arises when there exists a triplet $(i,j,k)$, such that 
$\pi_{ik} \geq \pi_{ij}$ and $\pi_{jk} < 0.5$.

Most traditional models for pairwise comparison assume stochastic transitivity. For example, the BT model assumes $m_{ij} = u_i -u_j$, in which case, the global ranking of the players is implied by the ordering of $u_i$, $i =1, ..., n$. However, stochastic intransitivity naturally occurs in real-world competition data involving multiple strategies or skills. For example, in the professional competitions of the e-sport \textit{StarCraft II}, players can choose from a variety of combat units with differing attributes (e.g., building cost, attack range, toughness) during the game, leading to strategic decisions that can result in intransitivity. In fact, for the best predictive model that we learned for the \textit{StarCraft II} data, more than 70\% of the $(i,j,k)$ 
triplets are estimated to violate the stochastic transitivity assumption, i.e.,  $\pi_{ik} \geq \pi_{ij}$ and $\pi_{jk} < 0.5$; see Section~\ref{sect: real data} for the details.

From the modeling perspective, stochastic transitivity is achieved by imposing strong monotonicity constraints on the parameter matrix $M$. To allow for stochastic intransitivity, we need to relax such constraints. 
Given $Y = (y_{ij})_{n \times n}$, the log-likelihood is 
\begin{align*} 
    \LL(M)&=  \sum_{i=1}^{n} \sum_{j=1}^{n} y_{ij}\log(  g( m_{ij} ))\\
            & = \sum_{i=1}^{n} \sum_{j > i} \left( y_{ij}\log(  g( m_{ij} )) + (n_{ij} - y_{ij})\log( 1- g(m_{ij})) \right).
\end{align*}
To prevent overfitting while accommodating stochastic intransitivity, we impose a constraint on $M$ to reduce the size of the parameter space. Specifically, we assume that $M$ has an approximately low-rank structure enforced through a nuclear norm constraint:
\begin{align}\label{eq: model}
  \|M\|_* \leq C_n n,
\end{align}
where $\|\cdot\|_*$ denotes the nuclear norm, and $C_n > 0$ is a constant that may vary with $n$.
The estimator is defined as:
\begin{align}\label{eq: nuclear norm estimator}
    \hat{M} = \argmax_{M} \LL(M) \text{ subject to } \|M\|_{*} \leq C_n n, M = - M^{\top}.
\end{align}
It is easy to see that the optimization in \eqref{eq: nuclear norm estimator} is convex; see Section \ref{sect: computation} for its computation. 

\subsection{Comparison with Related Work}
We compare the proposed model with existing parametric models in the literature. \cite{Han_etal-2023-JASA} introduce a general framework for analyzing pairwise comparison data under the assumption of stochastic transitivity. In the current context, their model aligns with those proposed by \cite{shah2016stochastically} and \cite{heckel2019active}, which are expressed as  
\[
\pi_{ij} = \Phi(u_i - u_j), \mbox{~and~} \pi_{ji} = 1 - \Phi(u_i - u_j).
\]  
Here, \( \Phi(\cdot) \) is any valid symmetric cumulative distribution function specified by the user, and \( \uu = (u_1, \dots, u_n)^{\top} \) is a latent score vector representing the strengths of the teams. This framework reduces to the Bradley-Terry (BT) model when \( \Phi(\cdot) = g(\cdot) \), the logistic function, and to the Thurstone model when \( \Phi(\cdot) \) is the cumulative distribution function of the standard normal distribution. Other models can be incorporated by specifying different forms of \( \Phi(\cdot) \).  The latent score \( \uu \) is treated as a fixed parameter to be estimated, enabling the framework to handle a large number of players effectively.
This parametric form, however, enforces a rank-2 structure on the parameter matrix, given by  
\[
\Pi = \Phi(\uu \mathbf{1_n}^{\top} - \mathbf{1_n} \uu^{\top}),
\]  
where \( \mathbf{1_n} \) is an \( n \)-dimensional vector of ones.   

Several attempts have been made in the literature to generalize this parametric form, allowing the rank of the underlying parameter matrix to exceed two and accommodate stochastic intransitivity.  We should note that, since $M$ is a skew-symmetric matrix, its rank must be even \citep[e.g.,][]{horn2013matrix}. 
For instance, \cite{chen_joachims-2016-proceedings} proposed a blade-chest-inner model, which can expressed as
\begin{align*}
    \Pi = g( AB^{\top} - B A^{\top}  ),
\end{align*}
where $A$ and $B$ are $n \times K$ matrices. This model allows for a general rank-$2k$  parameter matrix, with the parameters in the frequentist sense. 
%Here, only even ranks are considered as skew-symmetric matrices must have an even rank \citep{horn2013matrix}. 
Similar to the parametrization in \cite{chen_joachims-2016-proceedings}, \cite{spearing_etal-2023-JCGS} propose a Bayesian model for pairwise comparison under stochastic intransitivity and further develop a Markov chain Monte Carlo algorithm for its computation. 
Both methods lack theoretical guarantees, such as convergence results or error bounds.

Our proposed method relaxes the requirement for an exact low-rank representation by only requiring an approximate low-rank structure specified by the nuclear norm. This offers a broad parameter space that covers the models proposed in \cite{chen_joachims-2016-proceedings} and \cite{spearing_etal-2023-JCGS}, thereby providing improved robustness against model misspecification. This flexibility is particularly important for real-world applications, where comparison probabilities may be influenced by numerous weak factors alongside a few dominant ones, so that the parameter matrix need not exhibit an exact low-rank structure. In particular, if \( \text{rank}(M) = 2k \) for some positive integer \( k \), it follows that  
\[
\|M\|_{*} \leq \sqrt{2k}\|M\|_{F} \leq C_n n,
\]  
where \( \|\cdot\|_{F} \) denotes the Frobenius norm, and \( C_n \) is a constant depending on the magnitude of the entries of \( M \) and its rank \( 2k \). The subscript $n$ in $C_n$ indicates that both the magnitude of the entries of $M$ and its rank are allowed to grow with $n$. Moreover, the proposed model imposes no distributional assumptions on the parameter matrix \( M \), \rev{and the estimator can be obtained by solving a convex optimization program,} making it more scalable \rev{and computationally tractable} for handling a large number of players. \rev{Furthermore, as illustrated in Section \ref{sect: real data}, practical implementation requires selecting tuning parameters for the nuclear norm constraint. Unlike the exact low-rank formulation, which requires selecting the integer-valued exact rank, the tuning parameter can be selected over a flexible grid, allowing improved accuracy through finer tuning when computational resources permit.} Theoretical results, including convergence and error bounds, are presented in Section \ref{sect: Theoretical results}, \rev{with additional results provided in Appendix~\ref{app: additional theoretical results}.} As a remark, our estimation method and theoretical framework can be easily adapted when we replace the current assumption of the logistic form of the link function \( g(\cdot) \) with other functions, such as the standard normal cumulative distribution function used in the Thurstone model. 

\section{Theoretical Results}\label{sect: Theoretical results}
 We establish convergence results and lower bounds for the estimator defined in \eqref{eq: nuclear norm estimator} under settings with different data sparsity levels. For positive sequences $\{a_n\}$ and $\{b_n\}$, we denote $a_n \lesssim b_n$ if there exists a constant $\delta>0$ that $a_n \leq \delta b_n$ for all $n$. \rev{We write $a_n \asymp b_n$ if $a_n \lesssim b_n$ and $b_n \lesssim a_n$.} Let $\KK$ denote the parameter space, defined as 
 \begin{align}\label{eq: defintion of KK}
     \KK = \{M\in \RR^{n \times n}:\|M\|_* \leq C_n n, \quad M = -M^{\top}  \}.
 \end{align}
 We impose the following conditions:

\begin{assumption}\label{assp: 2}
 The true parameter $M^{*} \in \KK$.
\end{assumption}

\begin{assumption}\label{assp: 3}
  For $j = 1,\dots, n$ and  $i >j$, the variables $n_{ij} $ are independent and follow a Binomial distribution, $n_{ij} \sim \text{Binomial} (T, p_{ij,n})$, where $T$ is a fixed integer representing the maximum possible number of comparisons between subjects, and $p_{ij,n} = p_{ji,n}$ is the \rev{comparison rate}, which may vary across different pairs $(i, j)$. Let \( 0 \leq p_n \leq q_n \leq 1 \) denote the minimum and maximum comparison rates, respectively, such that \( p_{ij,n} \in [p_n, q_n] \) for all \( i \neq j \in \{1, \dots, n\} \).  We assume that $p_n \asymp q_n$ and $p_n \gtrsim \log(n)/n$. 
\end{assumption}
%\ar{Could mention that we can potentailly apply the argument in \cite{Han_etal-2023-JASA} for heterogeneous rates. But it relies on the topological structure and we do not pursue it here.}
Assumption \ref{assp: 2} ensures that the true parameter exhibits an approximately low-rank structure specified by our model. Assumption \ref{assp: 3} deserves more explanations. Under this assumption, the sparsity level of the data is characterized by the rate at which the \rev{comparison rates} $p_{ij,n}$ converge to $0$ as $n$ grows. The condition \( p_n \gtrsim \log(n)/n \) sets a lower bound on the sparsity level, which is the best possible threshold for pairwise comparison problems. Below this bound, the comparison graph becomes disconnected with high probability \citep{erd6s1960evolution,Han_etal-2023-JASA}. The condition \( p_n \asymp q_n \) imposes homogeneity on \( p_{ij,n} \), a common assumption in the literature \citep{simons1999asymptotics,chen2019spectral,han2020asymptotic}. \rev{This assumption means that all pairs of items are observed with roughly equal probability.} %Although the estimator is expected to be valid under heterogeneity, as we can have access to denser data, the theoretical analysis is not trivial due to the more complicated topological structure \citep{shah_etal-2016-JoMR,Han_etal-2023-JASA}. We do not pursue it here. 
The following theorem establishes the convergence rate of the proposed estimator.
\begin{theorem}\label{thm: nuclear norm convergence} 
     Under Assumptions \ref{assp: 2} and \ref{assp: 3}, let $\hat{\Pi} = (\hat{\pi}_{ij})_{n\times n}$, where 
    $\hat{\pi}_{ij} = g(\hat{m}_{ij}).$ Further let $\Pi^* = g(M^*)$. Then, with probability at least $1 - \kappa_1/n$, 
    \begin{align*}
        \frac{1}{n^2-n}\|\hat{\Pi} - \Pi^{*}\|^2_{F} \leq  \kappa_2 C_n \sqrt{\frac{1}{p_n n}},
    \end{align*}
    where $\kappa_1$ and $\kappa_2$ are constants that do not depend on n. 
\end{theorem}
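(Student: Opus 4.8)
The plan is to run an oracle-type argument based on the first-order optimality of $\hat M$, exploiting the special algebra of the logistic log-likelihood together with concentration for random skew-symmetric matrices. Since $M^*\in\KK$ by Assumption~\ref{assp: 2} and $\hat M$ maximizes $\LL$ over $\KK$, we have the basic inequality $\LL(\hat M)\ge\LL(M^*)$. Using $\log g(m)-\log(1-g(m))=m$ for the logistic link, I would first show that for every $M$,
\[
\LL(M)-\LL(M^*)\;=\;\tfrac12\,\langle \Xi,\,M-M^*\rangle\;-\;\sum_{i<j} n_{ij}\,\mathrm{KL}\!\bigl(\pi^*_{ij}\,\big\|\,g(m_{ij})\bigr),
\]
where $\Xi=(\xi_{ij})$ with $\xi_{ij}=y_{ij}-n_{ij}\pi^*_{ij}$. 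Because $y_{ij}=n_{ij}-y_{ji}$ and $\pi^*_{ij}=1-\pi^*_{ji}$, the matrix $\Xi$ is itself skew-symmetric; conditionally on the $n_{ij}$ it has mean zero, entrywise bound $|\xi_{ij}|\le T$, and variance at most $Tp_{ij,n}/4$. Evaluating the identity at $M=\hat M$ and using $\LL(\hat M)-\LL(M^*)\ge0$ gives $\sum_{i<j} n_{ij}\,\mathrm{KL}(\pi^*_{ij}\|\hat\pi_{ij})\le\tfrac12\langle\Xi,\hat M-M^*\rangle$.

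Next I would lower-bound the left side by Pinsker's inequality, $\mathrm{KL}(\pi^*_{ij}\|\hat\pi_{ij})\ge 2(\hat\pi_{ij}-\pi^*_{ij})^2$, which needs no boundedness of the entries, and upper-bound the right side by trace duality between the nuclear and spectral norms, $\langle\Xi,\hat M-M^*\rangle\le\|\Xi\|_{\mathrm{op}}\|\hat M-M^*\|_*\le 2C_n n\|\Xi\|_{\mathrm{op}}$, using $\|\hat M\|_*\le C_n n$ and $\|M^*\|_*\le C_n n$. It then remains to bound $\|\Xi\|_{\mathrm{op}}$: as $\Xi$ is a random skew-symmetric matrix whose above-diagonal entries are independent, with each row's total variance at most of order $nTp_n$ (by Assumption~\ref{assp: 3}) and entries bounded by the fixed $T$, a matrix concentration inequality (the skew-symmetric analogue of the Wigner-type bound) gives $\|\Xi\|_{\mathrm{op}}\lesssim\sqrt{nTp_n}$ with probability at least $1-\kappa_1/n$; the hypothesis $p_n\gtrsim\log(n)/n$ is exactly what makes the bounded-entry term lower order. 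On this event, $\sum_{i<j} n_{ij}(\hat\pi_{ij}-\pi^*_{ij})^2\lesssim C_n n\sqrt{nTp_n}$.

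The remaining, and main, obstacle is that the quantity just bounded is an \emph{empirical}, $n_{ij}$-weighted version of $\|\hat\Pi-\Pi^*\|_F^2$, and since pairs with $n_{ij}=0$ never enter the likelihood, those entries of $\hat M$ are controlled only through the nuclear-norm constraint. I would therefore prove a restricted lower-isometry statement: with probability at least $1-\kappa_1/n$, simultaneously for all $M\in\KK$,
\[
\sum_{i<j} n_{ij}\bigl(g(m_{ij})-g(m^*_{ij})\bigr)^2\;\gtrsim\;Tp_n\sum_{i<j}\bigl(g(m_{ij})-g(m^*_{ij})\bigr)^2\;-\;R_n,
\]
with $R_n$ negligible compared with $C_n n\sqrt{nTp_n}$ in the regime where the bound is informative (i.e.\ $C_n^2\lesssim np_n$). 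Its proof would replace $n_{ij}$ by $Tp_{ij,n}\ge Tp_n$ and control the fluctuation $\sum_{i<j}(n_{ij}-Tp_{ij,n})(g(m_{ij})-g(m^*_{ij}))^2$ uniformly over the nuclear ball by symmetrization, a Ledoux--Talagrand contraction step using that $t\mapsto(g(m^*_{ij}+t)-g(m^*_{ij}))^2$ is Lipschitz and vanishes at $t=0$, the spectral-norm bound of the previous step applied to a symmetrized skew-symmetric matrix, and a peeling argument to pass from a fixed Frobenius radius to the random $\hat M$. The skew-symmetric constraint couples the $(i,j)$ and $(j,i)$ entries, which is precisely why the standard matrix-completion arguments for independent entries must be adapted; the relevant operator norms nevertheless concentrate because the above-diagonal entries are independent. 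Combining with the previous paragraph gives $Tp_n\|\hat\Pi-\Pi^*\|_F^2\lesssim C_n n\sqrt{nTp_n}$ (recall $\|\hat\Pi-\Pi^*\|_F^2=2\sum_{i<j}(\hat\pi_{ij}-\pi^*_{ij})^2$), and dividing by $n^2-n$ while absorbing the fixed integer $T$ into $\kappa_2$ yields $\tfrac{1}{n^2-n}\|\hat\Pi-\Pi^*\|_F^2\lesssim C_n\sqrt{1/(p_n n)}$, as claimed.
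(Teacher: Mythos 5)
Your proposal is correct in outline, but it follows a genuinely different route from the paper's. You exploit the canonical (logistic) link to decompose $\LL(M)-\LL(M^*)$ exactly into a linear noise term $\tfrac12\langle\Xi,M-M^*\rangle$ plus an \emph{empirical} KL term weighted by $n_{ij}$, apply the basic inequality and trace duality directly at $\hat M$ (so no uniform bound is needed for the noise term, since $\Xi$ does not depend on $M$), use Pinsker to pass from KL to squared error, and then need one extra ingredient: a uniform comparison over $\KK$ between the $n_{ij}$-weighted and the $Tp_{ij,n}$-weighted squared errors, proved by symmetrization plus Ledoux--Talagrand contraction and the same operator-norm bound. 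The paper instead centers the whole log-likelihood: its expectation is already the population-weighted KL ($-T\sum_{i<j}p_{ij,n}D(\pi^*_{ij}\Vert\pi_{ij})\le -0.5Tp_nD(\Pi^*\Vert\Pi)$), so a single uniform bound on $\sup_{M\in\GG}|\bar\LL(M)-E\bar\LL(M)|$ (Lemma~\ref{lm: lemma for nuclear norm thm}, via symmetrization, contraction, duality and a moment bound on $\|\EE\circ\NN\|_{op}$) suffices, followed by KL--Hellinger--Frobenius comparisons and a $\gamma$-truncation argument with $\gamma\to\infty$. Your route avoids the truncation device entirely (Pinsker needs no boundedness) and isolates the stochastic error in a clean linear functional, at the price of the empirical-to-population weight comparison, which is of the same technical difficulty as the paper's Lemma~\ref{lm: lemma for nuclear norm thm}. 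Two points need care when writing it up: (i) your remainder $R_n$ is not ``negligible'' --- by the symmetrization--contraction--duality bound it is of the \emph{same} order $C_nn\sqrt{nTp_n}$ as the main term --- but this is harmless since the rate is unchanged; when you keep the weights $n_{ij}$ through the contraction step you must reparametrize (as the paper does, treating $n_{ij}^{-1}$ times the summand as the contraction) so the comparison term is $\sum_{i<j}\epsilon_{ij}n_{ij}(m_{ij}-m^*_{ij})$ rather than the unweighted sum, otherwise you lose the $\sqrt{q_n}$ factor; and (ii) your claim $\|\Xi\|_{op}\lesssim\sqrt{nTp_n}$ at level $1-\kappa_1/n$ must be justified by a log-free bound (e.g.\ a Bandeira--van Handel type inequality plus concentration, or a careful moment computation as in the paper's Lemma~\ref{lm: lemma for nuclear norm thm}); a plain matrix Bernstein bound would introduce a $\sqrt{\log n}$ factor and weaken the rate below the theorem's statement.
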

 \rev{As shown in Theorem \ref{thm: nuclear norm convergence}, the convergence rate of the mean squared Frobenius error between the estimated and true pairwise comparison probabilities depends on the sample size, sampling density, and model complexity. Specifically, the error decreases as the number of subjects $n$ increases and as the sampling becomes denser (i.e., as $p_n$ increases). In contrast, the upper bound increases when the underlying parameter matrix has a higher rank or a more complex structure (as reflected by a larger value of $C_n$). } 
 
 \rev{The following theorem addresses the optimality of Theorem \ref{thm: nuclear norm convergence} by providing a lower bound on the achievable estimation error, confirming that the rate in Theorem 1 cannot be improved in general.}  
\begin{theorem}\label{thm: lower bound}
    Suppose $ 12 \leq C_n^2 \leq \rev{\min\{1, \kappa_3^2\}n}$, where $\kappa_3$ is an absolute constant specified in \rev{Theorem \ref{thm: lower bound nij given} in Appendix \ref{app: Proof of thm: lower bound}}. Consider any algorithm which, for any $M\in \KK$, takes as input $Y$ and returns $\hat{M}$. Then there exists $M \in \KK$ such that with probability at least 3/8, $\Pi = g(M)$ and $\hat{\Pi} = g(\hat{M})$, satisfy
    \begin{align}\label{eq: Pi lower bound}
        \frac{1}{n^2-n}\|\Pi - \hat{\Pi}\|_{F}^2 \geq \min \left\{\kappa_{4}, \kappa_5 C_n\sqrt{\frac{1}{np_n} } \right\}
    \end{align}
    for all $n > N$. Here $\kappa_4, \kappa_5 >0$ and $N$ are absolute constants.
\end{theorem}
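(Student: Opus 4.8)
The plan is to prove the lower bound by the standard reduction from estimation to multiple hypothesis testing combined with Fano's inequality. First I would build a finite family $\{M^{(1)},\dots,M^{(L)}\}\subset\KK$ whose probability matrices $\Pi^{(\ell)}=g(M^{(\ell)})$ are pairwise separated, $\tfrac1{n^2-n}\|\Pi^{(a)}-\Pi^{(b)}\|_F^2\ge\rho_n^2$ for $a\neq b$, while the data distributions are statistically close, $\max_{a\neq b}\text{KL}(\mathbb P_{M^{(a)}}\,\|\,\mathbb P_{M^{(b)}})\le\tfrac12\log L-\log 2$. Fano with a uniform prior over the family then forces any estimator to satisfy $\frac1L\sum_\ell\mathbb P_{M^{(\ell)}}(\tfrac1{n^2-n}\|\Pi^{(\ell)}-\hat\Pi\|_F^2\ge\rho_n^2/4)\ge 1/2$, so some $M^{(\ell)}\in\KK$ attains $\tfrac1{n^2-n}\|\Pi^{(\ell)}-\hat\Pi\|_F^2\ge\rho_n^2/4$ with probability at least $1/2\ge 3/8$. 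The whole problem is therefore to engineer a packing for which $\rho_n^2\asymp\min\{1,\,C_n\sqrt{1/(np_n)}\}$.

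To respect the skew-symmetric constraint I would take $M$ to be a block matrix with zero diagonal blocks and an $(n/2)\times(n/2)$ block $B$ in its upper-right corner, so that $M=-M^\top$ automatically, $\text{rank}(M)=2\,\text{rank}(B)$, $\|M\|_*=2\|B\|_*$, and the strict upper triangle of $M$ --- which is all that enters the likelihood --- is exactly the entries of $B$. For $B$ I would use an approximate-low-rank sign matrix: fix a rank $r$ and a scale $\beta>0$, let $B$ have only $r$ distinct rows, each repeated $(n/2)/r$ times, with each distinct row equal to $\beta\xi$ for a sign pattern $\xi\in\{\pm1\}^{n/2}$. Then $\text{rank}(B)\le r$ and $\|B\|_F=(n/2)\beta$, so $\|M\|_*\le 2\sqrt r\,\|B\|_F=\beta n\sqrt r$, and $M\in\KK$ provided $\beta\le C_n/\sqrt r$. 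The free parameters are the $r(n/2)$ sign bits defining the distinct rows; by the Varshamov--Gilbert bound there is a subfamily of size $L\ge 2^{r(n/2)/8}$ whose sign configurations are pairwise at Hamming distance $\ge r(n/2)/4$. Because flipping one bit flips an entire repeated block of columns of $B$, any two matrices in this subfamily differ on at least a constant fraction of the $(n/2)^2$ entries of $B$; since $g'$ is bounded away from $0$ on any bounded interval, $|g(\beta)-g(-\beta)|=\tanh(\beta/2)\asymp\beta$ when $\beta$ is bounded, and hence $\tfrac1{n^2-n}\|\Pi^{(a)}-\Pi^{(b)}\|_F^2\gtrsim\beta^2$ for $a\neq b$.

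Next I would bound the KL divergence and then pick $(r,\beta)$. Because the comparison counts $(n_{ij})$ have a law not depending on $M$ and, conditionally on them, the $y_{ij}$ are independent binomials, $\text{KL}(\mathbb P_{M^{(a)}}\|\mathbb P_{M^{(b)}})=\sum_{i<j}\mathbb E[n_{ij}]\,\text{KL}(\text{Bern}(g(m^{(a)}_{ij}))\,\|\,\text{Bern}(g(m^{(b)}_{ij})))$, where $\mathbb E[n_{ij}]=Tp_{ij,n}\lesssim Tp_n$ by Assumption \ref{assp: 3}. Since at most $(n/2)^2$ entries differ and $\text{KL}(\text{Bern}(g(\beta))\|\text{Bern}(g(-\beta)))=\beta\tanh(\beta/2)\le\beta^2$, we get $\text{KL}(\mathbb P_{M^{(a)}}\|\mathbb P_{M^{(b)}})\lesssim n^2 Tp_n\beta^2$, while $\log L\asymp nr$; thus the Fano condition holds as soon as $\beta^2\lesssim r/(nTp_n)$. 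For each $r$ I would take $\beta^2$ as large as the constraints $\beta^2\le\min\{C_n^2/r,\ c\,r/(nTp_n),\ 1\}$ allow and then optimize over $r$: the first two bounds balance at $r\asymp C_n\sqrt{nTp_n}$, where both equal $\asymp C_n/\sqrt{nTp_n}$, so after the final cap $\rho_n^2\asymp\beta^2\asymp\min\{1,\,C_n/\sqrt{nTp_n}\}\asymp\min\{1,\,C_n\sqrt{1/(np_n)}\}$ since $T$ is fixed. The hypothesis $C_n^2\ge 12$ together with $p_n\gtrsim\log n/n$ makes this $r$ exceed $1$ once $n>N$, and $C_n^2\le\kappa_3^2 n/T$ together with $p_n\le 1$ keeps $r\le n/2$ and keeps $\beta$ in the bounded range where the above estimates for $g$ are valid --- which is exactly what the condition on $C_n$ in the statement ensures. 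Feeding $\rho_n^2\asymp\beta^2$ into the Fano bound of the first paragraph yields \eqref{eq: Pi lower bound}.

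The step I expect to be the main obstacle is obtaining the $\sqrt{\cdot}$ rate rather than a weaker polynomial one. A naive packing by low-rank sign matrices supported on a handful of nonzero rows uses the nuclear-norm budget inefficiently and delivers only a lower bound of order $C_n/\sqrt n$; one genuinely needs the repeated-rows structure, which moves a constant fraction of all $\Theta(n^2)$ entries of $\Pi$ while keeping $\|M\|_*$ of order $\beta n\sqrt r$, and then the joint tuning of the rank $r$ and the amplitude $\beta$ against the sparsity level $p_n$. A secondary point particular to this setting is carrying out the KL bookkeeping with random and possibly very sparse comparison counts and with the entrywise dependence in $Y$ forced by skew-symmetry, both of which are handled cleanly by the off-diagonal block form and by conditioning on $(n_{ij})$.
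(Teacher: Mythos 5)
Your proposal is correct and follows the same overall strategy as the paper: reduce estimation to testing over a packing of approximately low-rank skew-symmetric sign-block matrices inside $\KK$, bound the pairwise KL divergences by a quadratic in the amplitude, apply Fano, and jointly tune the effective rank and the entry amplitude against the nuclear-norm budget and the sparsity level --- this is exactly how the $C_n\sqrt{1/(np_n)}$ rate arises in the paper as well. The differences are in execution rather than in the key idea. For the packing, you embed a sign matrix $B$ with repeated rows into the off-diagonal block of $M$ and invoke the Varshamov--Gilbert bound, whereas the paper builds matrices of the form $S-S^{\top}$ with a repeated random $\pm$ block (block height $k/\gamma^2$, playing the role of your $r$, with $k=6$ fixed and $\gamma$ tuned) and proves the separation property probabilistically via McDiarmid's inequality; both routes deliver $\log|\XX| \asymp rn$ with a constant fraction of entries separated at scale $\beta$, and your deterministic construction is if anything slightly cleaner since the skew-symmetry and the count of differing entries are immediate from the block form. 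For the random comparison counts, you fold the randomness of $n_{ij}$ directly into the KL via the chain rule, $\mathrm{KL}=\sum_{i<j}Tp_{ij,n}\,\mathrm{KL}_{\mathrm{Bern}}$, and obtain the bound in terms of $p_n$ with failure probability $1/2$, which is stronger than the stated $3/8$; the paper instead first proves a lower bound conditional on the counts, stated in terms of $\sum_{i,j}y_{ij}$ (its Theorem \ref{thm: lower bound nij given}, with probability $3/4$), and then converts $\sum_{i,j}y_{ij}$ to $n^2p_n$ by Hoeffding's inequality, which is precisely where the $3/8$ comes from. The only items to pin down in a full write-up are routine: integrality of $r$ and of $(n/2)/r$, the requirement $1\le r\lesssim n$ and $\beta$ bounded so that $g'$ is bounded below on the relevant range, all of which are guaranteed by the assumed bounds $12\le C_n^2\le \min\{1,\kappa_3^2/T\}n$ and $p_n\gtrsim \log(n)/n$, exactly as you indicate.
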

A few technical assumptions are imposed in this theorem. The condition \rev{\( C_n^2 \leq \min\{1, \kappa_3^2\}n \)} is mild and naturally holds for sufficiently large \( n \), provided that the rank of \( M \) does not grow at the same rate as \( n \). We also require $C_n^2 \geq 12$ to avoid the parameter space being too small for packing set construction\rev{, which is needed when we use Fano's inequality to establish the lower bound.}

Since the rates in Theorems \ref{thm: nuclear norm convergence} and \ref{thm: lower bound} match up to a multiplicative constant, the optimality of the proposed estimator is established. 

\begin{remark}
\rev{
Some assumptions can be relaxed in the above theoretical analysis. Specifically, a version of Theorem~\ref{thm: nuclear norm convergence} continues to hold with an updated convergence rate when the assumption $p_n \asymp q_n$ is dropped. Moreover, if $T$ is not fixed but diverges, a version of Theorem~\ref{thm: nuclear norm convergence} holds with a faster convergence rate; see Remark~\ref{rmk: app relax assmp} in Appendix \ref{app: Proof of Theorem thm: nuclear norm convergence} for details.}
\end{remark}

\begin{remark}\label{rmk: l2 and max norm convergence}
\rev{
In this section, we establish the convergence of the matrix of pairwise comparison probabilities $\Pi^*$ under the Frobenius norm. It is also possible to derive convergence results for the underlying parameter matrix $M^*$ under additional assumptions, specifically by requiring that $\|M^*\|_{\infty}$ is bounded by a positive constant, as demonstrated in the proof of Lemma \ref{lm: F consistent} in Appendix \ref{app: proves of thm: 2 to infty}. 

Furthermore, under an exact low-rank assumption on $M^*$, convergence of $M^*$ in the element-wise maximum norm can be established by making use of the refinement procedure proposed in \cite{chen_Li-2024-JoMR}. This implies the  entry-wise convergence for the comparison probabilities. Such results also yield control of the two-to-infinity norm of the low-rank component, which may be particularly relevant when studying player-specific skill parameters. 
  
A detailed discussion of the exact assumptions, along with the corresponding estimation and refinement procedures, is provided in Appendix \ref{app: Assumption and refinement}.}
\end{remark}

\begin{remark}
    \rev{The proposed model does not impose stochastic transitivity and therefore a global ranking is not well-defined by the true comparison probability matrix. Nevertheless, for any subset of subjects $\CS \subseteq [n]$, one may define a relative score measuring the average strength of subject $i$ within $\CS$, which reduces to the score in \cite{shah_wainwright-2018-JoMR} when $\CS = [n]$. 
Under a suitable separation condition, we establish consistent recovery of the top-$k$ set based on these scores. 
The precise formulation and theoretical results are presented in Appendix \ref{app: Recovery of Top-k Items}.}
\end{remark}
\section{Computation}\label{sect: computation}
To solve the nuclear-norm constrained optimization problem \eqref{eq: nuclear norm estimator}, we apply the nonmonotone spectral-projected gradient algorithm proposed by \cite{birgin2000nonmonotone}. \rev{Since the feasible set of $M$ is closed and convex, this algorithm} guarantees convergence to a \rev{constrained stationary point. Moreover, because $\LL(M)$ is concave, any such stationary point is a global maximizer over the feasible set.}

\rev{
This section outlines the computational procedure for the proposed model. 
Section~\ref{subsect: Reformulating the optimization problem} reformulates the optimization problem \eqref{eq: nuclear norm estimator} in vectorized form. 
Section~\ref{subsect: Projection onto the nuclear-norm ball} presents the projection algorithm used to enforce the nuclear-norm constraint. 
Section~\ref{subsect: Spectral projected gradient algorithm} describes the spectral projected gradient method for solving the constrained problem, and 
Section~\ref{subsect: Estimation procedure} presents the complete estimation procedure.
}
\subsection{\rev{Reformulating the optimization problem}}\label{subsect: Reformulating the optimization problem}
 \rev{We begin by reformulating the matrix optimization problem into a vectorized form that is more convenient for numerical computation.} Let $\text{Skew}_n$ denote the space of $n \times n$ skew-symmetric matrices. Let $\VV$ be the bijective linear mapping that vectorizes the upper-triangular part of any matrix in $\text{Skew}_n$ into $\RR^{0.5n(n-1)}$. For any \( \mm \in \mathbb{R}^{0.5n(n-1)} \), define \( f(\mm) = \LL(\VV^{-1}(\mm)) \). Then, solving \eqref{eq: nuclear norm estimator} is equivalent to solving the constrained optimization problem:
\begin{align}\label{eq: vec objective}
    \hat{\mm} = \argmax_{\mm \in \mathbb{R}^{0.5n(n-1)}} f(\mm) \quad \text{subject to } \|\VV^{-1}(\mm)\|_{*} \leq \tau,
\end{align}
where \( \tau = C_n n \) if \( C_n \) is known. We will later discuss an algorithm for selecting \( \tau \) in practical situations where \( C_n \) is unknown. 
\subsection{\rev{Projection onto the nuclear-norm ball}}\label{subsect: Projection onto the nuclear-norm ball}
\rev{A key step in solving \eqref{eq: vec objective} is to project each iterate onto the nuclear-norm ball in order to enforce the constraint. 
We define the orthogonal projection operator $P_{\tau}(\cdot)$ as}
\begin{align*}
    P_\tau(\mm) = \argmin_{\xx \in \mathbb{R}^{0.5n(n-1)} } \|\xx - \mm \|_2 \text{ subject to } \|\VV^{-1}(\xx)\|_{*} \leq \tau. 
\end{align*}
It is well known that the projection is equivalent to singular value soft-thresholding. \rev{Specifically, we determine a threshold and update each singular value by either subtracting this threshold from it or setting it to zero when the threshold exceeds the singular value, so that the resulting matrix satisfies the nuclear-norm constraint.} Let $0_{n \times n }$ denote a $n \times n $ zero matrix, and $\max\{\cdot, \cdot\}$ be applied entry-wise for matrix inputs. The detailed procedure is presented in Algorithm \ref{alg: projection}, \rev{where we first apply the inverse mapping $\VV^{-1}(\cdot)$ to recover the corresponding skew-symmetric matrix, perform singular value soft-thresholding to enforce the nuclear-norm constraint, and then apply $\VV(\cdot)$ to return the vectorized projected output.}

\begin{algorithm}
    \caption{Projection onto the nuclear-norm ball} \label{alg: projection}
    \textbf{Input:} Parameter vector $\mm$ and nuclear norm constraint parameter $\tau$.
    \begin{algorithmic}
        \State Compute $M = \VV^{-1}(\mm)$. 
        \State Perform singular value decomposition and obtain $M = U \Sigma V^{\top} $, where $U$ and $V$ are $n \times n$ orthonormal matrices and $\Sigma = \text{diag}(\sigma_1,\sigma_1, , \dots,\sigma_{n/2}, \sigma_{n/2})$ if $n$ is even and $\Sigma = \text{diag}(\sigma_1,\sigma_1, , \dots,\sigma_{\lfloor n/2 \rfloor}, \sigma_{\lfloor n/2 \rfloor},0)$ otherwise.
        \State Compute $\lambda$, the smallest value for which $\sum_{i=1}^{\lfloor n/2\rfloor} 2\max\{ \sigma_i - \lambda,0\} \leq \tau$. 
        \State Compute projected matrix $P_{\tau}(M)  = U \max\{\Sigma - \lambda I_{n},0_{n \times n}\} V^{\top}$  
    \end{algorithmic}
    \textbf{Output:} Projection outcome $P_{\tau}(\mm) = \VV( P_{\tau}(M))$.
\end{algorithm}

In the last step, the projection outcome is defined as $P_{\tau}(\mm) = \VV( P_{\tau}(M))$, which is only valid provided that $P_{\tau}(M)$ is a skew-symmetric matrix. The following proposition ensures that this is always the case: 
\begin{proposition}\label{prop: skew preserving}
For any matrix $M \in \text{Skew}_n$, the projection operator satisfies $P_{\tau}(M) \in \text{Skew}_n$. 
\end{proposition}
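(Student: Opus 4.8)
The plan is to use the special structure of the singular value decomposition of a real skew-symmetric matrix together with the fact that singular value soft-thresholding preserves the left/right singular subspace structure. First I would recall the canonical form: for $M \in \text{Skew}_n$ there is a real orthogonal matrix $Q$ and nonnegative scalars $\sigma_1 \geq \cdots \geq \sigma_{\lfloor n/2\rfloor} \geq 0$ such that $M = Q B Q^{\top}$, where $B$ is block-diagonal with $2\times 2$ blocks $\bigl(\begin{smallmatrix} 0 & \sigma_k \\ -\sigma_k & 0\end{smallmatrix}\bigr)$ (and a trailing zero row/column when $n$ is odd). Each such $2\times 2$ block has a clean SVD in which the paired singular values $\sigma_k,\sigma_k$ appear, consistent with the $\Sigma$ used in Algorithm~\ref{alg: projection}; from this I would assemble an SVD $M = U\Sigma V^{\top}$ in which the relationship between $U$ and $V$ is explicit — namely $V = U J$ for a fixed block-diagonal orthogonal $J$ built from $\bigl(\begin{smallmatrix} 0 & 1 \\ -1 & 0\end{smallmatrix}\bigr)$ blocks — so that $V\Sigma U^{\top} = -U\Sigma V^{\top} = -M$, confirming skew-symmetry at the level of the SVD.

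Next I would observe that soft-thresholding acts only on the diagonal $\Sigma$, replacing it by $\Sigma_\lambda := \max\{\Sigma - \lambda I_n, 0_{n\times n}\}$, while leaving $U$ and $V$ untouched; crucially, since $\Sigma$ has the paired structure $\text{diag}(\sigma_1,\sigma_1,\dots)$, so does $\Sigma_\lambda$, and therefore $\Sigma_\lambda$ commutes with the block-diagonal $J$ above (a scalar multiple of identity on each $2\times 2$ block commutes with any $2\times 2$ matrix). Hence
\[
P_\tau(M)^{\top} = \bigl(U\Sigma_\lambda V^{\top}\bigr)^{\top} = V\Sigma_\lambda U^{\top} = U J^{\top}\Sigma_\lambda U^{\top} = -U J \Sigma_\lambda U^{\top} = -U\Sigma_\lambda J U^{\top} = -U\Sigma_\lambda V^{\top} = -P_\tau(M),
\]
using $J^{\top} = -J$ and $J\Sigma_\lambda = \Sigma_\lambda J$. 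This shows $P_\tau(M) \in \text{Skew}_n$.

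The main obstacle I anticipate is handling the non-uniqueness of the SVD together with the odd-$n$ case cleanly: one must make sure that the particular SVD produced (or assumed) in Algorithm~\ref{alg: projection} genuinely has the paired-singular-value structure and the $V = UJ$ relation, rather than an arbitrary SVD where repeated singular values could be mixed by an orthogonal rotation within eigenspaces. I would address this by invoking the real skew-symmetric canonical (Youla/Schur) form explicitly as the source of the SVD, noting that soft-thresholding of $P_\tau$ is well-defined on the singular values regardless of which SVD is chosen (the projection operator is uniquely determined by $M$ and $\tau$), so it suffices to verify skew-symmetry for one convenient SVD — the canonical one — and for odd $n$ the extra zero singular value contributes a zero row/column that is trivially skew-compatible. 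An alternative, perhaps slicker, argument I would mention as a remark: $P_\tau(M)$ is the unique minimizer of a strictly convex problem over the closed convex set $\{\|\VV^{-1}(\xx)\|_* \leq \tau\}$ viewed inside $\text{Skew}_n$; since $M \in \text{Skew}_n$ and $\|\cdot\|_*$ and $\|\cdot\|_2$ are invariant under $X \mapsto -X^{\top}$, if $P_\tau(M)$ were the Euclidean projection onto the nuclear-norm ball in all of $\RR^{n\times n}$ it would be fixed by this symmetry and hence skew-symmetric; one then checks the unconstrained-in-$\RR^{n\times n}$ projection coincides with the constrained-to-$\text{Skew}_n$ one because the ball and the subspace are both symmetric. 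I would present the SVD argument as the primary proof since it matches Algorithm~\ref{alg: projection} directly.
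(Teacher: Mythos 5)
Your proposal is correct and follows essentially the same route as the paper: both rest on the real canonical (Murnaghan/Youla) block-diagonal form of a skew-symmetric matrix and the observation that singular value soft-thresholding acts blockwise on the paired singular values, hence preserves the skew-symmetric structure. Your explicit $V=UJ$ bookkeeping (up to a harmless sign/transpose slip in one line) and the remark on SVD non-uniqueness are just a more detailed rendering of the paper's statement that the projection preserves the canonical form $QYQ^{\top}$.
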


\subsection{\rev{Spectral projected gradient algorithm}}\label{subsect: Spectral projected gradient algorithm}
We now introduce the spectral projected line search method. \rev{This approach combines gradient-based updates with projection onto the nuclear-norm ball, ensuring that each iterate remains} within the feasible set defined by the nuclear norm constraint. The procedure is outlined in Algorithm \ref{alg: line search}.

\begin{algorithm}[h]
    \caption{Spectral projected line search} \label{alg: line search}
    \textbf{Input:} Parameter vector from last iteration $\mm^{(l-1)}$, Matrix of comparison outcomes $Y$, nuclear norm constraint parameter $\tau$ and the spectral-step length $\gamma_{l-1}$
    \begin{algorithmic}
        \State Compute gradient: $\gg^{(l-1)} = \nabla f(\mm^{(l-1)})$.
        \State Compute search direction: $\dd^{(l-1)} = P_{\tau}( \mm^{(l-1)} - \gamma_{l-1}\gg^{(l-1)} ) - \mm^{(l-1)}$
        \State Perform line search along the linear trajectory: $\mm(\alpha) = \mm^{(l-1)} + \alpha \dd^{(l-1)}$.  
        \If{Convergence is reached}
        \State Set $\mm^{(l)}$ as the result from the line search.
        \Else 
        \State Perform line search along the alternative trajectory:  
        $$\mm^{\text{curve}}(\alpha) = P_{\tau}(\mm^{(l-1)} - \alpha \gamma_{l-1} \gg^{(l-1)}). $$ 
        \State Set $\mm^{(l)}$ as the result from the line search.
        \EndIf 
    \end{algorithmic}
    \textbf{Output:} Updated parameter vector $\mm^{(l)}$. 
\end{algorithm}

\rev{At each iteration, after computing the gradient at the current iterate,} the method employs two types of line searches. The first type performs a projection once and searches along a linear trajectory \( \mm(\alpha) \). This approach is computationally efficient since the primary computational cost lies in the projection operation. If the linear search fails to converge, the algorithm switches to a curvilinear trajectory \( \mm^{\text{curve}}(\alpha) \), which requires projecting at each step. The Spectral-step length $\gamma_{l-1}$ is decided using the method from \cite{barzilai1988two} in each iteration.

\subsection{\rev{Estimation procedure}}\label{subsect: Estimation procedure}
\rev{We now summarize the full estimation procedure by integrating the reformulation, projection, and spectral projected gradient steps described above.} The full procedure is detailed in Algorithm \ref{alg:estimation_algorithm}. The convergence criterion checks whether the optimality condition \( P_{\tau}(\mm^{(l)} - \nabla f(\mm^{(l)})) = \mm^{(l)} \) is approximately satisfied. Parts of the code are adapted from the SPGL1 package, originally implemented in \textsc{Matlab} \citep{van2008probing,davenport_etal-2014}. The proposed estimator is implemented in \textsc{R}, and the code \rev{for the implementation, as well as the simulation and real data analyses,} is available at \url{https://github.com/Arthurlee51/PCWST}. 

\begin{algorithm}[h]
\caption{Estimation Algorithm}\label{alg:estimation_algorithm}
\textbf{Input:} Matrix of comparison outcomes $Y$, nuclear norm constraint parameter $\tau$.
\begin{algorithmic}
\State \textbf{Initialization:} Set $l=0$, $\mm^{(0)} = \mathbf{0}_{0.5n(n-1)}$, the zero vector and set the spectral step-length $\gamma_{0} = 1$.
\While{$l = 0$ \textbf{or} convergence criterion is not satisfied }
    \State Update $l \gets l + 1$.
    \State Update $\mm^{(l)}$ via line search using Algorithm \ref{alg: line search} with inputs $\mm^{(l-1)}$, $Y$, $\tau$ and $\gamma_{l-1}$.
    \State Update $\gamma_{l}$ as proposed by \cite{barzilai1988two}.
 \EndWhile
 \end{algorithmic}
\textbf{Output:} Estimated parameter matrix $\hat{M} = \VV^{-1} (\mm^{(l)})$. 
\end{algorithm}

\section{Simulation Results}\label{sect: simulation}
\rev{This section presents simulation studies to validate the theoretical results in Section~\ref{sect: Theoretical results} and assess the numerical performance of the proposed estimator. Specifically, we evaluate the proposed model in terms of the estimation error of the pairwise comparison probability matrix and its predictive likelihood on independently generated test data. We compare its performance with that of the BT model across varying sample sizes, matrix complexities, and sparsity levels.}

We consider three distinct scenarios characterized by varying levels of sparsity. Specifically, we define $p_n$ as $n^{-1} \log(n)$, $n^{-1/2}$, and $1/4$, corresponding to sparse, less sparse, and dense data, respectively. The parameter $q_n$ is given by $4p_n$. Each $p_{ij,n}$ is then generated from a uniform distribution with range $[p_n,q_n]$. 

The parameter matrix $M$ is constructed as $\Theta J \Theta^{\top}$, where $\Theta$ is an $n \times 2k$ matrix, and $J$ is a $2k \times 2k$ block diagonal matrix of the form
\begin{align*}
    J = \begin{pmatrix}
    0 & n & 0 & \dots & 0 \\
    -n & 0 & 0 & \dots & 0 \\
    0 & 0 & \ddots & \dots & 0 \\
    \vdots & \vdots & \vdots & 0& n \\
    0 & 0 & 0 & -n & 0
\end{pmatrix}.
\end{align*}
The matrix $\Theta$ is orthonormal, obtained via QR decomposition of a random matrix $Z\in \mathbb{R}^{n \times 2k}$, where each entry of $Z$ is independently sampled from a standard normal distribution $N(0, 1)$. It can be verified that $\|M\|_{*} = 2kn$. 

We conduct $50$ simulations for $n = 500, 1000, 1500$, and $2000$, with $k$ ranging from 1 to 10. Recall that the rank of $M$ is $2k$. 
Additionally, the maximum number of comparisons, $T$, is fixed at 5, across all settings. We set $C_n = 2k$. The loss is computed as
\begin{align}\label{eq: loss}
\text{Loss} = (n^2 - n)^{-1}\| \hat{\Pi} - \Pi^* \|_F^2,
\end{align}
and the average loss across 50 simulations is reported for each model in Figure \ref{fig:comparison}, considering different values of $n$, $k$, and sparsity levels. 

The results in Figure \ref{fig:comparison} show that the mean loss of the proposed estimator decreases as $n$ increases. Moreover, the mean loss is significantly lower as the data become denser, corresponding to an increase in $p_n$. These observations are consistent with the results from Theorem~\ref{thm: nuclear norm convergence}. 

Notably, the proposed and BT models incur higher losses as the rank parameter $k$ increases, which is expected due to increasing complexity. However, the proposed model consistently outperforms the BT model across all settings, \rev{as reflected by the lower mean losses and the fact that its error bars lie entirely below those of the BT model.}  Furthermore, while the BT model’s performance remains relatively unchanged as $n$ increases, the proposed method continues to improve, showcasing its effectiveness in handling large datasets and capturing complex structures that stochastic transitivity assumptions cannot address.

\rev{In addition to estimation error, we evaluate predictive performance using the log-likelihood computed on an independently generated test set. The results further confirm the superiority of the proposed method over the BT model, as evidenced by consistently higher average predictive likelihood values across all settings. The detailed results, together with the computational time of both methods, are presented in Appendix~\ref{app: Additional Simulation Results}.}

\begin{figure}[t]
    \centering
    \includegraphics[width=\linewidth]{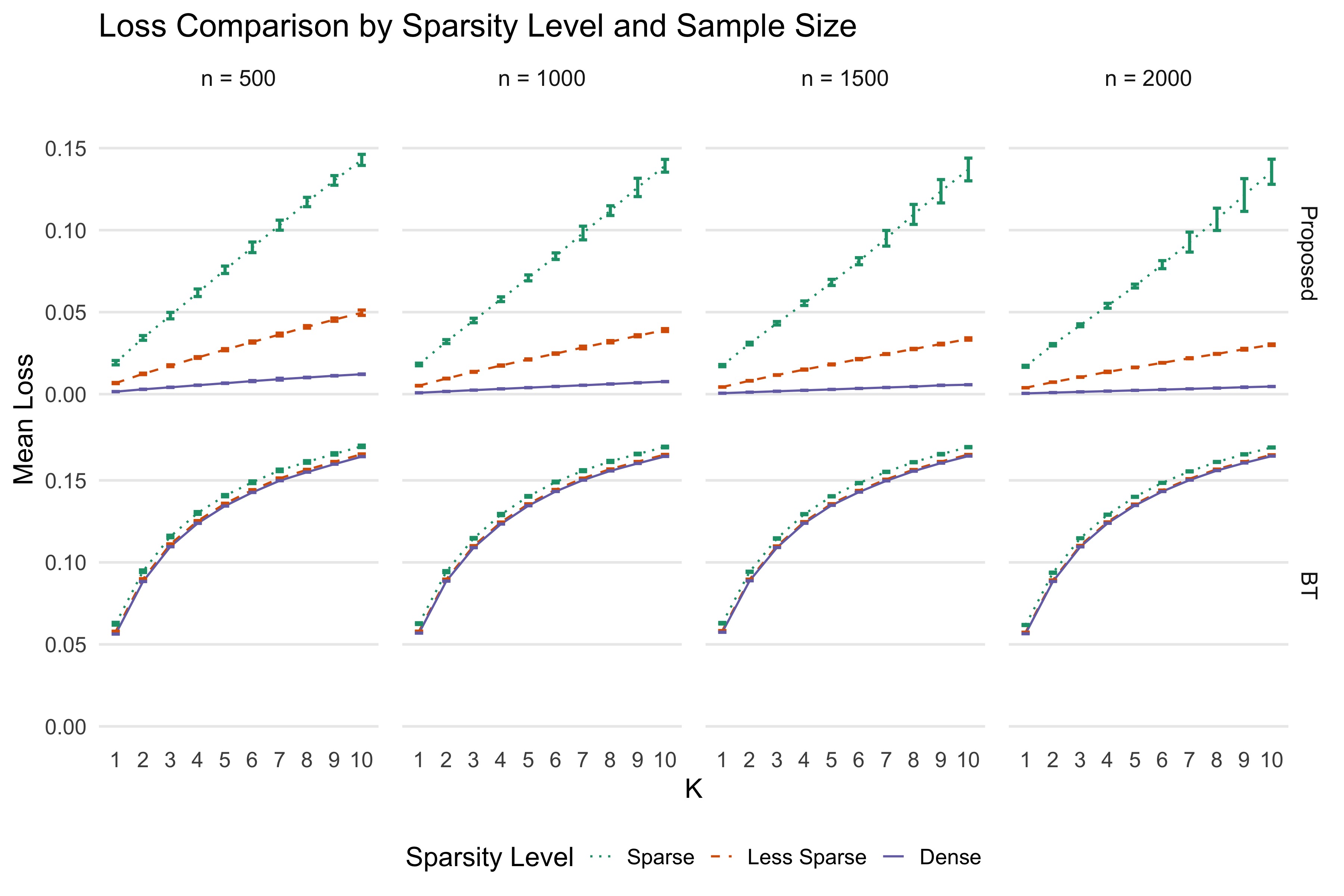}
    \caption{Comparison of loss between the proposed method and the Bradley-Terry (BT) model across different sparsity levels (sparse, less sparse, dense). \rev{Each row corresponds to a method (top: proposed; bottom: BT), and each column corresponds to a sample size ($n = 500, 1000, 1500, 2000$).} The x-axis represents the rank parameter \( k \), while the y-axis shows the mean loss, computed as the average of the losses defined in \eqref{eq: loss}. \rev{Error bars indicate $\pm 2$ standard deviations across replicates.}}
    \label{fig:comparison}
\end{figure}

\section{Real Data Examples} \label{sect: real data}
In this section, we compare our model's performance with the \rev{classical} BT model using two real datasets. \rev{Section \ref{subsect: datasets} provides a brief description of the two datasets used in the analysis, namely the StarCraft II and Tennis datasets.}  Section \ref{subsect: Data Preparation and Parameter Tuning} outlines the data preparation process and describes how the nuclear norm constraint parameter $\tau = C_n n$ is decided. Section \ref{subsect: Evluation Criteria} introduces the evaluation metrics used to compare the models. \rev{Finally, Section \ref{subsect: Empirical Results} reports the empirical results for both the StarCraft II and Tennis datasets.}
\subsection{\rev{Description of Datasets}}\label{subsect: datasets}
\subsubsection{\textit{StarCraft II} Data} \label{subsubsect: StarCraft II}
\textit{StarCraft II} is a military science fiction real-time strategy game developed and published by Blizzard Entertainment. The dataset comprises match results of professional \textit{StarCraft II} players sourced from the website \url{aligulac.com}, covering the period from 2010 to 2016. The matches follow the most common competitive format, where two players face off against each other, and each game results in either a win or loss, with no possibility of a draw.

We specifically focus on matches played using the \textit{StarCraft II: Heart of the Swarm} expansion, as different versions of the game are often treated as distinct games \citep{chen_joachims-2016-proceedings}. The training set includes $1,958$ players, with $1.9\%$ of all player pairs competing against each other at least once. The maximum number of matches between any pair of players is 30. The dataset is available at \url{https://www.kaggle.com/datasets/alimbekovkz/starcraft-ii-matches-history}.
\subsubsection{Tennis Data} \label{subsubsect: Tennis}
We analyze the tennis dataset to evaluate the performance of our model in professional sports. The dataset contains the results of all men's matches organized by the Association of Tennis Professionals (ATP) from 2000 to 2018. It includes matches from major tournaments such as the Grand Slams, the ATP World Tour Masters 1000, and other professional tennis series held during this period.

The training set consists of $723$ players, with $6.4\%$ of all player pairs having competed against each other at least once. The maximum number of matches between any pair of players is 23. The data is collected from \url{http://www.tennis-data.co.uk}.

\subsection{Data Preparation and Parameter Tuning} \label{subsect: Data Preparation and Parameter Tuning}
 The raw data consists of individual match records, with each comparison recorded as a separate entry. We reserve 30\% of the match records for testing, while the remaining 70\% is divided into 50\% for training and 20\% for validation.

The comparison data matrix is first constructed for the training set, with players absent from the training set removed from the validation set. The validation set is used to tune the nuclear constraint parameter $C_n$, as described in the sequel. After tuning, the training and validation sets are combined (including previously excluded entries), and the comparison data matrix is reconstructed from the combined dataset. 

The test set is then evaluated against this combined dataset, excluding entries for players not present in the combined dataset. Although the proposed model can handle players who never lose or win any game, we still remove them in the training and combined dataset to ensure stabler results and a fair comparison with the BT model, as this is a common practice. 

The nuclear norm of the parameter matrix \( M \) is unknown and is tuned on the training and validation sets using log-likelihood as the loss function.  The nuclear constraint parameter \(\tau = C_n n\) is determined by selecting \( C_n \) from 20 grid points, corresponding to powers of 10 evenly spaced between \(-1\) and \(1\). This results in \( C_n = 10^{0.47} = 2.98 \) for the \textit{StarCraft~II} dataset and \( C_n = 10^{-0.36} = 0.43 \) for the tennis dataset.

\subsection{Evaluation Criteria}\label{subsect: Evluation Criteria}
Let \( Y^{(\txt)} = (y_{ij}^{(\txt)})_{n \times n} \) denote the observed comparison results from the test set. Given the estimated comparison probabilities \( \hat{\Pi} = (\hat{\pi}_{ij})_{n \times n} \), we evaluate the performance of the estimates using two criteria. The first criterion is the log-likelihood, given by
\begin{align*}
    L(Y^{(test)} \mid \hat{\Pi}) =   \sum_{i=1}^{n} \sum_{j > i} \left( y_{ij}^{(\txt)} \log( \hat{\pi}_{ij} ) + y_{ji}^{(\txt)} \log( 1- \hat{\pi}_{ij} ) \right),  
\end{align*}
where a higher log-likelihood indicates a stronger agreement between the predicted probabilities and the observed results. The second criterion is the test accuracy, given by 
\begin{align*}
    A(Y^{(\txt)} \mid \hat{\Pi} ) = \frac{1}{ \sum_{i=1}^{n} \sum_{j=1}^{n} y_{ij}^{(\txt)} } \sum_{i=1}^{n} \sum_{j > i} \bigg( &  y_{ij}^{(\txt)}I(\hat{\pi}_{ij} \geq 0.5)+  y_{ji}^{(\txt)}I(\hat{\pi}_{ji} > 0.5)   \bigg).
\end{align*}
It measures the proportion of the comparison results correctly predicted, with higher values indicating better predictive performance. The results are presented in Table \ref{tab:real_data}.

\begin{table}[h] 
    \centering
    \begin{tabular}{lrrrr}
        \toprule
                        & \multicolumn{2}{c}{\textit{StarCraft II}} & \multicolumn{2}{c}{Tennis} \\
        \cmidrule(lr){2-3} \cmidrule(lr){4-5}
                        & Proposed & BT         & Proposed & BT      \\
        \midrule
        Log-likelihood            &  $-1,897,946$	&$-2,137,115$   &$-333,076$	    & $-322,483$           \\
        Accuracy               & 0.766             & 0.713         & 0.652             & 0.658                \\
        \bottomrule
    \end{tabular}
    \caption{Comparison of model performance on StarCraft II and ATP datasets. The performance is evaluated using log-likelihood and accuracy for the proposed model and the BT model.}
    \label{tab:real_data}
\end{table}

\subsection{\rev{Empirical Results}}\label{subsect: Empirical Results}
\subsubsection{\textit{StarCraft II} Data} \label{subect: StarCraft II}
As seen in Table~\ref{tab:real_data}, the proposed model achieves a higher log-likelihood of $-1,897,946$ compared to $-2,137,115$ for the BT model. This suggests that our model provides a better fit for the observed test data. The test accuracy of the proposed model is also significantly higher at $0.766$, compared to $0.713$ for the BT model. Among the $1,249,168,756$ distinct triplets in the data, stochastic transitivity is violated in $70\%$ of cases, as indicated by the matrix of estimated probabilities $\hat{\Pi}$ under the proposed model. Specifically, this occurs when there exists an ordering of the three players, denoted as $i$, $j$, and $k$, such that $\hat{\pi}_{ik} \geq \hat{\pi}_{ij}$ and $\hat{\pi}_{jk} < 0.5$.

These results are consistent with previous findings by \cite{chen_joachims-2016-proceedings}, who analyzed a similar dataset over different time frames, suggesting that a strict ranking structure may not be appropriate in e-sports. In particular, intransitivity can naturally arise from game design, such as intransitive relationships among different unit types, which provide players with significant flexibility in choosing units and strategies. Moreover, the strong performance of our method on this dataset confirms its ability to effectively handle sparsity in real-world data, aligning with both simulation and theoretical results.

\subsubsection{Tennis Data} \label{subect: Tennis}
From Table~\ref{tab:real_data}, the BT model achieves a marginally better performance, with a log-likelihood of $-322,483$ compared to $-333,076$ for the proposed model, and a slightly higher test accuracy (0.658 vs 0.652). This advantage may come from the BT model’s smaller parameter space, which is more efficient when the data aligns well with the stochastic transitivity assumption, where the level of intransitivity is minimal or absent. Nevertheless, the performance of the proposed model remains close to that of the BT model, demonstrating its robustness even in settings where transitivity holds. This flexibility is particularly useful when intransitivity is uncertain, as it maintains high accuracy without relying on strict ranking assumptions.

The lack of intransitivity in professional tennis may be due to several factors. Unlike e-sports, tennis offers limited gameplay flexibility, as adjustments to equipment like rackets and shoes have minimal impact compared to the choice of units in \textit{StarCraft II}. Additionally, professional tennis players may be required to be well-rounded as weaknesses are quickly identified and exploited by opponents. In contrast, intransitivity may be more common at lower levels of competition, where skill imbalances are expected to be more significant. For example, a player with a strong serve but weak baseline play may be more likely to defeat one opponent while losing to another with a different style. Investigating intransitivity in lower-tier competitions remains an open question for future research.

\section{Discussions} \label{sect: Discussions}
In this article, we propose a statistical framework for modeling stochastic intransitivity. The framework assumes an approximate low-rank structure in the parameter matrix, expressed through a nuclear norm constraint. Theoretical analysis demonstrates that the proposed estimator achieves optimal convergence rates under a wide range of data sparsity settings. Simulation and empirical analyses confirm that our model is superior to the Bradley-Terry model when the assumption of stochastic transitivity is violated.

Our framework stands apart from the existing literature by imposing an approximate low-rank structure. To our knowledge, all existing methods for pairwise comparison data rely on exact low-rank models, even in the limited works that allow stochastic intransitivity. By accommodating a larger parameter space, our approach offers greater flexibility and applicability to a wider range of datasets. While this may lead to slightly reduced efficiency, our analysis of the tennis dataset demonstrates that the loss of efficiency is small when stochastic transitivity largely holds. 
Therefore, the proposed model may predict pairwise comparison results more accurately 
in many real-world applications. For example, for tournament data, this could lead to more accurate predictions of the champion or the number of rounds each player can play, given historical data and the current tournament schedule. 

The current research may be extended in several directions. Specifically, the current theoretical analysis focuses on the convergence of the loss $\Vert \hat \Pi - \Pi^*\Vert^2/(n^2-n)$, which can be seen as a notion of convergence in an average sense (across entries of the comparison probability matrix). \rev{As shown in Remark~\ref{rmk: l2 and max norm convergence}, under additional structural assumptions, we further establish convergence under the matrix max-norm loss $\Vert \hat \Pi - \Pi^* \Vert_{\infty}$ by leveraging the refinement techniques proposed in \cite{chen_Li-2024-JoMR}. It would be of interest to determine whether such convergence can be obtained under weaker conditions, or whether sharper convergence rates can be achieved through an improved refinement procedure.}  Moreover, it will be useful to further establish the asymptotic normality for each $\hat \pi_{ij} - \pi_{ij}^*$, which can be used to quantify the uncertainty associated with the estimated comparison probabilities.

The proposed modeling framework also needs to be extended to accommodate more complex settings of pairwise comparisons. First, 
covariate information can be incorporated into the model to facilitate the prediction. For example, for many team sports tournaments (e.g., soccer and basketball), whether a team plays at their home court matters and should be included as a covariate. Second,  pairwise comparison data are often collected over time, which is true for the \textit{StarCraft II} and tennis data studied in Section~\ref{sect: real data}. The current model ignores time information in data. To better predict future pairwise comparison results, it will be useful to model the comparison probabilities as a function of time. As a result, the estimation of these time-varying comparison probabilities will also differ substantially from the current procedure. Third, for pairwise comparison data produced by raters, which are commonly encountered in crowd-sourcing settings \citep[e.g.,][]{chen_Bennett_etal-2013-prooceedings}, characteristics of the raters, such as their reliability, affect the pairwise comparisons. In other words, the distribution of the comparison between two items depends not only on the pair of items but also on the rater who performs the comparison.  In this regard, \cite{chen_Bennett_etal-2013-prooceedings} propose an extended version of the BT model that uses a rater-specific latent variable to account for raters' reliability. A similar extension can be made to the current model to simultaneously account for the raters' heterogeneity and the items' stochastic intransitivity.

\iffalse
\clearpage

The proposed modelling framework can also be extended to accommodate more complex pairwise data, such as crowd-sourced settings where the quality of comparisons depends on annotator reliability. To capture this rater effect, an additional latent variable could be introduced to model the probability of an annotator providing accurate ratings. Similar to the CROWD-BT model introduced by \cite{chen_Bennett_etal-2013-prooceedings}, this extension would require reformulating the optimization problem to include additional regularization constraints.

Finally, in practical applications, there may be more than two possible outcomes of interest beyond a win or loss, such as the possibility of ties. Extending the current approach to accommodate multiple outcomes would be a valuable enhancement. Additionally, the framework could be generalized to handle matches involving more than two players, which are typically modeled using the Plackett-Luce framework. Notably, Luce’s choice axiom still applies even in the absence of stochastic transitivity. \yc{this is a bit tricky as you no longer have a global ranking.}

\fi
\section*{Acknowledgments}
The authors would like to thank the editor and the two anonymous reviewers for their constructive and valuable comments, which substantially improved the paper.
\newpage

\appendix
\section{\rev{Additional Theoretical Results}}\label{app: additional theoretical results}
\subsection{\rev{Assumptions and Refinement Procedure for Max-norm and Two-to-infinity Norm Convergence}}\label{app: Assumption and refinement}
\rev{In this section, we state the assumptions and refinement procedures that enable us to obtain max-norm convergence of the underlying parameter matrix. In particular, we impose the following assumptions. 
\begin{assumption}\label{assp: rank}
    $M^*$ has rank $2K$ such that $M^*= U^* \Sigma^* J_K{U^*}^{\top}$ with  $\|U^* {\Sigma^*}^{1/2}\|_{2 \to \infty}\leq C^*$ for some fixed constant $C^*>0$. Moreover, we assume each singular value has multiplicity exactly $2$. Let $\sigma^*_{1} \geq \sigma_2^* \dots \geq \sigma_{K}^*$ be the largest $K$ distinct singular values of $M^*$. We assume that $\sigma^*_{K} \asymp \sigma^*_{1} \asymp n$.
\end{assumption}
\begin{assumption}\label{assp: p_n with rank}
    Assumption \ref{assp: 3} holds with $p_n \gtrsim n^{-1/3}\log(n)$.
\end{assumption}
Assumption \ref{assp: rank} requires that $M^*$ has an exact low-rank structure, with $\|U^* {\Sigma^*}^{1/2}\|_{2 \to \infty}\leq C^*$ serving as a standard incoherence condition to prevent spiky low-rank matrices (see, e.g., \citealp{chen2020noisy,chen_Li-2024-JoMR}). It is straightforward to verify that Assumption \ref{assp: rank} implies Assumption \ref{assp: 2}, which only requires an approximate low-rank structure. Since $M^*$ is skew-symmetric, the decomposition $M^* = U^* \Sigma^* J_K {U^*}^{\top}$ always holds by singular value decomposition. The assumptions on distinct and comparable singular values are made for simplicity and can be relaxed.}

\rev{Assumption \ref{assp: p_n with rank} strengthens the sparsity requirement, requiring $p_n \gtrsim n^{-1/3}\log n$. This condition arises from the dependence between the estimator and the missingness pattern in the refinement step. It may be relaxed by adopting a data-splitting refinement procedure, as in \cite{chen_Li-2024-JoMR}, to remove this dependence. We leave this extension to future work.}
\rev{Under these assumptions, we define the estimator satisfying both the nuclear norm and max norm constraint as 
\begin{align}\label{eq: nuclear and max norm estimator}
    \hat{M} = \argmax_{M} \LL(M) \text{ subject to } \|M\|_{*} \leq C_n n \text{ and } \|M\|_{\infty} \leq C_n/\sqrt{2K}, M = - M^{\top}.
\end{align}
Similar to \eqref{eq: nuclear norm estimator}, the above optimisation problem can be solved by a projected line search algorithm, and the projection step can be modified following Section 4.1.3 of \cite{davenport_etal-2014} to accommodate the additional constraint $\|M\|_{\infty} \le C_n/\sqrt{2K}$. }

\rev{For any $n \times 2K$ matrix $\ThTh = (\thth_1, \dots, \thth_n)^{\top}$, define $P^{2,\infty}_{\tau}(\ThTh) = (\breve{\thth}_1, \dots, \breve{\thth}_n)^{\top}$, where $\breve{\thth}_i = \thth_i$ if $\|\thth_i\| \leq \tau$ and $\breve{\thth}_i = (\tau/\|\thth_i\|)\thth_i$ otherwise. We further refine $\hat{M}$ using the following refinement procedure to obtain an estimator $\tilde{M}$ that achieves max-norm convergence.}

\begin{algorithm}
    \caption{\rev{Refinement Procedure}} \label{alg: refinement procedure}
    \rev{\textbf{Input:Matrix of comparison outcomes $Y$, nuclear norm constraint parameter $\tau$, initial estimate $\hat{M}$ solving \eqref{eq: nuclear and max norm estimator}, two-to-infinity norm constraint parameter $\tau_2$.}
    \begin{algorithmic}
        \State  Compute the best rank-2K approximation of \( \hat{M} \) using its top \(2K\) singular values to compute $\hat{M}_K$ given by 
        \begin{align*}
            \hat{M}_K = \hat{U} \hat{\Sigma}J_K {\hat{U}}^{\top},
        \end{align*}
where \( J_K \) is a $2K \times 2K$ block diagonal matrix  with $2 \times 2$ blocks 
\begin{align*}
    \begin{pmatrix}
        0& 1\\ -1 &0
    \end{pmatrix}
\end{align*}
along the main diagonal.
\State Compute $\hat{\ThTh} = P^{2,\infty}_{\tau_2}(\hat{U} \hat{\Sigma}^{1/2})$. 
          \State Calculate $\tilde{\ThTh} = (\tilde{\thth}_{1},\tilde{\thth}_2, \dots, \tilde{\thth}_n)^{\top}$, where we define $\hat{\thth}_j^{(p)} = J_{K}\hat{\thth}_j$ for $j = 1, \dots, n$, and $\tilde{\thth}_i$ is obtained by solving the equation $n^{-1}\sum_{j\in [n], j \neq i}  \hat{\thth}^{(p)}_j\left\{ y_{ij} - n_{ij}g(\tilde{\thth}_i^{\top}\hat{\thth}^{(p)}_j )   \right\}=0$.
    \end{algorithmic}
    \textbf{Output:} $\tilde{M} = \tilde{\ThTh} J_{K}\tilde{\ThTh}^{\top} $.}
\end{algorithm}
\rev{The following theorem establishes the two-to-infinity norm convergence of $\tilde{\ThTh}$ and the max-norm convergence of $\tilde{M}$ obtained from Algorithm \ref{alg: refinement procedure}.}

\rev{\begin{theorem}\label{thm: 2 to infty}
    Under Assumptions \ref{assp: rank} and \ref{assp: p_n with rank}, with probability converging to $1$, we have 
    \begin{align}\label{eq: tilde Th convergence}
        \|\tilde{\ThTh}- \ThTh^*\|_{2 \to \infty}  \leq \kappa_6 n^{-1/4}p_n^{-3/4} \text{ and }
    \end{align}
    \begin{align}\label{eq: tilde M convergence}
        \|\tilde{M} - M^*\|_{\infty} \leq \kappa_7 n^{-1/4}p_n^{-3/4},
    \end{align}
    where $\kappa_6$, $\kappa_7>0$ are absolute constants.
\end{theorem}
An immediate consequence of Theorem \ref{thm: 2 to infty} is that the same entrywise rate also holds for $\tilde{\Pi} - \Pi^*$. In particular, we have
\begin{align}\label{eq: Pi max norm convergence}
     \|\tilde{\Pi} - \Pi^*\|_{\infty} \leq \kappa_8 n^{-1/4}p_n^{-3/4},
\end{align}
for some positive constant $\kappa_8$, with probability converging to $1$. 

In addition to the effect of the refinement procedure discussed above, the convergence rates in the theorem also depend on the Frobenius norm error of $\hat{M}$. Therefore, sharper rates may be obtained by improving the initial convergence rate of $\hat{M}$ through a more refined exploitation of the low-rank structure. We leave this direction for future research.}

\subsection{\rev{Recovery of Top-k Items}}\label{app: Recovery of Top-k Items}
\rev{For any subset of subjects $\CS \subseteq [n]$, we may define a score within that subset by
\begin{align*}
    r_{i,\CS}(\Pi^*) 
    \coloneqq 
    \frac{1}{|\CS|} \sum_{j\in \CS}\Pi^*_{ij}, 
    \qquad i \in \CS.
\end{align*}
This quantity can be interpreted as the probability that subject $i$ defeats an opponent drawn uniformly at random from $\CS$. 
For example, if $\CS$ represents the set of players participating in a particular competition, then $r_{i,\CS}(\Pi^*)$ measures the average strength of player $i$ relative to that group. 
When $\CS = [n]$, this reduces to the score considered in \cite{shah_wainwright-2018-JoMR}.

Based on these scores, we may rank the subjects in $\CS$ and study recovery of the top-$k$ set. 
Let $(1), \dots, (|\CS|)$ denote the indices of the subjects sorted in descending order according to $r_{i,\CS}(\Pi^*)$, and define the $k-$separation threshold as
\begin{align*}
    \Delta_{k,\CS} 
    \coloneqq 
    r_{(k),\CS}(\Pi^*) 
    - 
    r_{(k+1),\CS}(\Pi^*).
\end{align*}
The following theorem gives a sufficient condition for consistent identification of the top-$k$ set.

\begin{theorem}\label{thm: top k recovery}
Under Assumptions \ref{assp: rank} and \ref{assp: p_n with rank}, suppose that
\[
    \Delta_{k,\CS} 
    \ge 
    2\kappa_8 n^{-1/4}p_n^{-3/4},
\]
where $\kappa_8$ is defined in \eqref{eq: Pi max norm convergence}.
Then
\[
    \mathbb P\big( \tilde{\CS}_k \neq \CS_k^* \big) \to 0,
\]
where $\CS_k^*$ and $\tilde{\CS}_k$ denote the sets of the top-$k$ subjects in $\CS$ ranked according to 
$r_{i,\CS}(\Pi^*)$ and $r_{i,\CS}(\tilde{\Pi})$, respectively.
\end{theorem}

The result follows directly from the entrywise convergence 
$\|\tilde{\Pi} - \Pi^*\|_{\infty}$ established in \eqref{eq: Pi max norm convergence}. 
In particular, the required separation level is determined by the rate of this entrywise error. 
Consequently, if a sharper convergence rate for 
$\|\tilde{\Pi} - \Pi^*\|_{\infty}$ is available, the separation condition on $\Delta_{k,\CS}$ can be correspondingly weakened.
}

\section{\rev{Proof of Theoretical Results}}\label{app: Proof of Theoretical Results}
%\yc{Move it to the appendix.}
\rev{This section presents the proofs of the theoretical results, where Section \ref{app: Proof of Theorem thm: nuclear norm convergence} proves Theorem \ref{thm: nuclear norm convergence}, Section \ref{app: Proof of thm: lower bound} proves Theorem \ref{thm: lower bound}, Section \ref{app: proves of thm: 2 to infty} proves Theorem \ref{thm: 2 to infty}, Section \ref{app: proves of thm: top k recovery} proves Theorem \ref{thm: top k recovery} and Section \ref{app: proof of proposition} proves Proposition \ref{prop: skew preserving}.} Throughout this section, $\delta_0, \delta_1, \dots $ denote positive constants that do not depend on $n$. For two probability distributions $\PP$ and $\QQ$ on a finite set $A$, $D(\PP \| \QQ)$ will denote the Kullback-Leibler (KL) divergence, 
\begin{align*}
    D(\PP \| \QQ) = \sum_{x \in A}\PP(x) \log\left(\frac{\PP(x)}{\QQ(x)}\right). 
\end{align*}
\subsection{Proof of Theorem \ref{thm: nuclear norm convergence}}\label{app: Proof of Theorem thm: nuclear norm convergence}
For two scalars $x, z \in [0,1]$, define the Hellinger distance as
\begin{align*}
    d^2_H(x,z) = (\sqrt{x} - \sqrt{z})^2 + (\sqrt{1-x} - \sqrt{1-z})^2.
\end{align*}
For $n \times n $ matrices $X = (x_{ij})_{n \times n}$ and $Z = (z_{ij})_{n \times n}$ where $X, Z \in [0,1]^{n \times n }$, define 
\begin{align*}
    d^2_H(X,Z) = \frac{1}{n^2} \sum_{i=1}^{n} \sum_{j=1}^{n}   d^2_H(x_{ij},z_{ij}).
\end{align*}
It is straightforward to show that $d^2_H(X,Z) \gtrsim \|X - Z\|_{F}^2/(n^2-n)$. Moreover, let $\|X\|_{\infty} = \max_{i,j}|x_{ij}|$ denotes the entry-wise infinity norm of $X$. We will first prove the theorem under an additional constraint that $\|M^{*}\|_{\infty} \leq \gamma$ and $\|\hat{M}\|_{\infty} \leq \gamma$ for some $\gamma >0$, then send $\gamma \to \infty $ to recover Theorem \ref{thm: nuclear norm convergence}. Formally, we prove the following theorem: 
\begin{theorem}\label{thm: restricted nuclear norm convergence}
    Under the conditions in Theorem \ref{thm: nuclear norm convergence}, suppose in addition that $\|M^{*}\|_{\infty} \leq \gamma$. Let $\hat{M}$ be a solution to \eqref{eq: nuclear norm estimator} under the additional constraint that $\|\hat{M}\|_{\infty} \leq \gamma$. Then with probability at least $1 - \delta_1/n$, 
    $$ d^2_H(\hat{\Pi},\Pi^{*}) \leq \delta_2 C_n\sqrt{\frac{1}{p_n n}},$$
    where $\delta_1$ and $\delta_2$ are absolute constants.
\end{theorem}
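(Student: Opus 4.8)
The plan is to run the standard "basic inequality" argument for penalized/constrained M-estimators, but adapted to the skew-symmetric, Binomial-sampling setting. Write $\ell_{ij}(m) = y_{ij}\log g(m) + (n_{ij}-y_{ij})\log(1-g(m))$ for the contribution of pair $(i,j)$ with $i<j$, so that $\LL(M) = \sum_{i<j}\ell_{ij}(m_{ij})$. Since $\hat M$ maximizes $\LL$ over $\KK$ (intersected with the $\|\cdot\|_\infty\le\gamma$ ball) and $M^*$ is feasible there, we have $\LL(\hat M) \ge \LL(M^*)$, i.e. $\LL(\hat M) - \LL(M^*) \ge 0$. The first step is to decompose this difference into a deterministic part controlled by a Hellinger/KL-type quantity and a stochastic (empirical process) part. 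Concretely, writing $\pi_{ij}^* = g(m_{ij}^*)$ and $\hat\pi_{ij} = g(\hat m_{ij})$, I would use the elementary bound (valid for Binomial log-likelihoods, cf. the arguments in \cite{davenport_etal-2014} and \cite{lee_chen-2024-arXiv}) that
\begin{align*}
\mathbb{E}\big[\ell_{ij}(m_{ij}^*) - \ell_{ij}(\hat m_{ij})\,\big|\, n_{ij}\big] \;\gtrsim\; n_{ij}\, d_H^2(\hat\pi_{ij},\pi_{ij}^*),
\end{align*}
so that after summing over pairs and using $n_{ij}\sim\text{Binomial}(T,p_{ij,n})$ with $p_{ij,n}\asymp p_n$, the expected curvature term is of order $p_n n^2\, d_H^2(\hat\Pi,\Pi^*)$ (up to a constant depending on $T$ and, crucially, on $\gamma$ through the fact that $g'$ is bounded below on $[-\gamma,\gamma]$ — this is exactly why the $\|\cdot\|_\infty\le\gamma$ truncation is imposed). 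Rearranging the basic inequality then yields
\begin{align*}
c\, p_n n^2\, d_H^2(\hat\Pi,\Pi^*) \;\le\; \big\langle W,\; \hat M - M^*\big\rangle,
\end{align*}
where $W$ is the (re-centered) score-type random matrix with entries proportional to $y_{ij} - n_{ij}\pi_{ij}^*$ on the upper triangle, extended skew-symmetrically, and the inner product is the trace inner product.

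The second step is to bound the stochastic term by duality: $\langle W, \hat M - M^*\rangle \le \|W\|_{\mathrm{op}}\,\|\hat M - M^*\|_*$, and since both $\hat M$ and $M^*$ lie in $\KK$, $\|\hat M - M^*\|_* \le 2 C_n n$. So everything reduces to a sharp high-probability bound on the operator norm of the skew-symmetric random matrix $W$. This is the technical heart: I would show $\|W\|_{\mathrm{op}} \lesssim \sqrt{p_n n}$ with probability at least $1 - \delta_1/n$. The entries $y_{ij} - n_{ij}\pi_{ij}^*$ (for $i<j$) are independent, mean-zero, bounded by $T$, with conditional variance $\le T p_{ij,n}\asymp T p_n$; extending skew-symmetrically keeps the upper-triangular entries independent, and the diagonal is zero. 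A matrix Bernstein inequality (e.g. the version for random matrices with independent bounded entries, as in Tropp's user-friendly bounds, or the Bandeira–van Handel bound for symmetric matrices with independent entries, applied to $iW$ which is Hermitian) gives $\|W\|_{\mathrm{op}} \lesssim \sqrt{n\cdot T p_n} + T\log n \lesssim \sqrt{p_n n}$, where the last step uses the sparsity assumption $p_n \gtrsim \log n / n$ to absorb the additive $\log n$ term. Care is needed here because of the skew-symmetric dependence structure — the $(i,j)$ and $(j,i)$ entries are perfectly anti-correlated — but since the upper triangle consists of genuinely independent entries, the standard symmetric-matrix concentration results apply directly to $iW$ or to the symmetrization; this is precisely the point the introduction flags as requiring "tailored analysis."

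Combining the two steps: on the good event, $c\,p_n n^2\, d_H^2(\hat\Pi,\Pi^*) \le 2 C_n n \cdot \delta\sqrt{p_n n}$, hence $d_H^2(\hat\Pi,\Pi^*) \le \delta_2 C_n \sqrt{1/(p_n n)}$, which is the claimed bound. Finally, Theorem \ref{thm: nuclear norm convergence} itself follows by the $\gamma\to\infty$ argument sketched before the statement, together with the comparison $\|\hat\Pi - \Pi^*\|_F^2/(n^2-n) \lesssim d_H^2(\hat\Pi,\Pi^*)$ noted in the text; one must check that the constants $\delta_1,\delta_2$ can be taken independent of $\gamma$ (the curvature constant degrades like $g'(\gamma)^{-1}$, so the clean way is to note that on the event $\{\|\hat M\|_\infty,\|M^*\|_\infty\le\gamma\}$ the bound holds, and then pass to the limit using that the unconstrained $\hat M$ must already satisfy an a priori $\ell_\infty$ bound of the right order — or, alternatively, to absorb $\gamma$ into $\kappa_2$ as the theorem statement allows $C_n$ to carry the magnitude information). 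The main obstacle is the operator-norm bound on $W$ with the correct $\sqrt{p_n n}$ rate under the sparsity threshold, and ensuring the skew-symmetric dependence does not inflate it; everything else is routine convexity and log-likelihood curvature bookkeeping.
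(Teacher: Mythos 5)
There is a genuine gap at the very first step of your argument, the passage from the basic inequality to
$c\,p_n n^2\, d_H^2(\hat\Pi,\Pi^*) \le \langle W, \hat M - M^*\rangle$.
The basic inequality $\LL(\hat M)\ge\LL(M^*)$ involves the \emph{realized} log-likelihood difference, whose curvature term is the random quantity $\sum_{i<j} n_{ij}\,(\cdot)$, not its expectation $T\sum_{i<j}p_{ij,n}(\cdot)$. You replace the $n_{ij}$-weighted curvature by its mean ("after summing over pairs and using $n_{ij}\sim\text{Binomial}(T,p_{ij,n})$ the expected curvature term is of order $p_n n^2 d_H^2$") without any uniform control: since $\hat M$ is data-dependent and, in the sparse regime $p_n\asymp \log n/n$, the overwhelming majority of pairs have $n_{ij}=0$, the empirical curvature $\sum_{i<j}n_{ij}(\hat m_{ij}-m^*_{ij})^2$ cannot be lower-bounded by $c\,p_n\sum_{i<j}(\hat m_{ij}-m^*_{ij})^2$ without a restricted-strong-convexity-type argument over the whole constraint set; that is exactly the hard part, and it is what the paper's Lemma \ref{lm: lemma for nuclear norm thm} supplies. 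The paper does not use a score-plus-curvature Taylor expansion at all: it bounds $\sup_{M\in\GG}|\bar\LL(M)-E\bar\LL(M)|$ uniformly (symmetrization, the Ledoux--Talagrand contraction principle, the duality bound $|\langle \EE\circ\NN,M\rangle|\le\|\EE\circ\NN\|_{op}\|M\|_*$, and a moment bound on $\|\EE\circ\NN\|_{op}$ with $h=\log n$), which controls the fluctuations of the score and of the curvature simultaneously, and then uses the exact identity $E[\LL(M)-\LL(M^*)]=-T\sum_{i<j}p_{ij,n}D(g(m^*_{ij})\,\|\,g(m_{ij}))\le -0.5Tp_n D(\Pi^*\|\Pi)$ to convert the population term into KL and hence Hellinger. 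Your second step (dual-norm bound and $\|\hat M-M^*\|_*\le 2C_nn$, plus an operator-norm bound for the skew-symmetric noise) is in the same spirit as the paper's, but it only addresses the linear term, so the proof as sketched does not close.

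Two further problems. First, your curvature constant degrades like $g'(\gamma)^{-1}$, so your $\delta_2$ would depend on $\gamma$; the theorem asserts \emph{absolute} constants, and the subsequent $\gamma\to\infty$ limit used to recover Theorem \ref{thm: nuclear norm convergence} requires exactly this $\gamma$-freeness. In the paper no such dependence arises because the curvature step is the KL identity above (and $d_H^2\le n^{-2}D$ holds unconditionally); $\gamma$ enters only to validate the contraction step and never appears in $\delta_0,\delta_1,\delta_2$. Your proposed fixes (an a priori $\ell_\infty$ bound on the unconstrained $\hat M$, or absorbing $\gamma$ into $C_n$) are not substantiated and are not how the result is obtained. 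Second, your explicit matrix-Bernstein bound $\|W\|_{op}\lesssim\sqrt{nTp_n}+T\log n$ does not give $\lesssim\sqrt{p_nn}$ at the sparsity threshold, since $T\log n\not\lesssim\sqrt{p_nn}$ when $p_n\asymp\log n/n$; one needs a bound whose additive term is $O(\sqrt{\log n})$ (Bandeira--van Handel, Seginer, or the paper's $\log n$-moment computation, which yields $E(\|\EE\circ\NN\|_{op}^h)^{1/h}\lesssim T\sqrt{nq_n}$). This last point is repairable, but the missing uniform control of the empirical curvature and the $\gamma$-dependent constants are substantive gaps.
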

\begin{proof}
Define $\bar{\LL}(M) = \LL(M) - \LL(0_{n \times n })$. The following lemma is essential to proving Theorem \ref{thm: restricted nuclear norm convergence}:
\begin{lemma} \label{lm: lemma for nuclear norm thm}
    Under the conditions in Theorem \ref{thm: restricted nuclear norm convergence}, we have 
    \begin{align*}
       P\left( \frac{1}{n^2}\sup_{M \in \GG}|\bar{\LL}(M) - E (\bar{\LL}(M)) | \geq \delta_0 C_n\sqrt{\frac{T q_n}{n}}  \right) \leq \frac{\delta_1}{n}, 
    \end{align*}
    where $\delta_0$ is an absolute constant, and $\GG \subset \RR^{n \times n}$ is defined as 
    \begin{align*}
        \GG = \{M \in \RR^{n \times n} : \|M\|_{*} \leq C_n n ,\|M\|_{\infty} \leq \gamma, M  = -M^{\top}\}.
    \end{align*}
\end{lemma}
Before proving the lemma, we first show how Lemma \ref{lm: lemma for nuclear norm thm} implies Theorem \ref{thm: restricted nuclear norm convergence}. For two scalars $x, z \in [0,1]$, we abuse the notation of $D(\cdot \| \cdot)$ and define the divergence measure as
\begin{align*}
    D(x \| z) = x\log\left(\frac{x}{z}\right) + (1-x)\log\left(\frac{1-x}{1-z}\right).
\end{align*}
Similarly, for two matrices $X, Z \in [0,1]^{n \times n}$, define
\begin{align*}
    D(X \| Z ) = \sum_{i=1}^{n} \sum_{j=1}^{n} D(x_{ij} \| z_{ij}). 
\end{align*}
For any choice of $M \in \GG$, we have 
\begin{align*}
    &E(\bar{\LL}(M) -\bar{\LL}(M^{*})) \\
    =&  E(\LL(M) -\LL(M^{*}))\\
                                        =&  \sum_{i=1}^{n} \sum_{j > i} E\left(y_{ij}\log\left(\frac{g( m_{ij} )}{g( m_{ij}^{*} )}\right)  + (n_{ij} - y_{ij}) \log\left( \frac{1-g( m_{ij} )}{1-g( m_{ij}^{*} )}\right) \right)\\
                                        %&= \frac{1}{n^2} \sum_{i=1}^{n} \sum_{j > i} E\left( E\left(y_{ij}\log\left(\frac{g( m_{ij} )}{g( m_{ij}^* )}\right)  + (n_{ij} - y_{ij}) \log\left( \frac{1-g( m_{ij} )}{1-g( z_{ij^*} )}\right) \mid n_{ij}\right) \right)\\
                                            =& \sum_{i=1}^{n} \sum_{j > i} E\left(n_{ij}g( m_{ij}^{*} ) \log\left(\frac{g( m_{ij} )}{g( m_{ij}^{*} )}\right)  + n_{ij}(1- g( m_{ij}^{*} ))\log\left( \frac{1-g( m_{ij} )}{1-g( m_{ij}^{*} )}\right) \right)\\
                                            %     &= \frac{T}{n^2} \sum_{i=1}^{n} \sum_{j > i} p_{ij,n}\left\{ g( m_{ij}^{*} ) \log\left(\frac{g( m_{ij} )}{g( m_{ij}^{*} )}\right) + (1- g( m_{ij}^{*} ))\log\left( \frac{1-g( m_{ij} )}{1-g( z_{ij^{*(nu)}} )}\right) \right\}\\
                                                 =& -T \sum_{i=1}^{n} \sum_{j > i} p_{ij,n}D(g( m_{ij}^{*}) \| g( m_{ij} ))\\
                                                \leq&  -0.5Tp_n  D(\Pi^{*} \| \Pi ) . 
\end{align*}
Note that $M^{*} \in \GG$ by assumption. Therefore, for any $M \in \GG$, we have 
\begin{align*}
    \bar{\LL}(M) -\bar{\LL}(M^{*}) & =  E(\bar{\LL}(M) -\bar{\LL}(M^{*})) + (\bar{\LL}(M) - E(\bar{\LL}(M))) - ( \bar{\LL}(M^{*}) - E(\bar{\LL}(M^{*})) )\\
                                     & \leq E(\bar{\LL}(M) -\bar{\LL}(M^{*}))  + 2 \sup_{X \in \GG}|\bar{\LL}(X) - E(\bar{\LL}(X))|\\
                                     &\leq -0.5Tp_n  D(\Pi^{*} \| \Pi )+  2 \sup_{X \in \GG}|\bar{\LL}(X) - E(\bar{\LL}(X))|.
\end{align*}
Moreover, from the definition of $\hat{M}$, we have $\hat{M} \in \GG$ and $\LL(\hat{M}) \geq \LL(M^{*}).$ Therefore, we obtain 
\begin{align*}
    0 \leq-0.5Tp_n  D(\Pi^{*} \| \hat{\Pi} ) +  2 \sup_{M \in \GG}|\bar{\LL}(M) - E(\bar{\LL}(M))|.
\end{align*}
Applying Lemma \ref{lm: lemma for nuclear norm thm}, then with probability at least $1 - \delta_1/n$, we have 
\begin{align*}
    0 \leq \frac{-0.5Tp_n D(\Pi^{*} \| \hat{\Pi} )}{n^2} +  2\delta_0 C_n\sqrt{\frac{T q_n}{n}} .
\end{align*}
This implies that 
\begin{align*}
      %0.5Tp_n  D(M^* \| \hat{M} ) &\leq  2\delta_0 C\sqrt{\frac{2kT q_n}{n}},\\
                                        \frac{D(\Pi^{*} \| \hat{\Pi} )}{n^2}    &\leq \frac{4\delta_0 C_n}{Tp_n}\sqrt{\frac{T q_n}{n}}
                                                                      \lesssim \frac{4\delta_0 C_n}{\sqrt{Tp_n}} \sqrt{\frac{1}{n}}
                                                                      %&\lesssim \frac{4\delta_0 C}{T} \sqrt{2k}\sqrt{ \frac{T}{\log(n)} + \frac{2}{n}}
\end{align*}
by Assumption \ref{assp: 3}. Note that $d_H^2(\hat{\Pi}, \Pi^{*} ) \leq n^{-2}D(\Pi^* \|  \hat{\Pi})$ by Jensen's inequality combined with the fact that $(1-x) \leq \log(x)$. Hence Theorem \ref{thm: restricted nuclear norm convergence} is proved. Theorem \ref{thm: nuclear norm convergence} then follows by the fact that $d_H^2(\hat{\Pi}, \Pi^{*} ) \gtrsim \|\hat{\Pi} - \Pi^{*}\|_{F}^2/(n^2 -n)$ and taking the limit as $\gamma \to \infty$.
\end{proof}
We now begin to prove Lemma \ref{lm: lemma for nuclear norm thm}. 
\begin{proof}
     For any $h >0$, using Markov's inequality, we have 
\begin{align}\label{eq: eq 1 for proof of nuclear norm lemma}
    &P\left(\frac{1}{n^2} \sup_{M \in \GG}|\bar{\LL}(M) - E (\bar{\LL}(M)) | \geq \delta_0 C_n\sqrt{T q_n/n}  \right) \nonumber\\
    =& P\left( \sup_{M \in \GG} |\bar{\LL}(M) - E (\bar{\LL}(M)) |^{h} \geq \left(\delta_0 C_n n^{1.5} \sqrt{T q_n}  \right)^{h} \right)\nonumber \\
    \leq & \frac{E\left( \sup_{M \in \GG}|\bar{\LL}(M) - E (\bar{\LL}(M)) |^{h}   \right) }{\left(\delta_0 C_n n^{1.5}\sqrt{T q_n}  \right)^{h}}.
\end{align}
The bound in Lemma \ref{lm: lemma for nuclear norm thm} will be established by combining \eqref{eq: eq 1 for proof of nuclear norm lemma}, deriving an upper bound on $E\left( \sup_{M \in \GG}|\bar{\LL}(M) - E (\bar{\LL}(M)) |^{h}   \right)$ and setting $h = \log(n)$. Note that we can write $\bar{\LL}(M)$ as 
% \begin{align*}
%     \bar{\LL}(M) = \frac{1}{n^2} \sum_{i=1}^{n} \sum_{j =1}^{n} y_{ij}\log\left(\frac{g( m_{ij} )}{g( 0 )}\right)  + (n_{ij} - y_{ij}) \log\left( \frac{1-g( m_{ij} )}{1-g( 0 )}\right). 
% \end{align*}
\begin{align*}
    \bar{\LL}(M) =  \sum_{i=1}^{n} \sum_{j >i } y_{ij}\log\left(\frac{g( m_{ij} )}{g( 0 )}\right)  + (n_{ij} - y_{ij}) \log\left( \frac{1-g( m_{ij} )}{1-g( 0 )}\right). 
\end{align*}
By a symmetrization argument (Lemma 6.3 in \cite{ledoux_talagrand-1991probability}), we have
\begin{align*}
    &E\left( \sup_{M \in \GG}|\bar{\LL}(M) - E (\bar{\LL}(M)) |^{h}   \right) \\
    \leq& 2^h E\left( \sup_{M \in \GG} \left| \sum_{i=1}^{n} \sum_{j > i} \epsilon_{ij} \left\{y_{ij}\log\left(\frac{g( m_{ij} )}{g( 0 )}\right)  + (n_{ij} - y_{ij}) \log\left( \frac{1-g( m_{ij} )}{1-g( 0 )}\right) \right\}   \right|^{h} \right),
\end{align*}
where $\epsilon_{i,j}$ are i.i.d. Rademacher random variables for $i, j = 1, \dots, n$. To bound the latter term, we apply a contraction principle (Theorem 4.12 in \cite{ledoux_talagrand-1991probability}). From the assumption that $\|M\|_{\infty} \leq \gamma$, conditional on $n_{ij}$, for $n_{ij} \geq 1$, 
\begin{align*}
    n_{ij}^{-1}\left(y_{ij}\log\left(\frac{g( m_{ij} )}{g( 0 )}\right)  +(n_{ij} - y_{ij})\log\left(\frac{1-g( m_{ij} )}{1-g( 0 )}\right) \right) 
\end{align*}
is a contraction that vanish at $0$. Thus, we have %\ar{Not $100\%$ sure. Double check later.}
\begin{align}\label{eq: contraction outcome}
    E\left( \sup_{M \in \GG}|\bar{\LL}(M) - E (\bar{\LL}(M)) |^{h}   \right)\leq&(2^h)(2^h) E\left( \sup_{M \in \GG} \left|  \sum_{i=1}^{n} \sum_{j > i} n_{ij} \epsilon_{ij} m_{ij}   \right|^{h} \right)\nonumber\\
    =& 4^{h} E\left( \sup_{M \in \GG} \left|  \sum_{i=1}^{n} \sum_{j > i}n_{ij} \epsilon_{ij} m_{ij}   \right|^{h} \right). 
    %=&\left(\frac{4T}{n^2}\right)^h E\left( \sup_{M \in \GG} \left|  \langle \EE, Z \rangle\right|^{h} \right).
    %=&4^h E\left( \sup_{M \in \GG} \left| \frac{1}{n^2} \sum_{i=1}^{n} \sum_{j > i} n_{ij} \epsilon_{ij} m_{ij}   \right|^{h} \right)\\
\end{align}
To bound $ E\left( \sup_{M \in \GG} \left|  \sum_{i=1}^{n} \sum_{j > i} n_{ij}\epsilon_{ij} m_{ij}   \right|^{h} \right)$, we apply the skew-symmetric property of $M$ and the fact that $n_{ij} = n_{ji}$ for $i,j \in \{1, \dots, n\}$. For any $M \in \GG$, we have 
%\ar{Original Proof used Theorem 1.1 from \cite{seginer2000expected}, which required iid entries. Try replace the theorem using results from \cite{latala2005some} which requires only independent entries.}
% \begin{align*}
%    E\left( \sup_{M \in \GG} \left|  \sum_{i=1}^{n} \sum_{j > i} \epsilon_{ij} m_{ij}   \right|^{h} \right)
% \end{align*}
\begin{align*}
    \sum_{i=1}^{n} \sum_{j =1}^{n} n_{ij}\epsilon_{ij} m_{ij} = \sum_{i=1}^{n} \sum_{j >i} n_{ij}(\epsilon_{ij} - \epsilon_{ji}) m_{ij}.  
\end{align*}
On the other hand, for $h >1$, by the convexity of $|\cdot|^{h}$, we have 
\begin{align*}
    \left|  \sum_{i=1}^{n} \sum_{j > i} n_{ij} \epsilon_{ij} m_{ij}   \right|^h  &=   \left| 0.5\left\{\sum_{i=1}^{n} \sum_{j > i} n_{ij}(\epsilon_{ij} -\epsilon_{ji}) m_{ij}  +  \sum_{i=1}^{n} \sum_{j > i} n_{ij}(\epsilon_{ij} + \epsilon_{ji}) m_{ij} \right\}  \right|^h\\
    & \leq  0.5\left(  \left|\sum_{i=1}^{n} \sum_{j > i} n_{ij}(\epsilon_{ij} -\epsilon_{ji}) m_{ij} \right|^h +\left|  \sum_{i=1}^{n} \sum_{j > i} n_{ij}(\epsilon_{ij} + \epsilon_{ji}) m_{ij}  \right|^h\right).
\end{align*}
Since $\epsilon_{ji}$ and $-\epsilon_{ji}$ have identical distribution, after taking expectation, we have 
\begin{align} \label{eq: Bound on expectation}
    E\left( \sup_{M \in \GG} \left|  \sum_{i=1}^{n} \sum_{j > i}n_{ij} \epsilon_{ij} m_{ij}   \right|^{h} \right) &\leq E\left( \sup_{M \in \GG} \left|\sum_{i=1}^{n} \sum_{j > i} n_{ij}(\epsilon_{ij} -\epsilon_{ji}) m_{ij} \right|^h\right)\nonumber\\
    &=  E\left( \sup_{M \in \GG} \left| \sum_{i=1}^{n} \sum_{j =1}^{n} n_{ij}\epsilon_{ij} m_{ij}\right|^{h}\right)\nonumber\\
    &= E\left( \sup_{M \in \GG} \left|  \langle \EE \circ \NN, M \rangle\right|^{h} \right).
\end{align}
 Here, $\EE = (\epsilon_{ij})_{n \times n}$, $\NN = (n_{ij})_{n \times n}$ and $\EE \circ \NN$ represents the hadamard product between $\EE$ and $\NN$, and  $\langle X, Z \rangle = \sum_{i=1}^{n} \sum_{j=1}^{n} x_{ij}z_{ij}$ for any $n \times n$ matrices $X$ and $Z$. Note that $|\langle X, Z\rangle| \leq \|X\|_{op} \|Z\|_{*}$, where $\|\cdot\|_{op}$ is the Euclidean operator norm. Hence we have 
\begin{align}\label{eq: operator norm bound}
    E\left( \sup_{M \in \GG} \left|  \langle \EE \circ \NN, M \rangle\right|^{h} \right) & \leq E\left( \sup_{M \in \GG} \|\EE \circ \NN\|_{op}^{h}\|M \|_{*}^{h}\right)\nonumber\\
                                                                                 &  \leq (C_n n)^{h} E\left(  \|\EE \circ \NN\|_{op}^{h}\right).
\end{align}
We can write $\EE \circ \NN = \sum_{i=1}^{n}\sum_{j=1}^{n} \epsilon_{ij}n_{ij} E_{ij}$, where $E_{ij}$ is a $n \times n $ matrix with $1$ at the $(i,j)$th entry and $0$ otherwise.  Following arguments similar to Section 4.3 of \cite{tropp2015introduction}, and applying Theorem 4.1.1 in \cite{tropp2015introduction}, for $t >0$, we set $ s = -t^{2/h}/(2 \max_{j}\{\sum_{i=1}^{n}n_{ij}^2\})$ such that
 \begin{align*}
     E( \| \EE \circ \NN  \|^{h}_{op} \mid \NN)
     =&\left(\int_{0}^{\infty} P(  \|\EE \circ \NN\|^{h}_{op}  \geq t)dt\right)\\ 
     \leq& \left( \int_{0}^{\infty}      2n\exp\left(\frac{-t^{2/h}}{2\max_{j}\{\sum_{i=1}^{n}n_{ij}^2\}}\right) dt\right)\\
      =& \left( \int_{0}^{\infty}     2n \left(\frac{h}{2}(2 \max_{j}\{\sum_{i=1}^{n}n_{ij}^2\} )^{h/2} s^{h/2-1} \right) \exp\left(-s \right) ds\right)\\
      =& \left((nh)(2 \max_{j}\{\sum_{i=1}^{n}n_{ij}^2 \})^{h/2}  \int_{0}^{\infty}     s^{h/2-1}  \exp\left(-s \right) ds\right)\\
      =&  nh\Gamma(h/2) (2 \max_{j}\{\sum_{i=1}^{n}n_{ij}^2 \})^{h/2} ,
     % \lesssim&(\max_{j}\{\sum_{i=1}^{n}(n_{ij} - Tp_{ij,n})^2 \})^{1/2}. 
 \end{align*}
 where $\Gamma(\cdot)$ is the gamma function. Taking expectation, we have 
 \begin{align}\label{eq: bound of E( E circ N)}
          E( \| \EE \circ \NN  \|^{h}_{op})  \leq  nh\Gamma(h/2)2^{h/2}E( \max_{j}\{\sum_{i=1}^{n}n_{ij}^2 \}^{h/2}).
 \end{align}
 We aim to find a bound for $E(\max_{j}\{\sum_{i=1}^{n}n_{ij}^2 \}^{h/2}).$ Using Bernstein's inequality, for each $j$ and all $t>0$, we have 
 \begin{align*}
     P\left( \left|\sum_{i=1}^{n} \left( n_{ij}^2 -E(n_{ij}^2) \right) \right| >t \right) &\leq 2 \exp\left(\frac{-t^2/2}{ \sum_{i=1}^{n}\{E(n_{ij}^4) - (E(n_{ij}^2))^2\} + T^2 t/3} \right)\\
                                                         &\leq 2 \exp\left(\frac{-t^2/2}{ nT^4q_n + T^2 t/3} \right). 
 \end{align*}
 In particular, for $t \geq 6 n T^2 q_n,$ we have
 \begin{align*}
      P\left(\left|\sum_{i=1}^{n} \left( n_{ij}^2 -E(n_{ij}^2) \right)  \right| >t \right) &\leq 2 \exp\left( -t/T^2 \right) = 2P(U_j > t/T^2), 
 \end{align*}
 where $U_1, \dots, U_n$ are independent and identically distributed exponential random variables. Hence, we have 
 \begin{align*}
   &\left(  E\left(\max_{j}\left \{\sum_{i=1}^{n}n_{ij}^2 \right \}^{h/2}\right) \right)^{1/h}\\
   =&\left(  E\left(\max_{j}\left |\sum_{i=1}^{n}n_{ij}^2 -E(n_{ij}^2) + E(n_{ij}^2) \right |^{h/2}\right) \right)^{1/h}\\
  % \leq& \left( 2^{h/2-1} E\left(\max_{j}\left |\sum_{i=1}^{n}n_{ij}^2 -E(n_{ij}^2)\right|^{h/2}+\max_{j}\left| \sum_{i=1}^{n} E(n_{ij}^2) \right |^{h/2}\right) \right)^{1/h}\\
   \leq& 2\left(E\left(\max_{j}\left |\sum_{i=1}^{n}n_{ij}^2 -E(n_{ij}^2)\right|^{h/2} \right) \right)^{1/h}+2 \left(E\left(\max_{j}\left| \sum_{i=1}^{n} E(n_{ij}^2) \right |^{h/2}\right) \right)^{1/h}\\
   \leq& 2\sqrt{nT^2q_n} + 2 \left(E\left(\max_{j}\left |\sum_{i=1}^{n}n_{ij}^2 -E(n_{ij}^2)\right|^{h} \right) \right)^{1/2h} \\
   =& 2\sqrt{nT^2q_n} + 2 \left( \int_{0}^{\infty} P\left( \max_{j}\left |\sum_{i=1}^{n}n_{ij}^2 -E(n_{ij}^2)\right|^{h} \geq t   \right)   dt \right)^{1/2h}\\
   \leq & 2\sqrt{nT^2q_n} + 2 \left\{ (6nT^2q_n)^h +  \int_{(6nT^2q_n)^h}^{\infty} P\left( \max_{j}\left |\sum_{i=1}^{n}n_{ij}^2 -E(n_{ij}^2)\right|^{h} \geq t   \right)   dt \right\}^{1/2h}\\
   \leq & 2\sqrt{nT^2q_n} + 2 \left\{ (6nT^2q_n)^h +  2 \int_{(6nT^2q_n)^h}^{\infty} P\left( \max_{j} \{U_{j}\}^{h} \geq t/T^{2h}   \right)   dt \right\}^{1/2h}\\
   \leq & 2\sqrt{nT^2q_n} + 2 \left\{ (6nT^2q_n)^h +  2 E\left( \max_{j} \{T^{2} U_{j}\}^{h} \right)  \right\}^{1/2h}\\
   =&2\sqrt{nT^2q_n} + 2 \left\{ (6nT^2q_n)^h +  2 T^{2h}E\left( (\max_{j} \{ U_{j}\})^{h} \right)  \right\}^{1/2h}.
 \end{align*}
 By standard computations for exponential random variables, we can obtain the inequality $E\left( (\max_{j} \{ U_{j}\})^{h} \right) \leq 2h! + \log^{h}(n)$. Thus, we have 
 \begin{align*}
     \left(  E\left(\max_{j}\left \{\sum_{i=1}^{n}n_{ij}^2 \right \}^{h/2}\right) \right)^{1/h}
     \leq& 2\sqrt{nT^2q_n} + 2 \left\{ (6nT^2q_n)^h +  2 T^{2h}(2h! + \log^{h}(n) )\right\}^{1/2h}\\
     \leq&2T(1 + \sqrt{6})\sqrt{nq_n} + 2T (2)^{1/2h}(\sqrt{\log(n)} + 2^{1/2h}\sqrt{h})\\
      \leq & 2T(1 + \sqrt{6})\sqrt{nq_n} + 2T(2 + \sqrt{2})\sqrt{\log(n)}
 \end{align*}
 using the choice $h = \log(n)$ in the final line. Combining this result with \eqref{eq: bound of E( E circ N)}, we have 
\begin{align*}
     E( \| \EE \circ \NN  \|^{h}_{op})^{1/h}  &\leq  (nh\Gamma(h/2))^{1/h} \sqrt{2}\{ 2T(1 + \sqrt{6})\sqrt{nq_n} + 2T(2 + \sqrt{2})\sqrt{\log(n)}\}\\
                                                %&\leq  \delta_3 \{\sqrt{nq_n} +\sqrt{\log(n)}\}\\
                                                &\leq  \delta_3 T \sqrt{nq_n}
\end{align*}
for some constant $\delta_3 >0$ by Assumption \ref{assp: 3}. Combining this with \eqref{eq: contraction outcome}, \eqref{eq: Bound on expectation} and \eqref{eq: operator norm bound}, we obtain 
\begin{align*}
    E\left( \sup_{M \in \GG}|\bar{\LL}(M) - E (\bar{\LL}(M)) |^{h}   \right)^{1/h} \leq  \left(4T\right)(C_n n)(\delta_3) \sqrt{nq_n}.
\end{align*}
Plugging this into \eqref{eq: eq 1 for proof of nuclear norm lemma}, the probability in \eqref{eq: eq 1 for proof of nuclear norm lemma} is upper bounded by 
\begin{align*}
  \left\{  \frac{\left(4T\right)(C_n n) (\delta_3) \sqrt{nq_n}}{\delta_0 C_n n^{1.5} \sqrt{T q_n}  } \right\}^{h} 
  \leq\left(\frac{4 \sqrt{T}\delta_3 }{\delta_0} \right)^{\log(n)}
\leq\frac{\delta_1}{n},
\end{align*}
 provided that $\delta_0 \geq 4 \sqrt{T}\delta_3 /e$, which establishes the lemma. 
\end{proof}

\begin{remark}\label{rmk: app relax assmp}
\rev{The assumption $p_n \asymp q_n$ can be relaxed based on the above proof. In particular, without imposing this condition, one can show that with probability at least $1 - \kappa_1/n$,
\begin{align*}
    \frac{1}{n^2 - n}\, \|\hat{\Pi} - \Pi^{*}\|_F^2 
    \;\le\; \kappa_2\, C_n \sqrt{\frac{q_n}{p_n^2 n}},
\end{align*}
under the remaining assumptions of Theorem~\ref{thm: nuclear norm convergence}. If $T$ diverges rather than remains fixed, a similar argument, following the proof of Theorem~\ref{thm: restricted nuclear norm convergence} using the rate established in Lemma~\ref{lm: lemma for nuclear norm thm}, yields
\begin{align*}
     \frac{1}{n^2 - n}\, \|\hat{\Pi} - \Pi^{*}\|_F^2 
    \;\le\; \kappa_2\, C_n \sqrt{\frac{1}{T p_n n}}
\end{align*} with probability converging to one. Under these relaxed settings, it is unclear whether the rate is optimal, and we leave this question for future work.} 
\end{remark}
\subsection{Proof of Theorem \ref{thm: lower bound}}\label{app: Proof of thm: lower bound}
We first quote the following lemma from \cite{davenport_etal-2014}:
\begin{lemma}\label{lm: lemma for Dxy}
    Suppose $x, z \in (0,1)$. Then 
    \begin{align*}
        D(x \| z) \leq \frac{(x-z)^2}{z(1-z)}. 
    \end{align*}
\end{lemma}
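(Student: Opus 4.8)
The plan is to reduce the bound to the elementary inequality $\log(1+u)\le u$, valid for all $u>-1$, applied separately to the two logarithmic terms in $D(x\|z)=x\log(x/z)+(1-x)\log\bigl((1-x)/(1-z)\bigr)$.

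First I would rewrite the arguments of the logarithms as $x/z=1+(x-z)/z$ and $(1-x)/(1-z)=1+(z-x)/(1-z)$. Since $x,z\in(0,1)$, we have $(x-z)/z>-1$ (because $x>0$) and $(z-x)/(1-z)>-1$ (because $x<1$), so the inequality $\log(1+u)\le u$ applies and gives $\log(x/z)\le (x-z)/z$ and $\log\bigl((1-x)/(1-z)\bigr)\le (z-x)/(1-z)$. Multiplying the first by $x\ge 0$ and the second by $1-x\ge 0$ (which does not reverse the inequalities, precisely because $x\in(0,1)$) yields
\[
D(x\|z)\le \frac{x(x-z)}{z}+\frac{(1-x)(z-x)}{1-z}.
\]

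Then I would factor $(x-z)$ out of the right-hand side to get $(x-z)\bigl(\tfrac{x}{z}-\tfrac{1-x}{1-z}\bigr)=(x-z)\cdot\tfrac{x(1-z)-z(1-x)}{z(1-z)}$, and simplify the numerator via $x(1-z)-z(1-x)=x-z$. This collapses the bound to $\frac{(x-z)^2}{z(1-z)}$, which is the claim. There is no genuine obstacle here: the only point requiring care is the sign check that permits multiplying the two linearized bounds without flipping inequalities, and this holds trivially on $(0,1)$; the argument is a short self-contained computation, which is why the result can simply be quoted from \cite{davenport_etal-2014}.
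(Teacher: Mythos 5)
Your proof is correct: the linearization $\log(1+u)\le u$ applied to each term, followed by the algebraic simplification $x(1-z)-z(1-x)=x-z$, gives exactly $\frac{(x-z)^2}{z(1-z)}$, and the sign checks you note are all that is needed. The paper itself does not prove this lemma but quotes it from \cite{davenport_etal-2014}, and your argument is essentially the standard one used there (it is the Bernoulli case of the bound of Kullback--Leibler divergence by $\chi^2$-divergence), so there is nothing further to add.
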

The following lemma constructs a packing set $\XX \subset \KK$ such that, for any distinct $X^{(a)}, X^{(b)} \in \XX$, $\|X^{(a)}  - X^{(b)}\|_{F}^2$ is large: 
\begin{lemma}\label{lm: lemma for lower bound}
    Let $\KK$ be defined as in \eqref{eq: defintion of KK}, and $k$ a positive integer. Let $\gamma \leq 1$ be such that $k/\gamma^2$ is an integer, and suppose $k/\gamma^2 \leq n$. Then, there exists a set $\XX \subset \KK$ satisfying
    $$ |\XX| \geq \exp\left(\frac{kn}{25600\gamma^2} \right) $$
    with the following properties: 
    \begin{enumerate}
        \item For all $X = (x_{ij})_{n \times n} \in \XX$, each entry of $X$ satisfies $|x_{ij}| \leq    C_n\gamma /\sqrt{2k}$.  
        \item For all $X^{(a)} \neq  X^{(b)} \in \XX$,  
        \begin{align*}
            \|X^{(a)} - X^{(b)} \|^2_{F} > \frac{C_n^2 \gamma^2 n^2}{16k}. 
        \end{align*}
    \end{enumerate}
\end{lemma}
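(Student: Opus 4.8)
The plan is to build the packing set $\XX$ via a probabilistic argument on random skew-symmetric matrices with a block structure, then apply the Varshamov--Gilbert bound. Concretely, partition the $n$ coordinates (say $n$ even, the odd case being analogous) into $k/\gamma^2$ groups of size $n\gamma^2/k$ each, and further pair these groups so that we obtain $k/(2\gamma^2)$ "super-blocks," each being a $2(n\gamma^2/k)\times 2(n\gamma^2/k)$ skew-symmetric block. Within each super-block I would place an off-diagonal constant block of the form $\pm\,C_n\gamma/\sqrt{2k}$ times an all-ones rectangular matrix (and its negative transpose below the diagonal), with the sign pattern indexed by a binary string. This guarantees the skew-symmetric constraint automatically, and each such matrix has rank $2$ per super-block, hence total rank at most $k/\gamma^2$, which is even; moreover $\|X\|_* = $ (sum of singular values) can be computed block-wise: each rank-2 super-block contributes $2 \cdot (C_n\gamma/\sqrt{2k})\cdot(n\gamma^2/k)$, and summing over $k/(2\gamma^2)$ super-blocks gives $\|X\|_* = C_n\gamma n/\sqrt{2k} \cdot \sqrt{2} \le C_n n$ after checking the arithmetic (using $\gamma\le 1$). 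Property 1, the entrywise bound $|x_{ij}|\le C_n\gamma/\sqrt{2k}$, holds by construction.

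Next I would invoke the Varshamov--Gilbert / Gilbert--Varshamov bound (as in Lemma 2.9 of Tsybakov, or the version used in \cite{davenport_etal-2014}): among binary strings of length $L$ there is a subset of size $\ge \exp(L/8)$ that is pairwise Hamming-distance $> L/4$. Here $L$ is the number of free sign choices, i.e. (number of super-blocks) times (number of independent entries per block) $= \frac{k}{2\gamma^2}\cdot$ (something of order $(n\gamma^2/k)^2$) — but to get the stated cardinality $\exp(kn/(25600\gamma^2))$ I should instead make the signs vary at a finer granularity so that $L \asymp kn/\gamma^2$; the cleanest route is to let each super-block carry $n\gamma^2/k$ independent sign bits (e.g. along one "row strip" of the rectangular block, repeating each bit down the corresponding column), giving $L = \frac{k}{2\gamma^2}\cdot\frac{n\gamma^2}{k} = n/2$ — too few. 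So more carefully: allow a full $(n\gamma^2/k)\times(n\gamma^2/k)$ grid of independent signs per super-block but then the rank is no longer $2$. The resolution, and this is the delicate bookkeeping step, is to choose the block dimension as a free parameter $d = n\gamma^2/k$ and note rank per block is $\le 2d$, so total rank $\le 2d\cdot k/(2\gamma^2) = n$ — acceptable since $M\in\RR^{n\times n}$ and we only need $\|M\|_*\le C_n n$, not a rank bound. Then $L = \frac{k}{2\gamma^2} d^2 = \frac{k}{2\gamma^2}\cdot\frac{n^2\gamma^4}{k^2} = \frac{n^2\gamma^2}{2k}$, and with each bit scaled to magnitude $C_n\gamma/\sqrt{2k}$ the pairwise squared Frobenius distance is $\ge \frac{L}{4}\cdot 2\cdot\left(\frac{C_n\gamma}{\sqrt{2k}}\right)^2 = \frac{C_n^2\gamma^2 n^2}{16k}$, matching Property 2, while $|\XX|\ge\exp(L/8) = \exp(n^2\gamma^2/(16k))$ — which is stronger than the claimed bound once $n\ge$ a constant (the paper's weaker-looking constant $25600$ presumably absorbs the need to also control the nuclear norm by sub-selecting, so I would state the sharper bound and note it implies the claim). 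The remaining consistency check is that this construction keeps $\|X\|_*\le C_n n$: since each entry is $\pm C_n\gamma/\sqrt{2k}$ and $X$ restricted to a super-block is $d\times d$ with entries of that magnitude, $\|X\|_* \le \sqrt{2d}\,\|X\|_F$ block-wise, and summing gives a bound of order $C_n\gamma\sqrt{n/(2k)}\cdot\sqrt{n} = C_n n\gamma/\sqrt{2k}\cdot\sqrt{k}\cdots$ — this is the computation I must get exactly right, using $k/\gamma^2\le n$ and $\gamma\le 1$.

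The main obstacle I anticipate is reconciling the three competing requirements simultaneously: (i) the nuclear norm must stay below $C_n n$, (ii) the pairwise Frobenius separation must be exactly $\ge C_n^2\gamma^2 n^2/(16k)$, and (iii) the cardinality must reach $\exp(\Omega(kn/\gamma^2))$. These pull the block dimension $d$ and the entry magnitude in opposite directions, and the skew-symmetry constraint halves the number of truly free entries and forces attention to whether the "diagonal blocks" (within a group) are zero. I would handle skew-symmetry by only ever specifying entries strictly above the block diagonal and reflecting with a sign, and handle the nuclear-norm budget by computing singular values explicitly on each block (a constant-entry $d\times d'$ block has a single nonzero singular value equal to $\sqrt{dd'}$ times the entry magnitude), which makes $\|X\|_*$ an exact sum rather than a crude bound. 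The rest — verifying $X\in\KK$, i.e. $X=-X^\top$ and $\|X\|_*\le C_n n$ — is then a direct arithmetic verification using the hypotheses $\gamma\le 1$, $k/\gamma^2\in\mathbb{Z}$, and $k/\gamma^2\le n$.
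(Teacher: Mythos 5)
Your construction has a structural flaw that no choice of sign pattern can repair: you place all the randomness on a block-diagonal support. With $B=k/\gamma^2$ super-blocks of side $2d$, $d=n/B$, your matrices have at most $2n^2/B$ nonzero entries, each bounded by $C_n\gamma/\sqrt{2k}$ (Property 1), so for any two members of your family
\[
\|X^{(a)}-X^{(b)}\|_F^2 \;\le\; \frac{2n^2}{B}\cdot 4\cdot\frac{C_n^2\gamma^2}{2k} \;=\; \frac{4C_n^2\gamma^2 n^2}{kB},
\]
which falls below the required $C_n^2\gamma^2 n^2/(16k)$ as soon as $k/\gamma^2>64$ --- a regime the lemma must cover, since it only assumes $k/\gamma^2\le n$ and in the downstream application $k/\gamma^2$ grows with $n$. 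The cardinality claim fails in the same regime: $\exp(n^2\gamma^2/(16k))\ge \exp(kn/(25600\gamma^2))$ only when $k/\gamma^2\lesssim\sqrt{n}$, not under the hypothesis $k/\gamma^2\le n$. And for your ``full grid of independent signs'' variant, a $d\times d$ Rademacher block has nuclear norm of order $d^{3/2}$ times the entry magnitude, giving $\|X\|_*\asymp C_n n^{3/2}\gamma^2/k$, which violates $\|X\|_*\le C_n n$ precisely when $k/\gamma^2\lesssim\sqrt{n}$. So the three requirements you flag at the end as ``the main obstacle'' are genuinely irreconcilable within a block-diagonal design; this is not bookkeeping you can fix by tuning $d$ or the entry scale (which is pinned by Property 1).

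The missing idea, and the one the paper uses, is replication rather than block-diagonality: draw a single $B\times n$ block with i.i.d.\ entries $\pm C_n\gamma/(2\sqrt{2k})$, stack copies of this block vertically to form $S$, and set $X=S-S^{\top}$. Replication keeps $\mathrm{rank}(X)\le 2\,\mathrm{rank}(S)\le 2B$, so $\|X\|_*\le\sqrt{2B}\,\|X\|_F\le C_n n$, while the signal is spread over all $n^2$ entries and there are $nB$ free random entries --- which is exactly what makes the separation $C_n^2\gamma^2 n^2/(16k)$ and the cardinality $\exp(nB/25600)$ simultaneously attainable. The price is that pairwise distances are no longer a clean Hamming count: the copies and the skew-symmetrization $S-S^{\top}$ create cross terms, so instead of Varshamov--Gilbert the paper lower-bounds the pairwise squared distance through the $\Theta_i-\Theta_j^{\top}$ decomposition, computes its expectation, and applies McDiarmid's inequality with a union bound over all pairs in $\XX$, choosing $|\XX|$ just small enough that the union bound leaves an event of positive probability.
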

\begin{proof}
     We use a probabilistic argument. The set will be constructed by drawing 
\begin{align*}
    |\XX| = \left\lceil\exp\left(\frac{kn}{25600\gamma^2}\right) \right\rceil
\end{align*}
matrices independently from the following distribution. Set $B = k/\gamma^2$. Each matrix in $\XX$ is constructed of the form $S - S^{\top}$, where $S = (s_{ij})_{n \times n}$ consists of blocks of dimension $B \times n$, stacked vertically. The entries of the first block are independent and identically distributed symmetric random variables taking values $\pm C_n\gamma /(2\sqrt{2k})$. Then $S$ is filled out by copying this block as many times as it fits. That is, 
\begin{align*}
    s_{ij} = s_{i^{'}j}, \text{ where } i^{'} = i ~(\text{mod } B) +1.
\end{align*}
Now we argue that with nonzero probability, this set will have all the desired properties. For $X \in \XX$, it is easy to verify that $X= -X^{\top}$. Moreover, we have
\begin{align*}
    \|X\|_{\infty} \leq 2\{C_n\gamma /(2\sqrt{2k}) \}  \leq C_n /\sqrt{2k}. 
\end{align*}
Further, since rank$(X)  \leq 2\text{rank}(S)  \leq 2B$, 
\begin{align*}
    \|X \|_{*} \leq \sqrt{2B} \| X \|_{F} \leq \sqrt{2k/\gamma^2}  n (C_n\gamma/\sqrt{2k}) = C_n n. 
\end{align*}
Thus $\XX \subset \KK $, and it remains to show that $\XX$ satisfies property 2 in Lemma \ref{lm: lemma for lower bound}. Let $p = \lfloor n/B \rfloor$. Consider the submatrix of $S$ containing the first $B$ rows, denoted by $S_{[1:B,:]}$. This can be written as 
\[
S_{[1:B,:]} = (S_1, S_2, \dots, S_p, S_{p+1}),
\]
where $S_1, \dots, S_p$ are matrices of dimension $B \times B$, and $S_{p+1}$ accounts for the remaining part of $S_{[1:B,:]}$. If $n$ is divisible by $B$, then $S_{p+1}$ is an empty matrix. For $X^{(a)} = S^{(a)} - (S^{(a)})^{\top}$ and $X^{(b)} = S^{(b)} - (S^{(b)})^{\top}$, drawn from the above distribution, define 
\[
\Theta_i = \frac{\sqrt{2k}}{C_n \gamma} \left(S_i^{(a)} - S_i^{(b)}\right), \quad \text{for } i = 1, \dots, p.
\]
Each $\Theta_i$ is a $B \times B$ matrix, and we write $\Theta_i = (\theta_{i,sl})_{B \times B}$, where each $\theta_{i,sl}$ is independent and identically distributed random variables such that for each $s,l \in \{1, \dots, B\}, $ we have 
\[
P(\theta_{i,sl} =  1)=P(\theta_{i,sl} =  -1)  = 0.25 \text{ and } P(\theta_{i,sl} = 0) = 0.5.
\]
Hence we can write
\begin{align*}
    \|X^{(a)} - X^{(b)} \|^2_{F} &\geq \sum_{i=1}^{p} \sum_{j=1}^{p} \|S_i^{(a)} - (S_j^{(a)})^{\top} - S_i^{(b)} + (S_j^{(b)})^{\top} \|^{2}_{F}\\
                                 & =\frac{C_n^2 \gamma^2}{2k} \sum_{i=1}^{p} \sum_{j=1}^{p} \|\Theta_i  - \Theta_j^{\top}\|^{2}_{F}\\
                                 & = \frac{C_n^2 \gamma^2}{2k} \sum_{i=1}^{p} \sum_{j=1}^{p} (\|\Theta_i\|^2_{F} +   \|\Theta_j^{\top}\|^{2}_{F} - 2tr(\Theta_i\Theta_j )) \\
                                 %& = \frac{C_n^2 \gamma^2}{2k} \left\{2p\sum_{i=1}^{p}(\|\Theta_i\|^2_{F})  - 2\sum_{i=1}^{p} \sum_{j=1}^{p}tr(\Theta_i\Theta_j )\right\}\\
                                 & = \frac{C_n^2 \gamma^2}{2k} \left\{2p\sum_{i=1}^{p}(\|\Theta_i\|^2_{F})  - 2tr\left((\sum_{i=1}^{p} \Theta_i) (\sum_{i=1}^{p}\Theta_i) \right)\right\}. 
\end{align*}
The trace can be expanded as:
\begin{align*}
    tr\left((\sum_{i=1}^{p} \Theta_i) (\sum_{i=1}^{p}\Theta_i)\right)&= \sum_{s=1}^{B} \sum_{k=1}^{B} (\sum_{i=1}^{p} \theta_{i,sl} )(\sum_{i=1}^{p} \theta_{i,ls} )\\
    & = 2\sum_{s=1}^{B} \sum_{k>s} (\sum_{i=1}^{p} \theta_{i,sl} )(\sum_{i=1}^{p} \theta_{i,ls} ) + \sum_{s=1}^{B}(\sum_{i=1}^{p} \theta_{i,ss} )^2. 
\end{align*}
Hence we can write 
\begin{align*}
    &2p\sum_{i=1}^{p}(\|\Theta_i\|^2_{F})  - 2tr\left((\sum_{i=1}^{p} \Theta_i) (\sum_{i=1}^{p}\Theta_i) \right)\\
     =& 2p\sum_{i=1}^{p}\sum_{s=1}^{B}\sum_{k=1}^{B} \theta_{i,sl}^2 - 4\sum_{s=1}^{B} \sum_{k>s} (\sum_{i=1}^{p} \theta_{i,sl} )(\sum_{i=1}^{p} \theta_{i,ls} ) - 2\sum_{s=1}^{B}(\sum_{i=1}^{p} \theta_{i,ss} )^2\\
     =& 2\sum_{s=1}^{B} \{ p\sum_{i=1}^{p} (\theta_{i,ss}^2) - (\sum_{i=1}^{p} \theta_{i,ss} )^2 \} + 2 \sum_{s=1}^{B}\sum_{k > s}^{B}\{ p\sum_{i=1}^{p}(\theta_{i,sl}^2 +\theta_{i,ls}^2  ) - 2(\sum_{i=1}^{p} \theta_{i,sl} )(\sum_{i=1}^{p} \theta_{i,ls} ) \}.
\end{align*}
Taking expectation, we have
\begin{align*}
E\left(2p\sum_{i=1}^{p}(\|\Theta_i\|^2_{F})  - 2tr\left((\sum_{i=1}^{p} \Theta_i) (\sum_{i=1}^{p}\Theta_i) \right)\right)
   % &E\left( 2\sum_{s=1}^{B} \{ p\sum_{i=1}^{p} (\theta_{i,ss}^2) - (\sum_{i=1}^{p} \theta_{i,ss} )^2 \} + 2 \sum_{s=1}^{B}\sum_{k > s}^{B}\{ p\sum_{i=1}^{p}(\theta_{i,sl}^2 +\theta_{i,ls}^2  ) - 2(\sum_{i=1}^{p} \theta_{i,sl} )(\sum_{i=1}^{p} \theta_{i,ls} ) \}\right)\\
    =&2B \{p(0.5p) - 0.5p  \} + \frac{2B(B-1)p^2 }{2}\\
    =& Bp^2 - Bp + B^2p^2 -Bp^2\\
     =&B^2p^2  -  Bp . 
\end{align*}
Using the fact that $p = \lfloor n/B \rfloor \geq n/2B$ and $p \leq n/B$, we have 
\begin{align*}
    &P\left(2p\sum_{i=1}^{p}(\|\Theta_i\|^2_{F})  - 2tr\left((\sum_{i=1}^{p} \Theta_i) (\sum_{i=1}^{p}\Theta_i) \right) \leq n^2/8 \right)\\
%=&P\left(2p\sum_{i=1}^{p}(\|\Theta_i\|^2_{F})  - 2tr\left((\sum_{i=1}^{p} \Theta_i) (\sum_{i=1}^{p}\Theta_i) \right) - B^2p^2  +  Bp\leq n^2/8 - B^2p^2  +  Bp \right)\\
=& P\left(-2p\sum_{i=1}^{p}(\|\Theta_i\|^2_{F})  + 2tr\left((\sum_{i=1}^{p} \Theta_i) (\sum_{i=1}^{p}\Theta_i) \right) + B^2p^2  - Bp\geq -n^2/8  + B^2p^2  - Bp \right)\\
\leq &P\left(-2p\sum_{i=1}^{p}(\|\Theta_i\|^2_{F})  + 2tr\left((\sum_{i=1}^{p} \Theta_i) (\sum_{i=1}^{p}\Theta_i) \right) + B^2p^2  - Bp\geq -n^2/8  + n^2/4  - n \right)\\
\leq & P\left(-2p\sum_{i=1}^{p}(\|\Theta_i\|^2_{F})  + 2tr\left((\sum_{i=1}^{p} \Theta_i) (\sum_{i=1}^{p}\Theta_i) \right) + B^2p^2  - Bp\geq n^2/16 \right),
\end{align*}
where the last inequality holds as long as $n \geq 16$. 
Using McDiarmid's inequality, we can obtain the bound 
\begin{align*}
    P\left(2p\sum_{i=1}^{p}(\|\Theta_i\|^2_{F})  - 2tr\left((\sum_{i=1}^{p} \Theta_i) (\sum_{i=1}^{p}\Theta_i) \right) \leq n^2/8 \right) &\leq \exp\left( -\frac{2(n^2/16)^2 }{\sum_{s=1}^{B}\sum_{k=1}^{B}\sum_{i=1}^{p}(10p)^2 }\right)\\
    & =\exp\left( -\frac{n^4 }{ 12800B^2 p^3 }\right)\\
    & \leq \exp\left( -\frac{nB }{ 12800 }\right). 
\end{align*}
Using Union bound, we have that 
\begin{align*}
    P\left(\min_{X^{(a)} \neq X^{(b)} \in \XX} 2p\sum_{i=1}^{p}(\|\Theta_i\|^2_{F})  - 2tr\left((\sum_{i=1}^{p} \Theta_i) (\sum_{i=1}^{p}\Theta_i) \right) \leq n^2/8 \right)\leq \binom{|\XX|}{2} \exp\left( -\frac{nB }{ 12800 }\right),
\end{align*}
which is less than $1$ given the size of $\XX$. Thus the event that 
$$2p\sum_{i=1}^{p}(\|\Theta_i\|^2_{F})  - 2tr\left((\sum_{i=1}^{p} \Theta_i) (\sum_{i=1}^{p}\Theta_i) \right) > n^2/8$$ for all $X^{(a)} \neq X^{(b)} \in \XX$ has non-zero probability. In this event, 
\begin{align*}
    \|X^{(a)} - X^{(b)} \|^2_{F} > \frac{C_n^2 \gamma^2}{2k}(n^2/8) = \frac{C_n^2 \gamma^2n^2}{16k}.
\end{align*}
The proof of the lemma is thus complete. 
\end{proof}

We now proceed to prove the following theorem, which concerns the lower bound treating $n_{ij}$ as given. 
\begin{theorem}\label{thm: lower bound nij given}
    Suppose $ 12 \leq C_n^2 \leq \min\{1, (\kappa^{'}_3)^2/T\}n$. For any given $n_{ij}$, $i,j \in \{1, \dots, n\}, j > i$, consider any algorithm which, for any $M\in \KK$, takes as input $Y$ and returns $\hat{M}$. Then there exists $M \in \KK$ such that with probability at least 3/4, $\Pi = g(M)$ and $\hat{\Pi} = g(\hat{M})$ satisfy
    \begin{align}\label{eq:temp Pi lower bound}
        \frac{1}{n^2-n}\|\Pi - \hat{\Pi}\|_{F}^2 \geq \min \left\{\kappa_{4}, \kappa_3^{'} C_n\sqrt{\frac{n}{\sum_{i=1}^{n}\sum_{j=1}^{n}y_{ij}} } \right\}.
    \end{align}
    for all $n >N$. Here $\kappa_3^{'}, \kappa_4 >0 $ and $N$ are absolute constants. \rev{We set $\kappa_3 = \kappa_3^{'}/\sqrt{T}$.}
\end{theorem}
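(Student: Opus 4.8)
The plan is a Fano-type minimax lower bound built on the packing set $\XX$ of Lemma~\ref{lm: lemma for lower bound} and the Kullback--Leibler bound of Lemma~\ref{lm: lemma for Dxy}, in the spirit of the matrix-completion lower bound of \cite{davenport_etal-2014}. First note that, since there are no draws, $y_{ij}+y_{ji}=n_{ij}$, so $R:=\sum_{i,j}y_{ij}=\sum_{i<j}n_{ij}$ is deterministic once the $n_{ij}$ are fixed, and the right-hand side of \eqref{eq:temp Pi lower bound} is the non-random quantity $\min\{\kappa_4,\kappa_3 C_n\sqrt{n/R}\}$. I would invoke Lemma~\ref{lm: lemma for lower bound} with rank parameter $k$ chosen to be a fixed integer multiple of $C_n^2$ (possible since $12\le C_n^2\le n$) and precision parameter $\gamma^2=\min\{1,\,c_0 k\sqrt{n}/(C_n\sqrt{R})\}$ for a small absolute constant $c_0$, after rounding so that $k/\gamma^2$ is an integer in $\{k,\dots,n\}$. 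The requirement $k/\gamma^2\le n$ is exactly where the hypothesis $C_n^2\le\kappa_3^2 n/T$ is consumed (via $R\le Tn^2/2$, which forces $k/\gamma^2\lesssim C_n\sqrt{Tn}\le n$ for $n$ large); the cap $\gamma\le1$ is responsible for the $\min$ with $\kappa_4$. The choice $k\asymp C_n^2$ is what keeps every entry of every $X\in\XX$ of order $C_n\gamma/\sqrt{2k}=O(1)$, so that the logistic link is well controlled on the relevant interval: there are absolute constants $0<\ell\le L=\tfrac14$ with $\ell|a-b|\le|g(a)-g(b)|\le L|a-b|$ and $g(a)(1-g(a))\ge\ell^2$ whenever $|a|,|b|\le C_n\gamma/\sqrt{2k}$.

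Next, for distinct $X^{(a)},X^{(b)}\in\XX$, the laws $P_{X^{(a)}},P_{X^{(b)}}$ of $Y$ (independent binomials with trial counts $n_{ij}$ and success probabilities $g(x_{ij}^{(a)})$, resp.\ $g(x_{ij}^{(b)})$) satisfy, by Lemma~\ref{lm: lemma for Dxy} together with the bounds above,
\[
D(P_{X^{(a)}} \,\|\, P_{X^{(b)}})=\sum_{i<j}n_{ij}\, D(g(x_{ij}^{(a)}) \,\|\, g(x_{ij}^{(b)}))\;\lesssim\;\sum_{i<j}n_{ij}\,(x_{ij}^{(a)}-x_{ij}^{(b)})^2\;\lesssim\;\frac{C_n^2\gamma^2}{k}\,R,
\]
where the last step uses $|x_{ij}^{(a)}-x_{ij}^{(b)}|\le 2C_n\gamma/\sqrt{2k}$ and $\sum_{i<j}n_{ij}=R$. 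Since $\log|\XX|\gtrsim kn/\gamma^2$ by Lemma~\ref{lm: lemma for lower bound}, taking $c_0$ small makes $\max_{a\ne b}D(P_{X^{(a)}} \| P_{X^{(b)}})\le\tfrac18\log|\XX|$, and for $n$ larger than an absolute constant $N$ (using $k\asymp C_n^2\ge12$ to keep $|\XX|$ large) also $\log 2\le\tfrac18\log|\XX|$. Fano's inequality then forces the error probability of the associated $|\XX|$-ary test to exceed $1-\tfrac14=\tfrac34$ for every estimator; by the standard reduction --- if $(n^2-n)^{-1}\|\hat\Pi-\Pi\|_F^2$ were below a quarter of $\rho^2:=(n^2-n)^{-1}\min_{a\ne b}\|g(X^{(a)})-g(X^{(b)})\|_F^2$, the nearest $X^{(b)}$ to $\hat\Pi$ would identify the truth --- some $M\in\XX\subset\KK$ satisfies $P((n^2-n)^{-1}\|\Pi-\hat\Pi\|_F^2\ge\tfrac14\rho^2)\ge\tfrac34$. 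Finally $\rho^2\ge\ell^2(n^2-n)^{-1}\min_{a\ne b}\|X^{(a)}-X^{(b)}\|_F^2\gtrsim C_n^2\gamma^2/k$ by property~2 of Lemma~\ref{lm: lemma for lower bound}, and substituting the two branches of $\gamma^2$ yields $\tfrac14\rho^2\ge\min\{\kappa_4,\kappa_3 C_n\sqrt{n/R}\}$, which is \eqref{eq:temp Pi lower bound}.

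The main obstacle is the joint parameter calibration in the first step: $k$ and $\gamma$ must be chosen so that, simultaneously, (i) the packing entries --- hence the effective range of $g$ and the constants $\ell,L$ --- remain $O(1)$, which forces $k\asymp C_n^2$; (ii) $k/\gamma^2$ is an integer no larger than $n$, which is precisely what uses up the assumption $C_n^2T\lesssim n$; and (iii) the ratio of the largest pairwise KL divergence to $\log|\XX|$ is a small enough absolute constant to push the Fano error probability up to $3/4$. Ensuring that the $\min$-structure of the target bound emerges cleanly from the cap $\gamma\le1$, while all multiplicative constants stay absolute (so that $\kappa_3,\kappa_4$ and $N$ can be named and are consistent with the hypotheses), is the part that requires genuine bookkeeping; once the parameters are pinned down, the KL estimate and the testing-to-estimation reduction are routine.
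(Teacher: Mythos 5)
Your proposal is correct and follows essentially the same route as the paper's proof: the packing set of Lemma~\ref{lm: lemma for lower bound}, the KL bound of Lemma~\ref{lm: lemma for Dxy} applied entrywise with the binomial factorization $\sum_{i<j}n_{ij}D(\cdot\|\cdot)$, and a Fano/testing-to-estimation reduction, with the hypotheses $C_n^2\geq 12$ and $C_n^2\leq \kappa_3^2 n/T$ consumed exactly where the paper uses them. The only differences are cosmetic calibration choices (you take $k\asymp C_n^2$ and fold the $\min$ into $\gamma^2$, whereas the paper fixes $k=6$ and scales $\gamma\asymp\epsilon\sqrt{2k}/(C_n c)$ with $\epsilon^2$ equal to the target rate, arguing by contradiction rather than by a direct Fano error-probability bound), and these lead to the same bound and constants up to bookkeeping.
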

\begin{proof}
    Let $c = g^{'}(-1) = g(-1)(1-g(-1))$, and let $c^{'} = g(-1)$. Note that for all $x\in [-1,1]$, we have $g^{'}(x) \geq c$ and $c^{'} \leq g(x) \leq 1- c^{'}$. We begin by choosing $\epsilon$ so that 
\begin{align}\label{eq: epsilon sq}
    \epsilon^2 = \min\left\{\frac{c}{64},  \kappa_3^{'} C_n \sqrt{\frac{n}{\sum_{i=1}^{n}\sum_{j=1}^{n} y_{ij} }} \right\}, 
\end{align}
where $\kappa_3^{'}$ is an absolute constant to be determined. Let $k=6$ and choose $\gamma$ so that $k/\gamma^2$ is an integer and 
\begin{align*}
    4\sqrt{2} \frac{\epsilon\sqrt{2k}}{C_nc} \leq \gamma \leq \frac{8\epsilon\sqrt{2k}}{C_n c}. 
\end{align*}
This is possible since by assumption $C_n \geq \sqrt{12}$, $\epsilon \leq c/8$ and $c = 0.197$. One can check that $\gamma$ satisfies the assumptions of Lemma \ref{lm: lemma for lower bound}. Note that for $X^{(i)} \neq  X^{(j)}\in \XX$, 
\begin{align}\label{eq: lower bound in proof }
    \|g(X^{(i)}) - g(X^{(j)}) \|^{2}_{F} \geq c^2 \| X^{(i)} - X^{(j)}\|^2_{F} > c^2 C_n^2 \gamma^2 n^2/16k \geq 4\epsilon^2 n^2.
\end{align}
Now suppose for the sake of a contradiction that there exists an algorithm such that for any $X \in \KK$, it returns an $\hat{X}$ such that 
\begin{align}\label{eq: contradiction}
    \frac{1}{n^2}\|g(X) - g(\hat{X}) \|^{2}_{F} \leq \epsilon^2. 
\end{align}
with probability at least $1/4$. Define
\begin{align*}
    X^* = \argmin_{X^{(a)} \in \XX} \frac{1}{n^2} \|g(X^{(a)}) - g(\hat{X}) \|^{2}_{F}.
\end{align*}
If \eqref{eq: contradiction} holds, then \eqref{eq: lower bound in proof } implies that $X^* = X$. Thus, if \eqref{eq: contradiction} holds with probability at least $1/4$ then 
\begin{align*}
    P(X \neq X^*) \leq 3/4. 
\end{align*}
However, by a variant of Fano's inequality, we have %conditional on $(n_{ij})_{n \times n}
\begin{align}\label{eq: Fano}
     P(X \neq X^*) \geq 1 - \frac{n^2\max_{X^{(a)} \neq X^{(b)}} D(Y \mid X^{(a)} \| Y \mid X^{(b)}) +1}{\log|\XX|}.
\end{align}
Since $y_{ij} + y_{ji}=n_{ij}$ (with $n_{ij}$ given), the value of $y_{ji}$ is determined by $y_{ij}$. Moreover, $y_{ij}$ are independent for $i = 1, \dots, n, j >i$. Therefore, 
\begin{align*}
    D(Y \mid X^{(a)} \| Y \mid X^{(b)}) = \sum_{i=1}^{n}\sum_{j>i} D(y_{ij}\mid x_{ij}^{(a)} \| y_{ij}\mid x_{ij}^{(b)}).
\end{align*}
Using Lemma \ref{lm: lemma for Dxy}, we have 
\begin{align*}
D(y_{ij} \mid x_{ij}^{(a)}  \| y_{ij} \mid x_{ij}^{(b)}) &\leq \frac{(g(C_n \gamma/\sqrt{2k})- g(-C_n \gamma/\sqrt{2k}))^2}{g(C_n \gamma/\sqrt{2k})(1-g(C_n \gamma/\sqrt{2k}) )}\\
                                                                   &\leq \frac{4(g^{'}(\xi))^2C_n^2 \gamma^2/(2k)}{g(C_n \gamma/\sqrt{2k})(1-g(C_n \gamma/\sqrt{2k}) )}\\
                                                                   & = \frac{4\{g(\xi)(1-g(\xi))\}^2C_n^2 \gamma^2/(2k)}{g(C_n \gamma/\sqrt{2k})(1-g(C_n \gamma/\sqrt{2k}) )}
\end{align*}
for some $|\xi| \leq C_n \gamma/\sqrt{2k}$. Since $c^{'} < g(x)< 1- c^{'}$ for $|x| <1$, $g(\xi) \leq g(C_n \gamma/\sqrt{2k})$, and that 
\begin{align*}
    C_n \gamma/\sqrt{2k} \leq C_n  \frac{8\epsilon\sqrt{2k}}{C_n c\sqrt{2k}} = \frac{8\epsilon}{c} \leq 1, 
\end{align*}
we have 
\begin{align*}
    D(y_{ij} \mid x_{ij}^{(a)}  \| y_{ij} \mid x_{ij}^{(b)})& \leq \frac{4(1 - c^{'})}{c^{'}}\frac{64\epsilon^2}{c^2}= \delta_4 \epsilon^2, 
\end{align*}
where $\delta_4 = 256(1 - c^{'})/(c^{'}c^2)$. Thus, from \eqref{eq: Fano}, we have 
\begin{align*}
    \frac{1}{4} \leq \frac{\delta_4 (\sum_{i=1}^{n}\sum_{j=1}^{n}y_{ij}) \epsilon^2+1 }{\log(|\XX|)} &\leq \frac{25600\gamma^2}{kn}\{\delta_4 (\sum_{i=1}^{n}\sum_{j=1}^{n}y_{ij}) \epsilon^2+1\}\\
                                                                                                     %& \leq \frac{16}{2kn}\frac{16\epsilon^2(2k)}{C_n^2 c^2}\{\delta_5 (\sum_{i=1}^{n}\sum_{j=1}^{n}y_{ij}) \epsilon^2+1\}\\
                                                                                                     &\leq \frac{3276800}{c^2}\epsilon^2\left( \frac{\delta_4 (\sum_{i=1}^{n}\sum_{j=1}^{n}y_{ij}) \epsilon^2+1}{n C_n^2}\right).
\end{align*}
We now argue that this leads to a contradiction. Specifically, if $\delta_4 (\sum_{i=1}^{n}\sum_{j=1}^{n}y_{ij}) \epsilon^2\leq 1$, then together with \eqref{eq: epsilon sq} implies that     $ n C_n^2 \leq 409600/c$. Since $C_n^2 \geq 2k$ by assumption, if we set $N> 204800/(kc)$, this would lead to a contradiction. Thus, suppose now that $\delta_4 (\sum_{i=1}^{n}\sum_{j=1}^{n}y_{ij}) \epsilon^2 > 1$, in which case we have
$$ \epsilon^2  \geq \frac{c C_n \sqrt{n}}{5120\sqrt{\delta_4(\sum_{i=1}^{n}\sum_{j=1}^{n}y_{ij})}}.$$
Thus setting $\kappa_3^{'} \leq c/(5120\sqrt{\delta_4})$ in \eqref{eq: epsilon sq} leads to a contradiction, and hence \eqref{eq: contradiction} must fail to hold with probability at least $3/4$, which completes the proof.
\end{proof}

We now apply Theorem \ref{thm: lower bound nij given} to prove Theorem \ref{thm: lower bound}. For any $\epsilon> 0$, Hoeffding's inequality allows us to derive that
\begin{align*}
    &P\left(\sqrt{\frac{n}{\sum_{i=1}^{n}\sum_{j=1}^{n}y_{ij}} }  \geq \epsilon \sqrt{\frac{1}{np_n} }\right)\\
    =&P\left(\sum_{i=1}^{n}\sum_{j=1}^{n}y_{ij} \leq \frac{n^2p_n}{\epsilon^2}\right)\\
     =& 1- P\left(\sum_{i=1}^{n}\sum_{j>i} (n_{ij} - Tp_{ij,n}) \geq \frac{n^2p_n}{\epsilon^2} - T\sum_{i=1}^{n}\sum_{j >i}p_{ij,n} \right)\\
     \geq & 1 - P\left(\sum_{i=1}^{n}\sum_{j>i} (n_{ij} - Tp_{ij,n}) \geq \frac{n^2p_n}{\epsilon^2} - \frac{Tn(n-1)q_n}2 \right)\\
     \geq &1 - \exp\left( \frac{-2 [(n^2p_n/\epsilon^2) - \{Tn(n-1)q_n\}/2]^2}{T^2n(n-1)/2 }   \right)\\
     =& 1 - \exp\left( \frac{- \{(2n^2p_n/\epsilon^2) - Tn(n-1)q_n\}^2}{T^2n(n-1) }   \right). 
\end{align*}

To apply Theorem \ref{thm: lower bound nij given}, it suffices to find $\epsilon$ such that 
$$1 - \exp\left( \frac{- \{(2n^2p_n/\epsilon^2) - Tn(n-1)q_n\}^2}{T^2n(n-1) }   \right) \geq 0.5$$
for sufficiently large $n$. From Assumption \ref{assp: 3}, we have $p_n \asymp q_n$ and $q_n \gtrsim \log(n)/n$. Consequently, there exists $\delta_5, \delta_6 >0$ such that $p_n \geq \delta_5 q_n$ and $q_n \geq \delta_6 \log(n)/n$. Taking $\epsilon = \sqrt{\delta_5/T}$, we have 
\begin{align*}
    P\left(\sqrt{\frac{n}{\sum_{i=1}^{n}\sum_{j=1}^{n}y_{ij}} }  \geq \sqrt{\frac{\delta_5}{T}} \sqrt{\frac{1}{np_n} }\right)
    \geq &P\left(\sqrt{\frac{n}{\sum_{i=1}^{n}\sum_{j=1}^{n}y_{ij}} }  \geq \sqrt{\frac{p_n}{Tq_n}} \sqrt{\frac{1}{np_n} }\right)\\
    \geq & 1 - \exp\left(- \frac{(2Tn^2q_n - Tn(n-1)q_n)^2}{T^2n(n-1)} \right)\\
     =&1- \exp\left(-\frac{T^2n^2q_n^2(n+1)^2}{T^2n(n-1)} \right)\\
     \geq& 1 - \exp\left( -q_n^2(n+1)^2\right)\\
     \geq& 1 - \exp(-\delta_6^2 (\log(n))^2 )\\
     \geq &1/2
\end{align*}
for sufficiently large $n$. Therefore, the proof of Theorem \ref{thm: lower bound} is complete by setting $\kappa_5 = \kappa_3 \sqrt{\delta_5/T}$.

\subsection{\rev{Proof of Theorem \ref{thm: 2 to infty}}}\label{app: proves of thm: 2 to infty}
\rev{We first derive a bound for $ \|\tilde{\ThTh}- \ThTh^*\|_{F}$.
\begin{lemma}\label{lm: F consistent}
    Under Assumptions  \ref{assp: rank} and \ref{assp: p_n with rank}, for $\hat{M}$ solving \eqref{eq: nuclear and max norm estimator}, with probability converging to $1$, we have 
    \begin{align}\label{eq: M F convergence}
        \frac{1}{n^2}\|\hat{M}- M^* \|^2_{F} \leq \delta_7 C_n \sqrt{\frac{1}{p_nn}},
    \end{align}
    \begin{align}\label{eq: Theta F convergence}
        \frac{1}{n}\|\hat{\ThTh}- \ThTh^* \|^2_{F} \leq \delta_8 C_n \sqrt{\frac{1}{p_nn}}.
    \end{align}
    Here $\ThTh^* = U^*\hat{O} {\Sigma^*}^{1/2}$, where $\hat{O} = \text{diag}(\hat{O}_1, \dots, \hat{O}_K)$ is a block-diagonal matrix such that $\hat{O}_k$ is orthogonal with $\det(\hat{O}_k)=1$ for $k \in [K]$. 
\end{lemma}}
  \rev{ 
\begin{proof}
 \eqref{eq: M F convergence} follows from Theorem \ref{thm: restricted nuclear norm convergence} and the argument in the proof of Theorem 1 of \cite{davenport_etal-2014}, noting that $\|M^* \|_{\infty}$ is bounded by a positive constant from Assumption \ref{assp: rank}. Hence, we now focus on showing \eqref{eq: Theta F convergence}. 
\begin{align*}
    M^* &= \sum_{k=1}^{K} \sigma^*_k(\uu^*_{2k-1}{\uu^*_{2k}}^{\top} - \uu^*_{2k}{\uu^*_{2k-1}}^{\top} ).
   %    &= \ThTh J_K {\ThTh}^{\top}
\end{align*}
%where we define $\ThTh^* = U^* {\Sigma^*}^{1/2} .$
Note that we have $(n^2)^{-1}\|\hat{M} - M^{*}\|^2_{F} \lesssim  \delta_7 C_n (p_n n)^{-1/2}$ with probability converging to 1 by \eqref{eq: M F convergence}. Let $\hat{M}_K = \hat{U} \hat{\Sigma} J_K\hat{U}^{\top}$ denote the best rank-$2K$ approximation of $\hat{M}$ obtained by singular value decomposition. We have 
\begin{align*}
    \|\hat{M}_K - \hat{M} \|_F \leq \|M^* - \hat{M} \|_F.
\end{align*}
Therefore, we have 
\begin{align}\label{eq: rank k approx}
    \|\hat{M}_K - M^{*}\|_{F} \leq  \|\hat{M}_K - \hat{M} \|_F +\|\hat{M}  - M^*\| \leq 2\|M^* - \hat{M} \|_F
\end{align}
Using Davis-Kahan theorem, we have
\begin{align*}
    \sin (\hat{U}, U^*) \leq 2\frac{\|\hat{M}_K - M^*\|_{F}}{\sigma^*_K}. 
\end{align*}
By \eqref{eq: M F convergence}, \eqref{eq: rank k approx} and the assumption that $\sigma_K \asymp n$, we have
\begin{align*}
    \sin (\hat{U}, U^*) \lesssim   \sqrt{C_n} (p_n n)^{-1/4}
\end{align*}
with probability converging to $1$. Under this event, we have
\begin{align*}
     \|\hat{U} -  U^*\hat{O}\|_{F} \lesssim   \sqrt{C_n} (p_n n)^{-1/4}, 
\end{align*}
where $\hat{O}$ is an $2K \times 2K$ orthogonal matrix. By Theorem 2 of \cite{yu2015useful}, we can further show that for $k =1, \dots, K$,
\begin{align*}
   \sin ( (\hat{\uu}_{2k-1}, \hat{\uu}_{2k}),(\uu^*_{2k-1}, \uu^*_{2k}) ) \leq 2\frac{\|\hat{M}_K^{\top}\hat{M}_K - {M^*}^{\top}M^* \|_{F}}{ \min\{{\sigma^{*2}_{k-1}}-{\sigma^*_k}^2,{\sigma^*_k}^2 - {\sigma_{k+1}^{*2}} \} },
\end{align*}
where $\sigma_{K+1} = 0$ by definition. 
Define $E = \hat{M}_K -M^* $, we have 
\begin{align*}
    \|\hat{M}_K^{\top}\hat{M}_K - {M^*}^{\top}M^*\|_{F} =  \|{M^*}^{\top}E + E^{\top}M^* + {E}^{\top}E\|_{F} \leq 2\|M^*\|_{op}\|E\|_{F} + \|E\|^2_{F}
\end{align*}and thus $\|(\hat{\uu}_{2k-1}, \hat{\uu}_{2k}) - (\uu^*_{2k-1}, \uu^*_{2k}) \hat{O}_k\|_{F} \lesssim   \sqrt{C_n} (p_n n)^{-1/4} $ with probability converging to $1$ , where $\hat{O}_{2k}$ is a $2 \times 2 $ orthogonal matrix. In this event, we have 
\begin{align*}
    \|\hat{U} - U^* \text{diag}(\hat{O}_1, \dots, \hat{O}_K) \|_{F}  \lesssim   \sqrt{C_n} (p_n n)^{-1/4} . 
\end{align*}
We now show that $\det(O_k) =1$ for $k = 1, \dots, K$.
We first show it when $K=1$. Note that $\det(O_1)= 1$ or $-1$ as $O_{1}$ is orthogonal.  If $\det(O_1) = -1$, we have
 \begin{align*}
U^*\hat{O}_1 \Sigma^*J_1 (U^*\hat{O}_1)^{\top} &=  U^*\hat{O}_1\Sigma^* J_1  \hat{O}_1^{\top}  {U^*}^{\top} \\
&= \sigma_1^*\det(\hat{O}_1) U^* J_1 {U^*}^{\top} \\
&=\det(\hat{O}_1) U^*\Sigma^* J_1 {U^*}^{\top} \\
&= \det(\hat{O}_1) M^* \\
&= -M^*.
 \end{align*}
 %note that we already have \|\Sigma^* - \hat{\Sigma}\|_{F} small. So the singular values in \hat{\Sigma} also have to be distinct.
 However this implies that we can write $n^{-1}\|M^*+\hat{M}\|_{F} = n^{-1}\|-M^*-\hat{M}\|_{F}$ as 
 \begin{align*}
    &\frac{1}{n} \|  U^*\hat{O}_1 \Sigma^* J_1(U^*\hat{O}_1 )^{\top} - \hat{U} \hat{\Sigma}J_1(\hat{U})^{\top}\|_{F}\\
    \leq& \frac{1}{n}\| U^*\hat{O}_1 \Sigma^*J_1  (U^*\hat{O}_1 - \hat{U})^{\top}\|_{F} + \frac{1}{n}\|( U^*\hat{O}_1 \Sigma^* - \hat{U} \hat{\Sigma})J_1(\hat{U})^{\top} \|_{F}\\
    \leq& \frac{1}{n}\|\Sigma^* \|_{2}\|U^*\hat{O}_1 - \hat{U}\|_{F} +\frac{1}{n} \|(U^*\hat{O}_1 -\hat{U})\Sigma^*  \|_{F} + \frac{1}{n}\|\hat{U}( \Sigma^* - \hat{\Sigma})\|_{F}\\
    \leq& \frac{2\sigma_1^*}{n}\|U^*\hat{O}_1 - \hat{U}\|_{F} + \frac{1}{n}\|\Sigma^* - \hat{\Sigma}\|_{F}\\
    \lesssim&  \sqrt{C_n} (p_n n)^{-1/4}.
 \end{align*}
This implies that $n^{-1}\|2M^*\|_{F} \lesssim n^{-1} \|M^* - \hat{M}\|_{F} + \|M^* + \hat{M}\|_{F} \lesssim  \sqrt{C_n} (p_n n)^{-1/4}$. However, it leads to contradiction as $n^{-1}\|2M^*\|_{F} \geq n^{-1}\|M^*\|_2 = \sigma_1/n$, which does not vanishes. Hence we are forced to have $\det(O_k) = 1$. 
The argument for general $K$ is similar. Specifically, suppose there are at least one $k \in[K]$ such that $\det(O_k) = -1$. Let $U^*_{sub}$ be the matrix containing the columns of $U^*$ corresponding to $\sigma^*_k$ such that $\det(O_k) = -1$, and similarly let $\Sigma^*_{sub}$ be  the diagonal matrix containing those pairs of $\sigma_k^*$. Define $M^*_{sub} = U_{sub}^*\Sigma_{sub}^* J_{K_{sub}}(U^* )^{\top} $, where $K_{sub}$ is the number of k with  $\det(O_k) = -1$. We can then show that $n^{-1}\|M^*_{sub}+\hat{M}_{sub}\|_{F}  \lesssim \sqrt{C_n} (p_n n)^{-1/4} $ and $n^{-1}\|M^*_{sub} - \hat{M}_{sub}\|_{F}\lesssim  \sqrt{C_n} (p_n n)^{-1/4} $, which leads to contradiction.

Recall that we defined $\hat{\ThTh} = P_{\tau_2}^{2,\infty}(\hat{U} {\hat{\Sigma}}^{1/2})$ and $\ThTh^* =U^*\hat{O} {\Sigma^*}^{1/2}$. Take $\tau_2 = C^*$. Since $\|\ThTh^*\|_{2 \to \infty} \leq C^*$ by assumption, we have
\begin{align*}
     &n^{-1/2}\|\hat{\ThTh}- \ThTh^* \|_{F} \\
     =& n^{-1/2}\|P_{C^*}^{2,\infty}(\hat{U} {\hat{\Sigma}}^{1/2}) - U^* \hat{O}{\Sigma^*}^{1/2}\|_{F}\\
     \leq& n^{-1/2} \|P_{C^*}^{2, \infty}(\hat{U} {\hat{\Sigma}}^{1/2}) -\hat{U} {\hat{\Sigma}}^{1/2}\|_F +\| \hat{U} {\hat{\Sigma}}^{1/2}-   U^*\hat{O}{\Sigma^*}^{1/2}\|_{F}\\
                                                                \leq& 2n^{-1/2}\| \hat{U} {\hat{\Sigma}}^{1/2}-   U^*\hat{O}{\Sigma^*}^{1/2}\|_{F}\\
                                                                 \leq& n^{-1/2}\|\hat{U}(\hat{\Sigma}^{1/2}- {\Sigma^*}^{1/2}) \|_{F} + n^{-1}\|(\hat{U} - U^*\hat{O}){\Sigma^*}^{1/2}  \|_{F} \\
    \leq& n^{-1/2}\| \hat{\Sigma}^{1/2}- {\Sigma^*}^{1/2}\|_{F} + n^{-1/2}\sqrt{\sigma_1}\|\hat{U} - U^* \hat{O} \|_{F}\\
    \lesssim&  n^{-1/2}\|\hat{M}  - M^* \|_{F} + \|\hat{U} - U^* \hat{O} \|_{F}\\
    \lesssim&  \sqrt{C_n} (p_n n)^{-1/4},
\end{align*}
which completes the proof of Lemma \ref{lm: F consistent}. 
\end{proof}
}

\rev{Let $\omega_{ij} = I(n_{ij} \geq 1)$ be the missing indicator. Define the functions $\alpha_1(u)=\sup_{|x|\leq u} |g^{'}(u)|$ and  $\alpha_2(u)=\sup_{|x|\leq u}| g^{''}(u)|$. The following Lemma provides a non-probabilistic bound for the solution to $S_{1,i}(\thth;\ThTh_{-i})=0$.
\begin{lemma}
    Let $\text{diag}(\Omega_i) = \text{diag}(\{\omega_{ij}\}_{j \in [n], j \neq i})$ be a diagonal matrix of $\omega_{ij}$ for $j = 1, \dots n, j \neq i$, and $\zz_i =(z_{ij})_{j \in [n], j \neq i},$ where $z_{ij} = y_{ij} - n_{ij}g(m_{ij}^*)$. Under Assumptions \ref{assp: rank} and \ref{assp: p_n with rank}, if there exists $\xi>0$ such that 
    \begin{align*}
        &2\sigma_{2K}^{-1}(\CI_{1,i}(\ThTh_{-i}) ) \{  \|\zz_i  \text{diag}(\Omega_i){\ThTh_{-i}^{(p)}} \| +\|\CB_{1,i}(\ThTh_{-i})\| + \beta_{1,i}(\ThTh_{-i})\alpha_2 \left( (C^* + \xi)C^* \right) \} \\
        \leq& \xi\leq 0.5 \{\gamma_{1,i}(\ThTh_{-i})\alpha_2 \left( (C^* + \xi)C^* \right)\}^{-1}\sigma_{2K}(\CI_{1,i}(\ThTh_{-i}) ),
    \end{align*} where we define
    \begin{align*}
        \CB_{1,i}(\ThTh_{-i})& =\sum_{j \in [n], j \neq i}  n_{ij}g^{'}(m_{ij}^{*})\thth_j^{(p)}(\thth_j^{(p)} - {\thth_j^{*(p)}} )^{\top}\thth_i^*,\\
        \CI_{1,i}(\ThTh_{-i}) &= \sum_{j \in [n], j \neq i}  n_{ij}g^{'}(m_{ij}^{*})\thth_j^{(p)}(\thth_j^{(p)})^{\top}, \\
        \beta_{1,i}(\ThTh_{-i}) &=  \sup_{\|u\|=1}\sum_{j \in [n], j \neq i}  n_{ij}((\thth_j^{(p)} - {\thth_j^{*(p)}} )^{\top}\thth_i^*)^2 |{\thth_j^{(p)}}^{\top}\uu|, \text{ and }\\
        \gamma_{1,i}(\ThTh_{-i})&=\sup_{\|u\|=1}\sum_{j \in [n], j \neq i}  n_{ij}|{\thth_j^{(p)}}^{\top}\uu|^3. 
    \end{align*}
    Then, there is $\breve{\thth}_i$ such that $\|\breve{\thth}_i- \thth_i^* \| \leq \xi$ and $S_{1,i}(\breve{\thth}_i;\ThTh_{-i})=0 $. 
\end{lemma}}
\rev{
\begin{proof}
 Let $\thth$ be a vector such that $\|\thth - \thth^*_i\| \leq \xi$ and let $m_{ij} = \thth^{\top}\thth_j^{(p)}$. Consider the Taylor expansion of  $S_{1,i}(\thth;\ThTh_{-i})$: 
\begin{align*}
    S_{1,i}(\thth;\ThTh_{-i}) =& \sum_{j \in [n], j \neq i}  \omega_{ij}\left\{ y_{ij} - n_{ij}g(m_{ij}^* )   \right\}\thth_j^{(p)} - \sum_{j \in [n], j \neq i}  n_{ij}\left\{g(m_{ij}^* ) - g(m_{ij} )   \right\}\thth_j^{(p)} \\
    =& {\ThTh_{-i}^{(p)}}^{\top} \text{diag}(\Omega_i) \zz_i^{\top} - \sum_{j \in [n], j \neq i}  n_{ij}g^{'}(m_{ij}^{*})(m_{ij} - m_{ij}^*)\thth_j^{(p)} \\
    &- 0.5 \sum_{j \in [n], j \neq i}  n_{ij}g^{''}(\tilde{m}_{ij})(m_{ij} - m_{ij}^*)^2 \thth_j^{(p)}.
\end{align*}
for some $\tilde{m}_{ij}$ between $m_{ij}^*$ and $m_{ij}$. Plugging $m_{ij} - m_{ij}^* = (\thth_j^{(p)})^{\top}(\thth - \thth_i^*) + (\thth_j^{(p)} - {\thth_j^{*(p)}} )^{\top}\thth_i^*$ into the above display, we obtain 
\begin{align*}
     S_{1,i}(\thth;\ThTh_{-i})   =& {\ThTh_{-i}^{(p)}}^{\top} \text{diag}(\Omega_i) \zz_i^{\top} - \sum_{j \in [n], j \neq i}  n_{ij}g^{'}(m_{ij}^{*})\thth_j^{(p)}(\thth_j^{(p)})^{\top}(\thth - \thth_i^*)  \\
    &- \sum_{j \in [n], j \neq i}  n_{ij}g^{'}(m_{ij}^{*})\thth_j^{(p)}(\thth_j^{(p)} - {\thth_j^{*(p)}} )^{\top}\thth_i^* \\
    &- 0.5 \sum_{j \in [n], j \neq i}  n_{ij}g^{''}(\tilde{m}_{ij})(m_{ij} - m_{ij}^*)^2 \thth_j^{(p)}. 
\end{align*}
Multiplying $(\thth - \thth_i^*)^{\top}$ on both sides, we obtain 
\begin{align}\label{eq: theta S expansion}
    &(\thth - \thth_i^*)^{\top}S_{1,i}(\thth;\ThTh_{-i})\nonumber \\
    =& (\thth - \thth_i^*)^{\top}{\ThTh_{-i}^{(p)}}^{\top} \text{diag}(\Omega_i) \zz_i^{\top} - (\thth - \thth_i^*)^{\top}\sum_{j \in [n], j \neq i}  n_{ij}g^{'}(m_{ij}^{*})\thth_j^{(p)}(\thth_j^{(p)})^{\top}(\thth - \thth_i^*)\nonumber \\
    &- (\thth - \thth_i^*)^{\top}\sum_{j \in [n], j \neq i}  n_{ij}g^{'}(m_{ij}^{*})\thth_j^{(p)}(\thth_j^{(p)} - {\thth_j^{*(p)}} )^{\top}\thth_i^* \nonumber\\
    &- 0.5(\thth - \thth_i^*)^{\top} \sum_{j \in [n], j \neq i}  n_{ij}g^{''}(\tilde{m}_{ij})(m_{ij} - m_{ij}^*)^2 \thth_j^{(p)}. 
\end{align}
Recall that $\|\thth - \thth_i^*\| = \xi$. Using inequalities about matrix products and singular values, we have the following upper bounds for the first three terms on the right-hand side of the above display/ 
\begin{align}\label{eq: 1}
    |(\thth - \thth_i^*)^{\top}{\ThTh^{(p)}}^{\top} \text{diag}(\Omega_i) \zz_i^{\top}| \leq |\xi|\|{\ThTh^{(p)}}^{\top} \text{diag}(\Omega_i) \zz_i^{\top}\| = \xi \|\zz_i  \text{diag}(\Omega_i){\ThTh_{-i}^{(p)}} \|,  \\
    - (\thth - \thth_i^*)^{\top}\sum_{j \in [n], j \neq i}  n_{ij}g^{'}(m_{ij}^{*})\thth_j^{(p)}(\thth_j^{(p)})^{\top}(\thth - \thth_i^*) \leq - \xi^2 \sigma_{2K}(\CI_{1,i}(\ThTh_{-i}) ),
    \end{align}
    where $\sigma_{2K}(\CI_{1,i}(\ThTh_{-i}) )$ denotes the $2K$th largest singular value of $\CI_{1,i}(\ThTh_{-i})$, and 
    \begin{align}\label{eq: 2}
        |(\thth - \thth_i^*)^{\top}\sum_{j \in [n], j \neq i}  n_{ij}g^{'}(m_{ij}^{*})\thth_j^{(p)}(\thth_j^{(p)} - {\thth_j^{*(p)}} )^{\top}\thth_i^* |= \|(\thth - \thth_i^*)^{\top}\CB_{1,i}(\ThTh_{-i})\| \leq \xi \|\CB_{1,i}(\ThTh_{-i})\|.
    \end{align}
    Now we analyze the last term in \eqref{eq: theta S expansion}. Note that $|\tilde{m}_{ij}| \leq |m_{ij}^*| \vee |m_{ij}| \leq (C^* + \xi)C^*$ and $m_{ij} - m_{ij}^* = (\thth_j^{(p)})^{\top}(\thth - \thth_i^*) + (\thth_j^{(p)} - {\thth_j^{*(p)}} )^{\top}\thth_i^*$. We have 
    \begin{align*}
        &0.5(\thth - \thth_i^*)^{\top}\sum_{j \in [n], j \neq i}  n_{ij}g^{''}(\tilde{m}_{ij})(m_{ij} - m_{ij}^*)^2 \thth_j^{(p)} \\
        \leq&0.5\alpha_2 \left( (C^* + \xi)C^* \right) \xi \sup_{\|u\|=1}\sum_{j \in [n], j \neq i}  n_{ij}((\thth_j^{(p)} - {\thth_j^{*(p)}} )^{\top}\thth_i^*+ \xi(\thth_j^{(p)})^{\top}u  )^2 |{\thth_j^{(p)}}^{\top}\uu|\\
        \leq &\alpha_2 \left( (C^* + \xi)C^* \right) \bigg\{ \xi \sup_{\|u\|=1}\sum_{j \in [n], j \neq i}  n_{ij}((\thth_j^{(p)} - {\thth_j^{*(p)}} )^{\top}\thth_i^*)^2 |{\thth_j^{(p)}}^{\top}\uu| \\
        &\quad \quad \quad \quad \quad \quad \quad \quad +\xi^3 \sup_{\|u\|=1}\sum_{j \in [n], j \neq i}  n_{ij}|{\thth_j^{(p)}}^{\top}\uu|^3 \bigg\}\\
        =&\alpha_2 \left( (C^* + \xi)C^* \right)(\xi \beta_{1,i}(\ThTh_{-i})+\xi^3 \gamma_{1,i}(\ThTh_{-i})).
    \end{align*}
    Combining the analysis with \eqref{eq: 1} and \eqref{eq: 2}, we obtain 
    \begin{align*}
         (\thth - \thth_i^*)^{\top}S_{1,i}(\thth;\ThTh_{-i}) \leq& -  \sigma_{2K}(\CI_{1,i}(\ThTh_{-i}) )\xi^2 + \gamma_{1,i}(\ThTh_{-i})\alpha_2 \left( (C^* + \xi)C^* \right)\xi^3 \\
         &  +  \{  \|\zz_i  \text{diag}(\Omega_i){\ThTh_{-i}^{(p)}} \| +\|\CB_{1,i}(\ThTh_{-i})\| + \beta_{1,i}(\ThTh_{-i})\alpha_2 \left( (C^* + \xi)C^* \right) \}\xi. 
    \end{align*}
    The rest of the proof follows from the argument in the proof of Lemma 15 in \cite{chen_Li-2024-JoMR}. 
\end{proof}}

    \rev{We now simplify the result of Lemma 15 to a more user-friendly version. 
    \begin{lemma}\label{lm: lemma 16 in chen jmlr}
        Under Assumptions \ref{assp: rank} and \ref{assp: p_n with rank}, if 
        \begin{align*}
            \|\zz_i  \text{diag}(\Omega_i){\ThTh_{-i}^{(p)}} \| +\|\CB_{1,i}(\ThTh_{-i})\| + \beta_{1,i}(\ThTh_{-i})\alpha_2 \left( {3C^*}^3\right) \\
            \leq \min\{ 0.25 \gamma_{1,i}(\ThTh_{-i})^{-1}(\alpha_2 \left( {3C^*}^3\right))^{-1} \sigma^2_{2K}(\CI_{1,i}(\ThTh_{-i}) ), 0.5 \sigma_{2K}(\CI_{1,i}(\ThTh_{-i}) )C^*  \}.
        \end{align*}
        Then, there is $\breve{\thth}_i$ such that $S_{1,i}(\breve{\thth}_i; \ThTh_{-i})=0$, and 
        \begin{align*}
            \|\breve{\thth}_i - \thth_i^*\| \leq 2\sigma_{2K}^{-1}(\CI_{1,i}(\ThTh_{-i}) )\{ \|\zz_i  \text{diag}(\Omega_i){\ThTh_{-i}^{(p)}} \| +\|\CB_{1,i}(\ThTh_{-i})\| + \beta_{1,i}(\ThTh_{-i})\alpha_2 \left( {3C^*}^3\right)\}.
        \end{align*}
    \end{lemma}
    The proof follows from the argument in Lemma 16 of \cite{chen_Li-2024-JoMR}. We omit the details. We now analyze each term in Lemma \ref{lm: lemma 16 in chen jmlr} with $\ThTh_{-i}   = \hat{\ThTh}_{-i}$. Let $p_{\max} = \max_{i \in [n]}\sum_{j \in [n], j \neq i}n_{ij}$ denotes the maximum number of comparisons in each row.}
    \rev{\begin{lemma}\label{lm: Lemma 43 in chen et al JMLR}
        (Upper bound for $\|\zz_i  \text{diag}(\Omega_i) \hat{\ThTh}_{-i}^{(p)} \|$). Under Assumptions \ref{assp: rank} and \ref{assp: p_n with rank}, with probability at least $ 1 - (nK)^{-1}$, 
        \begin{align*}
            \max_{i \in [n]} \|\zz_i  \text{diag}(\Omega_i){\hat{\ThTh}_{-i}^{(p)}}\| \leq 16 C^* \sqrt{2K} \log(nK)p^{1/2}_{\max} + \sqrt{T} p_{\max}^{1/2}\|\hat{\ThTh}^{(p)} -\ThTh^{*(p)} \|_{F}. 
        \end{align*}
    \end{lemma}}
    \rev{
    \begin{proof}
    Note that 
    \begin{align}\label{eq: z omega th}
         \|\zz_i  \text{diag}(\Omega_i){\hat{\ThTh}_{-i}^{(p)}}\| \leq  \|\zz_i  \text{diag}(\Omega_i){\ThTh^{*(p)}_{-i}} \| + \|\zz_i  \text{diag}(\Omega_i)({\hat{\ThTh}_{-i}^{(p)}} -\ThTh^{*(p)}_{-i})\|. 
    \end{align}
    It is easy to see that $|y_{ij} - n_{ij}g(m_{ij}^*)|\leq n_{ij}$ for all $i,j$. Hence we have 
    \begin{align}\label{eq: z omega th minus}
        \|\zz_i  \text{diag}(\Omega_i)({\hat{\ThTh}_{-i}^{(p)}} -\ThTh^{*(p)}_{-i})\|  &= \|\sum_{j \in [n], j \neq i}  (y_{ij} - n_{ij}g(m_{ij}^*))(\hat{\thth}_j - \thth_j^*) \|\nonumber \\
                                                                                       &\leq \sum_{j \in [n], j \neq i}  n_{ij}\|(\hat{\thth}_j - \thth_j^*)  \|\nonumber\\
                                                                                       &\leq \sqrt{\sum_{j \in [n], j \neq i}  n_{ij}^2} \|\hat{\ThTh}_{-i}^{(p)} -\ThTh^{*(p)}_{-i} \|_{F}\nonumber\\
                                                                                       &\leq \sqrt{T} p_{\max}^{1/2}\|\hat{\ThTh}_{-i}^{(p)} -\ThTh^{*(p)}_{-i} \|_{F}\nonumber\\
                                                                                       &\leq \sqrt{T} p_{\max}^{1/2}\|\hat{\ThTh}^{(p)} -\ThTh^{*(p)} \|_{F}.
    \end{align}
    %\ar{Here $\sqrt{T}$ is replacing the role of $\|Z\|_{\infty}$ in the proof of the original lemma 43. }
    Combining \eqref{eq: z omega th} and \eqref{eq: z omega th minus} and taking maximum over $i \in [n]$, we have 
    \begin{align*}
        \max_{i \in [n]}  \|\zz_i  \text{diag}(\Omega_i){\hat{\ThTh}_{-i}^{(p)}}\| \leq  \max_{i \in [n]} \|\zz_i  \text{diag}(\Omega_i){\ThTh^{*(p)}_{-i}} \|  + \sqrt{T} p_{\max}^{1/2}\|\hat{\ThTh}^{(p)} -\ThTh^{*(p)} \|_{F}. 
    \end{align*}
    For the first term, we can show that for each $k \in \{1, \dots, 2K\}$ and positive $t$, since $y_{ij} - n_{ij}g(m_{ij}^*)$ is mean zero given $n_{ij}$, we can apply Bernstein inequality and get 
    \begin{align*}
        &P\bigg(\bigg|\sum_{j \in [n], j \neq i}  (y_{ij} - n_{ij}g(m_{ij}^*))\theta_{jk}^{*(p)})\bigg|\geq t  \mid \{n_{ij}\}_{j \in \in [n], j \neq i}\bigg) \\
        \leq& \exp\left(  -\frac{0.5t^2}{C^*(\sum_{j \in [n], j \neq i} n_{ij}g(m_{ij}^*)(1-g(m_{ij}^*)) + \frac{1}{3}Tt) }  \right)\\
        \leq& \exp\left(  -\frac{0.5t^2}{C^*(\sum_{j \in [n], j \neq i} n_{ij} + Tt) }  \right). 
    \end{align*}
    This implies that 
    \begin{align*}
         &P(\|\zz_i  \text{diag}(\Omega_i){\ThTh^{*(p)}_{-i}} \|\geq t  \mid \{n_{ij}\}_{j \in \in [n], j \neq i}) \\
         \leq& \sum_{k=1}^{2K} P(|\sum_{j \in [n], j \neq i}  (y_{ij} - n_{ij}g(m_{ij}^*))\theta_{jk}^{*(p)})|\geq t/\sqrt{2K}  \mid \{n_{ij}\}_{j \in \in [n], j \neq i}) \\
         \leq& 2K \exp\left(  -\frac{0.25 t^2}{KC^*(\sum_{j \in [n], j \neq i} n_{ij} + Tt/\sqrt{2K}) }  \right).
    \end{align*}
    Combining results for different $i$ with a union bound, we have 
    $$ P(  \max_{i \in [n]} \|\zz_i  \text{diag}(\Omega_i){\ThTh^{*(p)}_{-i}} \|\geq t  \mid \{n_{ij}\}_{j \in \in [n], j \neq i}) \leq 2Kn \exp\left(  -\frac{0.25 t^2}{KC^*(p_{\max} + Tt/\sqrt{2K}) }  \right).$$
    In particular, for $t = 16 C^* \sqrt{2K} \log(nK)p^{1/2}_{\max}  $, we can show that $\max_{i \in [n]} \|\zz_i  \text{diag}(\Omega_i){\ThTh^{*(p)}_{-i}} \| \leq 16 C^* \sqrt{2K} \log(nK)p^{1/2}_{\max}$ 
    with probability at least $1  - (nK)^{-1}$, and the proof is complete.
        \end{proof}}

    \rev{\begin{lemma}\label{lm: Bound of CB1i}
        (Upper bound for $\| \CB_{1,i}(\hat{\ThTh}_{-i})\|$) Under Assumptions \ref{assp: rank} and \ref{assp: p_n with rank},  
        \begin{align*}
           \max_{i \in [n]} \| \CB_{1,i}(\hat{\ThTh}_{-i})\| \leq {C^*}^2 \sqrt{T}p_{\max}^{1/2}\|\hat{\ThTh} - \ThTh^* \|_{F}.
        \end{align*}
    \end{lemma}}
    \rev{
    \begin{proof}
    Recall that  $|g^{'}(x)| = |g(x)(1-g(x))| \leq 1$. Hence we have 
    \begin{align*}
       \| \CB_{1,i}(\hat{\ThTh}_{-i})\|&=\|\sum_{j \in [n], j \neq i}  n_{ij}g^{'}(m_{ij}^*)\hat{\thth}_j^{(p)}(\hat{\thth}_j^{(p)} - {\thth_j^{*(p)}} )^{\top}\thth_i^*\|\\
                                 &\leq {C^*}^2 \sum_{j \in [n], j \neq i} n_{ij}\|\hat{\thth}_j^{(p)} - {\thth_j^{*(p)}}\|\\
                                 &\leq  {C^*}^2 \sqrt{T}p_{\max}^{1/2}\|\hat{\ThTh}_{-i} - \ThTh^*_{-i} \|\\
                                 &\leq {C^*}^2 \sqrt{T}p_{\max}^{1/2}\|\hat{\ThTh} - \ThTh^* \|.
    \end{align*}
    The proof is completed by taking maximum for $i \in [n]$.
       \end{proof}}
\rev{
    \begin{lemma}\label{lm: bound for beta1i}
        (Bound for $\beta_{1,i}(\hat{\ThTh}_{-i})$) Under Assumptions \ref{assp: rank} and \ref{assp: p_n with rank},
        \begin{align*}
            \max_{i \in [n]} \beta_{1,i}(\hat{\ThTh}_{-i}) \leq {C^{*3}} T\|\hat{\ThTh}  - \ThTh^*\|^2_F. 
        \end{align*}
    \end{lemma}
    \begin{proof}
     For each $i \in [n]$
    \begin{align*}
        \beta_{1,i}(\hat{\ThTh}_{-i}) &=  \sup_{\|u\|=1}\sum_{j \in [n], j \neq i}  n_{ij}((\hat{\thth}_j^{(p)} - {\thth_j^{*(p)}} )^{\top}\thth_i^*)^2 |(\hat{\thth}_j^{(p)})^{\top}\uu| \\
        &\leq {C^{*3}} \sup_{\|u\|=1}\sum_{j \in [n], j \neq i}  n_{ij}\|\hat{\thth}_j^{(p)} - {\thth_j^{*(p)}} \|^2\leq {C^{*3}} T\|\hat{\ThTh}  - \ThTh^*\|^2_F
    \end{align*}
    \end{proof}}

\rev{
    \begin{lemma}\label{lm: lemma for pmax}
        (Upper bound for $p_{\max}$). Under Assumptions \ref{assp: rank} and \ref{assp: p_n with rank}, if $n q_n \geq 6\log(n)$, then 
        \begin{align*}
            P(p_{\max} \geq 2Tn q_n  ) \leq 1/n. 
        \end{align*}
    \end{lemma}
    \begin{proof}
    It follows from Lemma 23 in \cite{chen_Li-2024-JoMR} that $P(\max_{i\in [n]}\sum_{j\in [n], j \neq i}\omega_{ij} \geq 2pq_n ) \leq 1/n$, which implies  $P(\max_{i\in [n]}\sum_{j\in [n], j \neq i}T\omega_{ij} \geq 2Tpq_n ) \leq 1/n$. The proof then complete as $P(p_{\max} \geq 2Tpq_n )\leq P(\max_{i\in [n]}\sum_{j\in [n], j \neq i}T\omega_{ij} \geq 2Tpq_n ). $ \end{proof}}
    \rev{
    \begin{lemma}\label{lm: bound for gamma1i}
         (Bound for $\gamma_{1,i}(\hat{\ThTh}_{-i})$). Under Assumptions \ref{assp: rank} and \ref{assp: p_n with rank}, if $n q_n \geq 6\log(n)$, then with probability at least $1  - 1/n$,
         \begin{align*}
             \gamma_{1,i}(\hat{\ThTh}_{-i}) \leq 2Tn q_n {C^{*3}}.
         \end{align*}
    \end{lemma}
    \begin{proof}
     The lemma follow by Lemma \ref{lm: lemma for pmax} and the following inequality
    \begin{align*}
          \gamma_{1,i}(\ThTh_{-i})=\sup_{\|u\|=1}\sum_{j \in [n], j \neq i}  n_{ij}|{\thth_j^{(p)}}^{\top}\uu|^3 \leq {C^{*3}} p_{\max}
    \end{align*}
      \end{proof}
       The next two lemmas give a lower bound for $\sigma_{2K}(\CI_{1,i}(\hat{\ThTh}_{-i}))$. }
    \rev{
    \begin{lemma}\label{lm: lower bound of I final}
        Define $\mu(y) = \inf_{|x| \leq y} g^{'}(x)$. Under Assumptions \ref{assp: rank} and \ref{assp: p_n with rank}, if $\|\text{diag}(\Omega_i)({\hat{\ThTh}_{-i}^{(p)}} - {\ThTh_{-i}^{*(p)}})\|_{2} \leq 2^{-1}\sigma_{2K}( \text{diag}(\Omega_i){\ThTh_{-i}^{*(p)}} )$, then 
        \begin{align*}
            \sigma_{2K}(\CI_{1,i}(\hat{\ThTh}_{-i})) \geq 2^{-2}\mu({C^*}^2)\sigma_{2K}^2( \text{diag}(\Omega_i){\ThTh_{-i}^{(*p)}} ).
        \end{align*}
    \end{lemma}}
    \rev{
    \begin{proof}
        Note that for any$\uu \in \RR^{2K}$ such that $\|\uu\|=1$, 
    \begin{align*}
        \CI_{1,i}(\hat{\ThTh}_{-i}) &= \sum_{j \in [n], j \neq i}  n_{ij}g^{'}(m_{ij}^{*}) (\uu^{\top}\hat{\thth}_j^{(p)})^2\\
                                    &\geq \mu({C^*}^2)\sum_{j \in [n]}  \omega_{ij}(\uu^{\top}\hat{\thth}_j^{(p)})^2\\
                                    &\geq  \mu({C^*}^2) \sigma_{2K}^2( \text{diag}(\Omega_i){\hat{\ThTh}_{-i}^{(p)}} ). 
    \end{align*}
    The rest of the proof follows from the argument in Lemma 25 of \cite{chen_Li-2024-JoMR}. \end{proof}}
        \rev{
    \begin{lemma}\label{lm: lower bound of I}
        Let $P_{1,i} = \text{diag}(\{p_{ij}\}_{j \in [n], j \neq i}) = E(\text{diag}(\Omega_i))$ and $\lambda^*_{i,\min} = \lambda_{2K}((\ThTh_{-i}^*)^{\top}P_{1,i} \ThTh_{-i}^* )$, where $\lambda_{2K}(\cdot)$ denotes the $2K$th largest eigenvalue of a symmetric matrix. If $\lambda^*_{\min} \coloneq \min_{i \in [n]}\lambda^*_{i,\min} \geq 16 \|\ThTh^*\|_{2 \to \infty}^2\log(2Kn)$, then 
        \begin{align*}
            P(\min_{i \in [n]} \sigma_{2K}^2( \text{diag}(\Omega_i){\ThTh_{-i}^{*(p)}} )\leq 0.5 \lambda^*_{\min}) \leq 1/(2nK).
        \end{align*}
        Moreover, if $p_n\sigma_{2K}^2(\ThTh^*) \geq 32 \|\ThTh^*\|^2_{2 \to \infty}\log(n)$, then 
        \begin{align*}
              P(\min_{i \in [n]} \sigma_{2K}^2( \text{diag}(\Omega_i){\ThTh_{-i}^{*(p)}} )\leq 0.5 p_n \sigma_{2K}^2(\ThTh^*) ) \leq 1/(2nK).
        \end{align*}
    \end{lemma}
          }
    \rev{The proof follows from the argument in the proof of Lemma 26 in \cite{chen_Li-2024-JoMR}, noting that $\text{diag}(\Omega_i){\ThTh_{-i}^{*(p)}} = \text{diag}(\bar\Omega_i){\ThTh^{*(p)}}$, where $\bar{\Omega}_i = \text{diag}(\omega_{i1}, \dots, \omega_{in})$. We omit the details. }
    \rev{We now turn to asymptotic analysis of $\tilde{\ThTh}$. 
    \begin{lemma}\label{lm: ThTh hat 2 to inf convergence}
        Under Assumptions \ref{assp: rank} and \ref{assp: p_n with rank}, with probability converging to $1$, there is $\tilde{\ThTh} = (\tilde{\theta}_{ik})_{n \times 2K}$ such that $S_{1,i}(\tilde{\thth}_i, \hat{\ThTh}_{-i})= 0$ for all $i \in [n]$, $\|\tilde{\ThTh} - \ThTh^* \|_{2\to \infty }\leq C^*$, and 
        \begin{align*}
            \|\tilde{\ThTh} - \ThTh^*\|_{ 2\to \infty} \lesssim  n^{1/2}(nq_n)^{1/4} (np_n)^{-1}  = n^{-1/4}q_n^{-3/4}. 
        \end{align*}
        Moreover, $\tilde{\thth}_i$ is the unique solution to the optimization problem $\max_{\thth_i \in \RR^{2K}} l_{i}(\thth_i, \hat{\ThTh}_{-i}) $ for all $ i \in [n]$.
    \end{lemma}}
    \rev{
    \begin{proof}
        Throughout the proof, we restrict the analysis on the event  $\{p_{\max} \leq 2Tnq_n\}$, which has probability converging to $1$ by Lemma \ref{lm: lemma for pmax}. On this event, we have that with probability at least $1- 1/(nK)^{-1}$, 
    \begin{align}\label{eq: max 1}
         \max_{i \in [n]} \|\zz_i  \text{diag}(\Omega_i){\hat{\ThTh}_{-i}^{(p)}}\| \leq 32 C^* K \log(nK)(Tnq_n)^{1/2} + \sqrt{2T} (Tnq_n)^{1/2}\|\hat{\ThTh} - \ThTh^* \|_{F}
    \end{align}
    according to Lemma \ref{lm: Lemma 43 in chen et al JMLR} and the fact that $\|\hat{\ThTh}^{(p)} - \ThTh^{*(p)} \|_{F} = \|\hat{\ThTh} - \ThTh^* \|_{F}$. Next, according to Lemma \ref{lm: Bound of CB1i}, we have 
    \begin{align*}
         \max_{i \in [n]} \| \CB_{1,i}(\hat{\ThTh}_{-i})\| \leq {C^*}^2 \sqrt{T}p_{\max}^{1/2}\|\hat{\ThTh} - \ThTh^* \|_{F}
    \end{align*}
    Thus with probability converging to one, we have 
    \begin{align}\label{eq: max 2}
         \max_{i \in [n]} \| \CB_{1,i}(\hat{\ThTh}_{-i})\| \leq {C^*}^2 \sqrt{T}\sqrt{2Tnq_n}\|\hat{\ThTh} - \ThTh^* \|_{F}.
    \end{align}
    Next, according to Lemma \ref{lm: bound for beta1i}, we have 
    \begin{align}\label{eq: max 3}
          \max_{i \in [n]} \beta_{1,i}(\hat{\ThTh}_{-i}) \leq {C^{*3}} T \|\hat{\ThTh} - \ThTh^* \|_{F}^2.
    \end{align}
    Combining \eqref{eq: Theta F convergence}, \eqref{eq: max 1}, \eqref{eq: max 2} and \eqref{eq: max 3}, we have 
    \begin{align}\label{eq: max z B beta}
        &\max_{i \in [n]} \{\|\zz_i  \text{diag}(\Omega_i){\ThTh_{-i}^{(p)}} \| +\|\CB_{1,i}(\ThTh_{-i})\| + \beta_{1,i}(\ThTh_{-i})\alpha_2 \left( 3{C^{*3}}\right) \}\nonumber\\
        \leq&  32 C^* K \log(nK)(Tnq_n)^{1/2} + \sqrt{2T} (Tnq_n)^{1/2}\|\hat{\ThTh} - \ThTh^* \|_{F} + {C^*}^2 \sqrt{T}\sqrt{2Tnq_n}\|\hat{\ThTh} - \ThTh^* \|_{F} \nonumber\\
        &+ {C^{*3}} T \|\hat{\ThTh} - \ThTh^* \|_{F}^2\nonumber \\
        \lesssim& K\log(nK)(Tnq_n)^{1/2} +  \sqrt{T} (Tnq_n)^{1/2}\sqrt{nC_n}(p_n n)^{-1/4}  + T C_nn(p_n n)^{-1/2}\nonumber\\
        \asymp& (nq_n)^{1/2} + n^{1/2}(nq_n)^{1/4} + n^{1/2}q_n^{-1/2} \nonumber\\
        \leq & n^{1/2}(nq_n)^{1/4}\nonumber\\
        \leq& nq_n
    \end{align}
    for $n$ large enough as $q_n \gtrsim n^{-1/3}\log(n)$ by Assumption \ref{assp: p_n with rank}. 
    Next, we find a lower bound for $\sigma_{2K}(\CI_{1,i}(\hat{\ThTh}_{-i}) )$. Note that $\sigma^2_{2K}(\ThTh^*) \asymp n$ and $\|\ThTh^*\|_{2 \to \infty} \leq C^*$ by Assumption \ref{assp: rank}. Hence we have $p_n\sigma^2_{2K}(\ThTh^*) \geq 32 \|\ThTh^*\|_{2 \to \infty}\log(2Kn)$ for $n$ large enough. According to Lemma \ref{lm: lower bound of I}, with probability at least $1 - 1/(2nK)$, 
    \begin{align*}
        \min_{i \in [n]} \sigma_{2K}^2( \text{diag}(\Omega_i){\ThTh_{-i}^{*(p)}} )\geq 0.5 p_n \sigma_{2K}^2(\ThTh^*).
    \end{align*}
    Also, we can verify that 
    \begin{align*}
       \max_{i \in [n]} \|\text{diag}(\Omega_i)({\hat{\ThTh}_{-i}^{(p)}} - {\ThTh_{-i}^{*(p)}})\|_{2}^2 &\leq \|\hat{\ThTh}^{(p)} - {\ThTh^{*(p)}} \|_{F}^2 \\
       &\lesssim n C_n (p_n n)^{-1/2}\\
       &\leq 0.5\min_{i \in [n]} \sigma_{2K}^2( \text{diag}(\Omega_i){\ThTh_{-i}^{*(p)}} )
    \end{align*}
    under the assumption that $p_n \gtrsim n^{-1/3}\log(n)$. Therefore, by Lemma \ref{lm: lower bound of I final}, with probability converging to $1$, we have 
    \begin{align}\label{eq: sigma I bound}
         \min_{i \in [n]}\sigma_{2K}(\CI_{1,i}(\hat{\ThTh}_{-i})) \geq \delta_9 p_nn,
    \end{align}
    for some constant $\delta_9>0$. Next, we verify conditions of Lemma \ref{lm: lemma 16 in chen jmlr}. According to Lemma \ref{lm: bound for gamma1i}, we have
    \begin{align*}
        \max_{i\in [n]}\gamma_{1,i}(\hat{\ThTh}_{-i}) \leq 2Tn q_n {C^{*3}} \lesssim nq_n
    \end{align*}
    for $n$ large enough. Therefore, with probability tending to $1$, we have 
    \begin{align} \label{eq: gamma lower bound}
        \min_{i\in [n]} \gamma_{1,i}(\ThTh_{-i})^{-1}(\alpha_2 \left( 3{C^{*3}}\right))^{-1} \sigma^2_{2K}(\CI_{1,i}(\ThTh_{-i}) ) \gtrsim (nq_n)^{-1}(p_nn)^2 \asymp nq_n.
    \end{align}
    Therefore, by \eqref{eq: max z B beta} and \eqref{eq: gamma lower bound}, we have
    \begin{align}
        &\max_{i \in [n]} \{\|\zz_i  \text{diag}(\Omega_i){\hat{\ThTh}_{-i}^{(p)}} \| +\|\CB_{1,i}(\hat{\ThTh}_{-i})\| + \beta_{1,i}(\hat{\ThTh}_{-i})\alpha_2 \left( 3{C^{*3}}\right) \} \nonumber \\
        \leq& \min_{i\in [n]} 0.25 \gamma_{1,i}(\hat{\ThTh}_{-i})^{-1}(\alpha_2 \left( 3{C^{*3}}\right))^{-1} \sigma^2_{2K}(\CI_{1,i}(\hat{\ThTh}_{-i}) ) 
    \end{align}
    with probability converging to $1$. Similarly, according to \ref{eq: sigma I bound}, we have 
       \begin{align}
        &\max_{i \in [n]} \{\|\zz_i  \text{diag}(\Omega_i){\hat{\ThTh}_{-i}^{(p)}} \| +\|\CB_{1,i}(\hat{\ThTh}_{-i})\| + \beta_{1,i}(\hat{\ThTh}_{-i})\alpha_2 \left( 3{C^{*3}}\right) \}\nonumber  \\
        \leq&   0.5 \sigma_{2K}(\CI_{1,i}(\hat{\ThTh}_{-i}) )C^*
    \end{align}
with probability converging to $1$. Thus, conditions of Lemma \ref{lm: lemma 16 in chen jmlr} are satisfied. By \eqref{eq: max z B beta} and \eqref{eq: sigma I bound} we have $\|\tilde{\ThTh} - \ThTh^*\|_{ 2\to \infty}\leq C^*$ and 
\begin{align*}
    \|\tilde{\ThTh} - \ThTh^*\|_{ 2\to \infty} \lesssim  n^{1/2}(nq_n)^{1/4} (np_n)^{-1}  = n^{-1/4}p_n^{-3/4}. 
\end{align*}
Moreover, from \eqref{eq: sigma I bound}, the optimization problem $\max_{\thth_i \in \RR^{2K}} \sum_{j \in [n], j \neq i} \{ y_{ij}\log( g( \thth^{\top}_i \hat{\thth}_j^{(p)} )) + (n_{ij}-y_{ij})\log( 1 - g( \thth^{\top}_i \hat{\thth}_j^{(p)})) \}$ is strictly convex. Thus, $\tilde{\thth}_i$ is the unique solution to this optimization problem, and the proof is complete. \end{proof}

With the above lemma, we have shown that \eqref{eq: tilde Th convergence} in Theorem \ref{thm: 2 to infty} holds. It only remains to prove that \eqref{eq: tilde M convergence} holds. Note that $\tilde{M} - M^* = \tilde{\ThTh}J_K \tilde{\ThTh}^{\top} - \ThTh^* J_K {\ThTh^*}^{\top} = (\tilde{\ThTh} - \ThTh^*)J_K\tilde{\ThTh}^{\top} + \ThTh^* J_K(\tilde{\ThTh} - \ThTh^*)^{\top}$. Hence, 
\begin{align*}
     \|\tilde{M} - M^* \|_{\infty} \leq 2\|\tilde{\ThTh} - \ThTh^* \|_{2 \to \infty} \|J_K\tilde{\ThTh}^{\top}\|_{2 \to \infty} .
\end{align*}
Therefore, by \eqref{eq: tilde Th convergence} and $\|J_K\tilde{\ThTh}^{\top}\|_{2 \to \infty} \leq C^* $, we have 
\begin{align*}
     \|\tilde{M} - M^* \|_{\infty}  \lesssim n^{-1/4}p_n^{-3/4}
\end{align*}
with probability converging to $1$, and the proof of Theorem \ref{thm: 2 to infty} is complete.
 }

\subsection{\rev{Proof of Theorem \ref{thm: top k recovery}}}\label{app: proves of thm: top k recovery}
\rev{
\begin{proof}
For ease of presentation, assume $\CS = [n]$ and that the players are ordered such that
$r_1(\Pi^*) \ge r_2(\Pi^*) \ge \cdots \ge r_n(\Pi^*)$ without loss of generality. We write $\Delta_k$ for $\Delta_{k,\CS}$ and $r_i(\Pi)$ for $r_{i,\CS}(\Pi)$. It suffices to show that
\[
\min_{i \in [k]} r_i(\tilde{\Pi})
>
\max_{l \in \{k+1, \dots, n\}} r_l(\tilde{\Pi})
\]
with probability converging to $1$. For any $i \in [n]$, we have
\begin{align*}
    |r_i(\tilde{\Pi}) - r_i(\Pi^*)|
    = \left| \frac{1}{n} \sum_{j=1}^n \big( \tilde{\Pi}_{ij} - \Pi^*_{ij} \big) \right| 
    \le \|\tilde{\Pi} - \Pi^*\|_{\max}.
\end{align*}
Hence
\[
\max_{i \in [n]} |r_i(\tilde{\Pi}) - r_i(\Pi^*)|
\le
\|\tilde{\Pi} - \Pi^*\|_{\max}.
\]

Consequently, for $i \in [k]$ and $l \in \{k+1, \dots, n\}$,
\[
r_i(\tilde{\Pi})
\ge
r_i(\Pi^*) - \|\tilde{\Pi} - \Pi^*\|_{\max},
\qquad
r_l(\tilde{\Pi})
\le
r_l(\Pi^*) + \|\tilde{\Pi} - \Pi^*\|_{\max}.
\]

Recall that by \eqref{eq: Pi max norm convergence}, we have
\[
\|\tilde{\Pi} - \Pi^*\|_{\infty}
\le
\kappa_8 n^{-1/4} p_n^{-3/4}
\]
with probability converging to $1$. On this event, since
$
\Delta_k
>
2 \kappa_8 n^{-1/4} p_n^{-3/4},
$ by assumption, we have 
then
$
\Delta_k
>
2 \|\tilde{\Pi} - \Pi^*\|_{\max}.
$

Using $r_k(\Pi^*) = r_{k+1}(\Pi^*) + \Delta_k$, we obtain
\begin{align*}
\min_{i \in [k]} r_i(\tilde{\Pi})
&\ge
r_k(\Pi^*) - \|\tilde{\Pi} - \Pi^*\|_{\max} \\
&=
r_{k+1}(\Pi^*) + \Delta_k - \|\tilde{\Pi} - \Pi^*\|_{\max} \\
&>
r_{k+1}(\Pi^*) + \|\tilde{\Pi} - \Pi^*\|_{\max} \\
&=
\max_{l \in \{k+1, \dots, n\}}
\big( r_l(\Pi^*) + \|\tilde{\Pi} - \Pi^*\|_{\max} \big) \\
&\ge
\max_{l \in \{k+1, \dots, n\}} r_l(\tilde{\Pi}).
\end{align*}

Therefore, the top-$k$ set based on $\tilde{\Pi}$ coincides with that based on $\Pi^*$ with probability converging to $1$, which completes the proof.\end{proof}}

\subsection{\rev{Proof of Proposition \ref{prop: skew preserving}}}\label{app: proof of proposition}
\begin{proof}
    We consider the case where $n$ is even; the proof for odd $n$ is analogous. It is well known that $M$ can be decomposed in the Murnaghan canonical form $M = Q X Q ^{\top}$ \citep{murnaghan1931canonical,benner_etal-2000}, where $Q$ is orthogonal and $X$ is block-diagonal of the form 
\begin{align*}
X =  
\begin{pmatrix}
       0 & \sigma_1 & 0 & 0 & \dots & 0 & 0 \\
    -\sigma_1 & 0 & 0 & 0 & \dots & 0 & 0 \\
    0 & 0 & 0 & \sigma_2 & \dots & 0 & 0 \\
    0 & 0 & -\sigma_2 & 0 & \dots & 0 & 0 \\
    \vdots & \vdots & \vdots & \vdots & \ddots & \vdots & \vdots \\
    0 & 0 & 0 & 0 & \dots & 0 & \sigma_{n/2} \\
    0 & 0 & 0 & 0 & \dots & -\sigma_{n/2} & 0 
\end{pmatrix},
\end{align*}
where $\sigma_1, \dots, \sigma_{n/2}$ are the singular values of $M$. It can be verified that the projection operator $P_{\tau}(M)$ preserves the Murnaghan canonical form as $\PP_{\tau}(M) = Q YQ^{\top}$, where 
\begin{align*}
Y =  
\begin{pmatrix}
    0 & \max\{\sigma_1 - \lambda, 0\} & 0 & \dots & 0 \\
    -\max\{\sigma_1 - \lambda, 0\} & 0 & 0 & \dots & 0 \\
    0 & 0 & \ddots & \dots & 0 \\
    \vdots & \vdots & \vdots & 0& \max\{\sigma_{n/2} - \lambda, 0\} \\
    0 & 0 & 0 & -\max\{\sigma_{n/2} - \lambda, 0\} & 0
\end{pmatrix}.
\end{align*}
Hence we have $P_{\tau}(M) \in \text{Skew}_n$. 
\end{proof}

\section{\rev{Additional Simulation Results}}\label{app: Additional Simulation Results}
\rev{In this section, we present additional simulation results, including the computation time and out-of-sample predictive performances.}

\rev{The computational time of the proposed method and the BT model is summarized in Table~\ref{tab: computational time}. As expected, the BT model is consistently faster than the proposed method due to its simpler structure. While the computational time increases with $n$ for both methods, it is also notable that the running time of the proposed method increases with the sparsity level and the latent dimension. This reflects an increase in computational complexity of the estimation problem when the underlying structure becomes more difficult to recover.}

\begin{table}[]
\centering
    \small
    \rev{
\begin{tabular}{llllllll}
\toprule
   &      & \multicolumn{2}{c}{Sparse} & \multicolumn{2}{c}{Less Sparse} & \multicolumn{2}{c}{Dense} \\
\midrule
$k$  & $n$    & proposed       & BT        & proposed         & BT           & proposed      & BT        \\
\midrule
1  & 500  & 16.63          & 0.08      & 11.81            & 0.08         & 14.61         & 0.14      \\
2  & 500  & 25.15          & 0.05      & 16.81            & 0.08         & 13.97         & 0.14      \\
3  & 500  & 35.34          & 0.05      & 23.05            & 0.08         & 16.21         & 0.14      \\
4  & 500  & 48.52          & 0.05      & 24.92            & 0.08         & 16.61         & 0.13      \\
5  & 500  & 66.45          & 0.05      & 30.14            & 0.08         & 24.84         & 0.13      \\
6  & 500  & 96.33          & 0.05      & 36.54            & 0.08         & 17.38         & 0.13      \\
7  & 500  & 136.75         & 0.05      & 44.27            & 0.08         & 20.43         & 0.13      \\
8  & 500  & 158.26         & 0.05      & 55.45            & 0.08         & 30.61         & 0.13      \\
9  & 500  & 176.87         & 0.05      & 68.04            & 0.08         & 32.31         & 0.13      \\
10 & 500  & 201.67         & 0.05      & 83.15            & 0.08         & 33.41         & 0.12      \\
\hline
1  & 1000 & 125.86         & 0.22      & 76.32            & 0.25         & 86.25         & 0.69      \\
2  & 1000 & 195.66         & 0.16      & 119.11           & 0.26         & 106.37        & 0.61      \\
3  & 1000 & 287.85         & 0.16      & 141.20           & 0.26         & 121.19        & 0.60      \\
4  & 1000 & 337.10         & 0.16      & 172.06           & 0.25         & 129.95        & 0.58      \\
5  & 1000 & 497.85         & 0.16      & 218.85           & 0.26         & 123.24        & 0.60      \\
6  & 1000 & 717.48         & 0.16      & 248.50           & 0.26         & 179.03        & 0.57      \\
7  & 1000 & 954.86         & 0.16      & 256.35           & 0.25         & 180.23        & 0.60      \\
8  & 1000 & 1199.15        & 0.16      & 325.61           & 0.25         & 176.08        & 0.56      \\
9  & 1000 & 1477.26        & 0.15      & 405.13           & 0.26         & 204.43        & 0.57      \\
10 & 1000 & 1666.11        & 0.15      & 488.08           & 0.25         & 212.17        & 0.56      \\
\hline
1  & 1500 & 439.61         & 0.45      & 236.00           & 0.68         & 354.05        & 1.52      \\
2  & 1500 & 658.29         & 0.40      & 368.61           & 0.67         & 391.12        & 1.55      \\
3  & 1500 & 893.69         & 0.41      & 471.67           & 0.62         & 412.78        & 1.42      \\
4  & 1500 & 1156.06        & 0.38      & 433.54           & 0.70         & 380.19        & 1.39      \\
5  & 1500 & 1710.13        & 0.42      & 673.19           & 0.60         & 401.08        & 1.39      \\
6  & 1500 & 2321.63        & 0.41      & 708.65           & 0.63         & 437.05        & 1.38      \\
7  & 1500 & 3071.95        & 0.38      & 833.68           & 0.67         & 563.98        & 1.38      \\
8  & 1500 & 3835.15        & 0.40      & 1022.79          & 0.62         & 661.77        & 1.35      \\
9  & 1500 & 5032.98        & 0.41      & 1132.81          & 0.65         & 270.44        & 1.41      \\
10 & 1500 & 5317.70        & 0.40      & 1388.54          & 0.64         & 709.70        & 1.36      \\
\hline
1  & 2000 & 1054.74        & 0.86      & 548.95           & 1.09         & 725.27        & 2.89      \\
2  & 2000 & 1383.09        & 0.83      & 874.44           & 1.13         & 885.54        & 3.09      \\
3  & 2000 & 2046.04        & 0.82      & 1077.11          & 1.13         & 383.48        & 2.86      \\
4  & 2000 & 2663.27        & 0.80      & 806.64           & 1.12         & 939.02        & 2.76      \\
5  & 2000 & 4020.82        & 0.82      & 1455.05          & 1.13         & 934.34        & 2.80      \\
6  & 2000 & 5335.95        & 0.83      & 1569.47          & 1.12         & 990.44        & 2.77      \\
7  & 2000 & 7366.32        & 0.82      & 1957.48          & 1.11         & 1342.22       & 2.76      \\
8  & 2000 & 10153.55       & 0.83      & 2086.00          & 1.14         & 1437.54       & 2.68      \\
9  & 2000 & 12119.82       & 0.82      & 2410.81          & 1.11         & 1109.65       & 2.69      \\
10 & 2000 & 14110.18       & 0.86      & 2873.36          & 1.10         & 1335.29       & 2.65     \\
\bottomrule
\end{tabular}
}
\label{tab: computational time}
    \caption{\rev{Total computational time (in minutes) for the proposed method and the Bradley--Terry (BT) model across different sparsity levels (sparse, less sparse, dense), sample sizes ($n=500,1000,1500,2000$), and rank parameters $k=1,\ldots,10$. Each value represents the total time required to complete 50 simulation runs.}}
\end{table}

\rev{We also present results evaluating the out-of-sample predictive performances. Specifically, for each simulation replicate, after generating the training data and estimating the model parameters, we independently generate a test set from the same underlying comparison probability matrix $\Pi^*$. To mimic a realistic data-analysis scenario, we control the size of the test set to be approximately 25\% of the training data.}

\rev{In particular, for each pair $(i,j)$, we independently generate a potential number of test comparisons $\tilde{n}_{ij}$ using the same sampling mechanism as in the training phase. We then obtain the actual number of comparisons by generating $n^{\text{(test)}}_{ij}$ following $ \mathrm{Bin}(\tilde{n}_{ij}, 0.25)$. The comparison outcomes are then independently generated according to the true probability $\pi^*_{ij}$ conditional on $n^{\text{test}}_{ij}$. We let $Y^{\text{(test)}} = ( y_{ij}^{(\txt)})_{n \times n}$ denote the $n \times n $ matrix collecting all test comparison outcomes.}

\rev{We compute the normalized predictive likelihood given by 
\begin{align}\label{eq: lik loss}
    L(Y^{(test)} \mid \hat{\Pi}) =  \frac{1}{ \sum_{i=1}^{n} \sum_{j > i} n_{ij} } \sum_{i=1}^{n} \sum_{j > i} \left( y_{ij}^{(\txt)} \log( \hat{\pi}_{ij} ) + y_{ji}^{(\txt)} \log( 1- \hat{\pi}_{ij} ) \right),  
\end{align}
and the average predictive likelihood across 50 simulations for each model is reported in Figure \ref{fig:lik comparison}, considering different values of $n$, $k$, and sparsity levels.}

\begin{figure}[t]
    \centering
    \includegraphics[width=\linewidth]{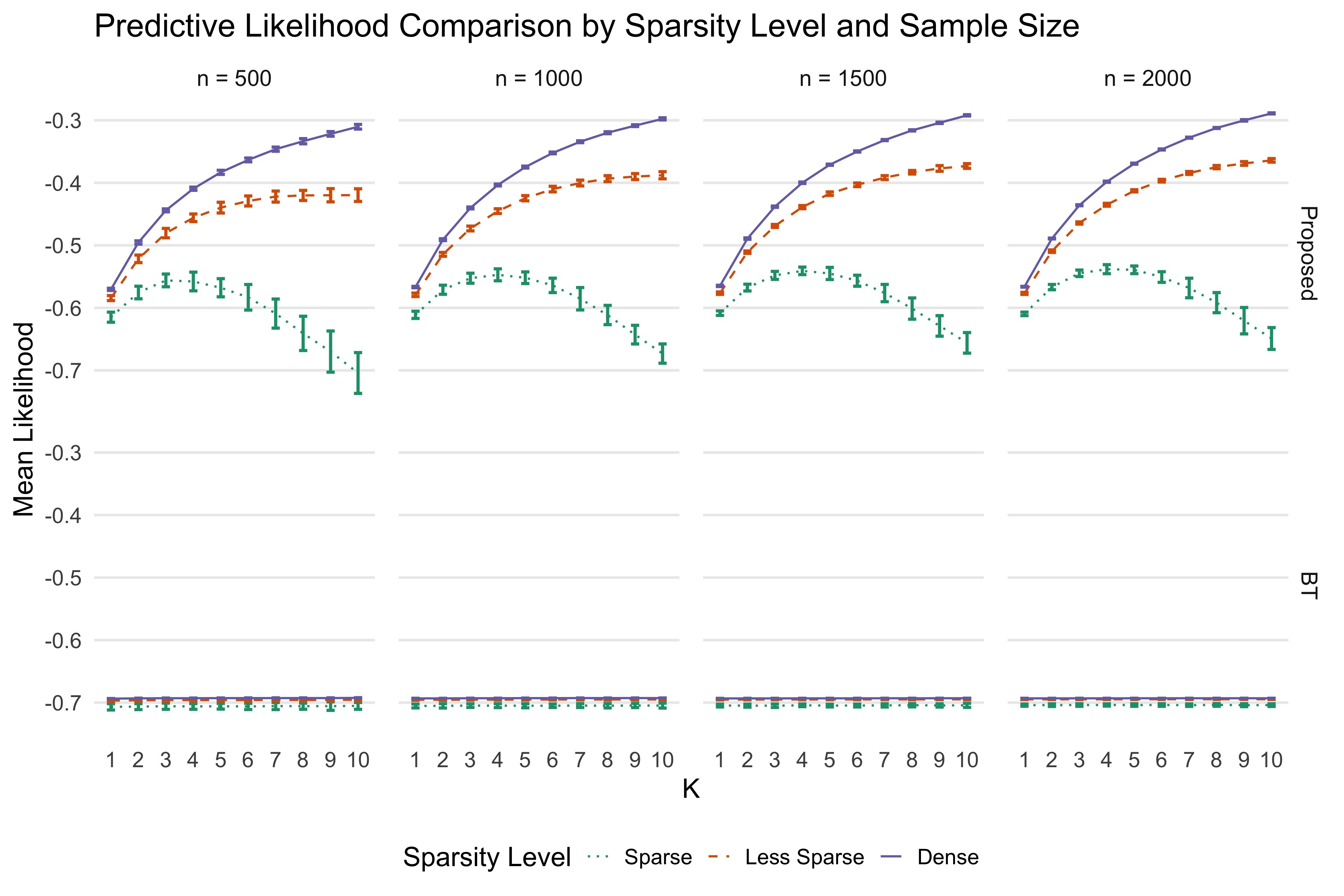}
    \caption{\rev{Comparison of predictive likelihood between the proposed method and the Bradley-Terry (BT) model across different sparsity levels (sparse, less sparse, dense). Each row corresponds to a method (top: proposed; bottom: BT), and each column corresponds to a sample size ($n = 500, 1000, 1500, 2000$). The x-axis represents the rank parameter \( k \), while the y-axis shows the mean likelihood, computed as the average of the likelihood defined in \eqref{eq: lik loss}.Error bars indicate $\pm 2$ standard deviations across replicates.}}
    \label{fig:lik comparison}
\end{figure}

\bibliography{reference} 
\end{document}